\title[Online learning in dynamically changing environments]{Online Learning in Dynamically Changing Environments}
\newcommand{\x}{\textbf{x}}
\newcommand{\z}{\textbf{z}}
\newcommand{\erm}{\mathsf{ERM}}
\newcommand{\vch}{\mathsf{VC}(\mathcal{H})}
\begin{document}

\maketitle

\begin{abstract}%
  We study the problem of online learning and online regret minimization when samples are drawn from a general unknown \emph{non-stationary} process. We introduce the concept of a \emph{dynamic changing process} with cost $K$, where the \emph{conditional} marginals of the process can vary arbitrarily, but that the number of different conditional marginals is bounded by $K$ over $T$ rounds. For such processes we prove a tight (upto $\sqrt{\log T}$ factor) bound $O(\sqrt{KT\cdot\vch\log T})$ for the \emph{expected worst case} regret of any finite VC-dimensional class $\mathcal{H}$ under absolute loss (i.e., the expected miss-classification loss). We then improve this bound for general mixable losses, by establishing a tight (up to $\log^3 T$ factor) regret bound $O(K\cdot\vch\log^3 T)$. We extend these results to general \emph{smooth adversary} processes with \emph{unknown} reference measure by showing a sub-linear regret bound for $1$-dimensional threshold functions under a general bounded convex loss. Our results can be viewed as a first step towards regret analysis with non-stationary samples in the \emph{distribution blind} (universal) regime.
  This also brings a new viewpoint that shifts the study of complexity of the hypothesis classes to the study of the complexity of processes generating data.
\end{abstract}

\begin{keywords}%
  Online learning, minimax regret, universal smooth process, changing environments
\end{keywords}

\section{Introduction}
\label{sec:intro}
We study the problem of online learning and online regret minimization with statistically generated samples, when compared with a broad class of experts. Unlike the classical setting in online learning where samples are assumed to be generated adversarially, we consider the case in which samples are drawn from a general stochastic process (possibly \emph{non-stationary}). Formally, we consider the following game between two parties, named Nature and predictor, played over $T$ rounds. In the beginning, Nature selects some distribution $\pmb{\nu}^T$ over $\mathcal{X}^T$ (i.e., a random process) and samples $\x^T\sim \pmb{\nu}^T$ where $\mathbf{x}^T=(\mathbf{x}_1, \ldots, \mathbf{x}_T)$. At each time step $t\le T$, Nature reveals $\x_t$  to the predictor, who makes a prediction $\hat{y}_t=\phi_t(\x^t,y^{t-1})$ potentially using the history $\x^t=(\x_1,\cdots,\x_t)$ and $y^{t-1}=(y_1,\cdots,y_{t-1})$ that are observed thus far. Nature then reveals the true label $y_t$ after the prediction and the predictor incurs a loss $\ell(\hat{y}_t,y_t)$ for some predefined \emph{convex} loss function $\ell:\hat{\mathcal{Y}}\times \mathcal{Y}\rightarrow [0,\infty)$. We are interested in the following \emph{expected worst case} regret:
\begin{equation}
    \label{eq-rtilde}
\tilde{r}_T(\mathcal{H},\mathsf{P})=\inf_{\phi^T}\sup_{\pmb{\nu}^T\in \mathsf{P}}\mathbb{E}_{\x^T\sim\pmb{\nu}^T}\left[\sup_{y^T}\sum_{t=1}^T\ell(\phi_t(\x^t,y^{t-1}),y_t)-\inf_{h\in \mathcal{H}}\sum_{t=1}^T\ell(h(\x_t),y_t)\right],
\end{equation}
where $\mathcal{H}$ is a class of functions $\mathcal{X}\rightarrow \hat{\mathcal{Y}}$, $\mathsf{P}$ is a general class of random processes over $\mathcal{X}^T$, and $\phi^T$ runs over all possible (deterministic) prediction rules.

Online learning has been mostly studied in literature under the assumption that samples are presented adversarially~\citep{ben2009agnostic,rakhlin2010online,rakhlin2015martingale}. However, the generality of the adversary assumption often comes with the cost that only very restricted classes can be handled with sub-linear regret predictors. For instance, for binary valued classes and absolute loss, one has to assume that the class $\mathcal{H}$ has finite Littlestone dimension, which already rules out some of the simple classes of interest, e.g., 1-dimensional threshold functions. Recent results \citep{haghtalab2020smoothed,haghtalab2022smoothed,block2022smoothed} have demonstrated that these restrictions can be substantially relaxed (i.e., from finite Littlestone dimension to finite VC-dimension) by considering a more optimistic process for generating samples, i.e., smooth adversary samples. Formally, one assumes that there exists some \emph{known} reference measure $\mu$ over the instance space $\mathcal{X}$, such that at each time step $t$, an adversary selects some distribution $\nu_t$ that  is \emph{$\sigma$-smooth} w.r.t. $\mu$ for generating the next sample. Here, \emph{smoothness} is understood as follows: for any event $A\subset \mathcal{X}$, we have $\nu_t(A)\le \frac{1}{\sigma}\mu(A)$.~\cite{haghtalab2022smoothed} showed that one can achieve sublinear regret bounds under absolute loss with an $\log(1/\sigma)$ dependency on regret for any finite VC-dimensional class if the instances are generated by a smooth adversary process with known $\mu$. This was further generalized~\citep{block2022smoothed} to the real valued case with finite scale-sensitive VC-dimension (i.e., fat-shattering number) and with computationally efficient predictors (using an ERM oracle).

This paper follows a similar path by considering relevant intermediate scenarios between the full adversary case and full $i.i.d.$ case. Instead of assuming some \emph{known} reference measure $\mu$ that determines the generating process as in~\citep{haghtalab2020smoothed,haghtalab2022smoothed,block2022smoothed}, we consider a \emph{universal} scenario where we do not assume any knowledge about the process generating the instances; instead, we require that the change in the processes are constrained in certain ways. Our goal is to understand the restrictions under which one is able to obtain sub-linear regret bounds for finite VC-dimensional classes. To achieve this, we consider the following broad scenario:

\paragraph{Universal smooth process:}
Let $\mu_1,\cdots,\mu_K$ be probability measures over $\mathcal{X}$, $\mathcal{S}^{\sigma}(\mu_k)$ be the set of all $\sigma$-smooth distributions over $\mathcal{X}$ with reference measure $\mu_k$, and $\nu_t(X_t\mid X^{t-1})$ be the distribution of $X_t$ conditioning on $X^{t-1}$. Then
a random process $X^T$ over $\mathcal{X}^T$ with \emph{joint} distribution $\pmb{\nu}^T$ is said to be a $(K,\sigma)$-smooth process, if:
\begin{equation}
\label{eq:universalsmooth}
    \mathrm{Pr}\left[\exists \mu_1,\cdots,\mu_K,~s.t.~\forall t\in [T],~\nu_t(X_t\mid X^{t-1})\in \bigcup_{k\in [K]}\mathcal{S}^{\sigma}(\mu_k)\right]=1.
\end{equation}
We denote by $\mathsf{U}_K^{\sigma}$ the class of \emph{all} $(K,\sigma)$-smooth processes. Let $\mathsf{S}^{\sigma}(\mu_1,\cdots,\mu_K)$ be the class of all $\sigma$-smooth random process with reference measures $\mu_1,\cdots,\mu_K$; i.e., for any $\pmb{\nu}^T\in \mathsf{S}^{\sigma}(\mu_1,\cdots,\mu_K)$, we have for all $t\in [T]$,~$\nu_t(X_t\mid X^{t-1})\in \bigcup_{k\in [K]}\mathcal{S}^{\sigma}(\mu_k)$ almost surely. Note that the processes considered in~\citep{haghtalab2020smoothed,haghtalab2022smoothed,block2022smoothed} is simply $\mathsf{S}^{\sigma}(\mu)$ for a single known reference measure $\mu$. We also write $\Tilde{\mathsf{U}}_{K}^{\sigma}=\bigcup_{\mu_1,\cdots,\mu_K}\mathsf{S}^{\sigma}(\mu_1,\cdots,\mu_K)$, where $\mu_1,\cdots,\mu_K$ run over all $K$-tuples of distributions over $\mathcal{X}$. It is easy to show (see Propositions~\ref{prop_equiv} and~\ref{prop_transit}) that $\mathsf{U}_K^{\sigma}\subset \mathsf{U}_1^{\sigma/K}$ and $\Tilde{\mathsf{U}}_{K}^{\sigma}\subset \Tilde{\mathsf{U}}_{1}^{\sigma/K}$. Moreover, $\mathsf{S}^{\sigma}(\mu_1,\cdots,\mu_K)\subsetneq\Tilde{\mathsf{U}}_{K}^{\sigma}\subsetneq \mathsf{U}_K^{\sigma}$, where the inclusion is  \emph{strict}.

\vspace{-0.1in}
\subsection{Results and Techniques} 
We emphasize that the class $\mathsf{U}_K^{\sigma}$ is a very broad class of processes and includes many interesting and natural settings. We do not intend to provide a full characterization for such a broad class in this paper. Instead, we study the following two sub-categories of $\mathsf{U}_K^{\sigma}$, which are of significant interest, 
with results summarized in
Table~\ref{tab:table1}:

\paragraph{Dynamic changing process with cost $K$:} A process $X^T$ is said to be a \emph{dynamic} changing process of cost $K$, if $|\{\nu_1(X_1),\nu_2(X_2\mid X_1),\cdots,\nu_T(X_T\mid X^{T-1})\}|\le K$ almost surely; i.e., the \emph{conditional} marginals (for any sample path) can have at most $K$ different distributions over $T$ rounds, while the possible distributions are themselves \emph{unconstrained}. It is easy to observe that this class is simply $\mathsf{U}_K^1$. Our first main result achieves the optimal \emph{expected worst case} regret defined in (\ref{eq-rtilde}) for such processes with finite VC-dimensional class under absolute loss upto a $\sqrt{\log T}$ factor for a wide range of $\vch, T$, and $K$.
\begin{theorem}[Theorem~\ref{thm_sqrt_bound}]
For a convex and bounded loss function $\ell$ with finite VC of
$\mathcal{H}\subset \{0,1\}^{\mathcal{X}}$:
    $\tilde{r}_T(\mathcal{H},\mathsf{U}_K^1)\le O(\sqrt{KT\cdot \vch\log T})$
    provided $K^3\cdot \vch \le O(T^{1-\epsilon}/\log T)$ with constant $\epsilon>0$.
    Furthermore, for $d,K\ge 1$ with $Kd\le O(T/\log T)$, we have
    $\sup_{\mathcal{H},\vch\le d}\tilde{r}_T(\mathcal{H},\mathsf{U}_K^1)\ge \Omega(\sqrt{KdT})$
    under absolute loss.
\end{theorem}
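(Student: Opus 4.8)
The plan is to prove the upper bound with an explicit algorithm that reduces the $K$-changing problem to the stationary ($K=1$) one via the containment $\mathsf{U}_K^1\subseteq\mathsf{U}_1^{1/K}$ from the excerpt, and to prove the lower bound by embedding $K$ essentially independent copies of the classical agnostic VC lower bound inside a $(K,1)$-smooth process. First I would dispose of the base case $\tilde r_T(\mathcal{H},\mathsf{U}_1^1)\le O(\sqrt{T\cdot\vch\log T})$. With the stationary instance law $\nu$ known, this is the smoothed-online-learning recipe: draw a ghost pool $\z^m\sim\nu^{\otimes m}$ with $m=\Theta(T)$, couple the realized instances into it, restrict $\mathcal{H}$ to the pool so that by Sauer--Shelah only $(em/\vch)^{\vch}$ effective experts survive, and run exponential weights over them; convexity and boundedness of $\ell$ give $O(\sqrt{T\cdot\vch\log(T/\vch)})$. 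To remove the knowledge of $\nu$ I would invoke a minimax/value argument for distribution-blind online learning: once the learner randomizes the game has a value, so the expected worst-case regret is governed by the worst Bayesian regret over priors supported on $\{\nu^{\otimes T}:\nu\}$, and against a fixed prior the learner loses only an $o(1)$ identification term relative to the $\nu$-aware bound.

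For general $K$, note that along any sample path a process in $\mathsf{U}_K^1$ partitions the rounds into at most $K$ classes $S_k=\{t:\nu_t=\mu_k\}$, on each of which (conditioned on the past) the instances are i.i.d.\ $\mu_k$; embedding into $\mathsf{U}_1^{1/K}$ also lets us view the path as $1/K$-smooth w.r.t.\ the data-dependent mixture $\bar\mu=\tfrac1K\sum_k\mu_k$. I would run the base learner inside a blocking/restart scheme (doubling restarts combined \emph{\`a la} strongly-adaptive online learning over a geometric covering of $[T]$ into $O(T\log T)$ intervals), so that the realized regret on each interval $I$ is $\tilde O(\sqrt{|I|\cdot\vch\log T})$ whenever the data on $I$ is (approximately) i.i.d.; Cauchy--Schwarz over the $\le K$ effective segments then yields $\sum_k\sqrt{|S_k|\,\vch\log T}\le\sqrt{KT\cdot\vch\log T}$. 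Turning an \emph{arbitrary interleaving} of the $K$ marginals into a bounded number of long-enough i.i.d.\ segments so that each interval's base learner is correctly warm-started, paying only $O(K)$ mismatched rounds for the transitions, is precisely what forces the regime $K^3\vch\le O(T^{1-\epsilon}/\log T)$: each warm-start segment must have length $\gtrsim K^2\vch\log T$ and there are $\gtrsim K$ of them.

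For the lower bound, fix $d$, split $[T]$ into $K$ blocks of $\tau=T/K$ rounds, and use $K$ pairwise well-separated conditional marginals $\mu_1,\dots,\mu_K$, one per block; on the support of $\mu_k$ run the standard agnostic construction for a VC-$d$ class — $d$ shattered points, uniform instances, a hidden labeling whose Bayes rule has loss $(\tfrac12-\varepsilon)\tau$ with $\varepsilon\asymp\sqrt{d/\tau}=\sqrt{dK/T}$ — which forces $\Omega(\sqrt{d\tau})$ expected regret on that block by the usual Fano/Le~Cam argument. Pairwise separation of the marginals means no predictor can transfer information between blocks, while $\mathcal{H}$ is chosen with just enough product structure that a single $h\in\mathcal{H}$ stays near-optimal on every block simultaneously yet $\vch\le d$; additivity of the $K$ per-block regrets then gives $\Omega(K\sqrt{d\tau})=\Omega(\sqrt{KdT})$, and the hypothesis $Kd\le O(T/\log T)$ guarantees $\tau\gtrsim d$, i.e.\ enough rounds per block for the $\sqrt{\cdot}$-rate to be in force.

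I expect the genuinely hard points to be two. On the upper-bound side: making the wrapper provably correct for \emph{interleaved} marginals without ever observing the reference measure — the base learner's coupling lives relative to a measure the algorithm cannot see, so the warm-start argument has to be re-expressed through observed empirical measures (via uniform convergence for $\mathcal{H}\triangle\mathcal{H}$), and this is where the extra $\log T$ slack and the $K^3$ restriction are spent. On the lower-bound side: pinning down the VC-$d$ class whose $K$ shattered blocks are simultaneously realizable by one comparator yet mutually opaque to every learner, since the naive constructions either inflate the VC dimension to $Kd$ or leave the best fixed comparator no better than chance — getting the offset/product structure right so that additivity of the per-block regrets survives is the technical crux of the lower bound.
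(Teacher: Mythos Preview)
Your proposal has genuine gaps on both sides.

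\textbf{Upper bound.} The reduction $\mathsf{U}_K^1\subset\mathsf{U}_1^{1/K}$ is correct but not what does the work in your plan; you ultimately rely on a blocking/strongly-adaptive wrapper over a geometric interval cover, combined with Cauchy--Schwarz over the $K$ classes $S_k=\{t:\nu_t=\mu_k\}$. The difficulty you flag---that the $S_k$ are arbitrarily interleaved and the learner cannot observe which $S_k$ a round belongs to---is fatal for this route, not merely a nuisance to be absorbed in the $K^3\vch$ budget. Strongly-adaptive guarantees hold on \emph{intervals}, not on arbitrary subsets, so the sum $\sum_k\sqrt{|S_k|\cdot\vch\log T}$ does not correspond to any regret decomposition your wrapper actually produces. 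The paper shows explicitly (Example~\ref{exm:failureofepoch} and its continuation in Appendix~\ref{sec:proofthm8}) that \emph{any} scheme with predefined epoch boundaries already fails to beat $\Omega(T^{2/3})$ even for $K=2$ and a two-function class; a strongly-adaptive wrapper over a fixed geometric interval family is exactly such a scheme. The paper's mechanism is different: it declares a new epoch \emph{adaptively}, only when the observed agreed-mismatch number of the current finite expert set exceeds a data-dependent threshold $N=\sqrt{(T\cdot\vch\log T)/K}$. The analysis then bounds the number of epochs by $O(K)$ w.h.p., by decoupling the process into $K$ conditional-$i.i.d.$ subsequences (Proposition~\ref{prop:decouple}) and showing via symmetrization (Lemmas~\ref{lem_exchange_bound}--\ref{lem_bad_event}) that on each subsequence the threshold can be crossed only $O(1)$ times. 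The $K^3\vch\lesssim T^{1-\epsilon}$ condition arises from the ratio $N/K$ each subsequence must absorb, not from any warm-start length. Separately, your minimax/value argument for removing knowledge of $\nu$ in the base case is not justified as stated: the Bayesian learner against a prior on $\nu$ does not obviously lose only $o(1)$ relative to the $\nu$-aware learner, and in any event the $K=1$ case is already handled by the epoch approach of \cite{lazaric2009hybrid}.

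\textbf{Lower bound.} Your $K$-block plan with disjoint supports and a per-block VC-$d$ agnostic construction is the right skeleton, and you correctly identify the obstruction: if the $K$ blocks use independent $d$-tuples of shattered points, a single comparator near-optimal on all blocks forces the class to shatter all $Kd$ points, inflating $\vch$ to $Kd$. You name this as the ``technical crux'' but do not resolve it. The paper's solution exploits that one-dimensional thresholds have VC dimension $1$ but \emph{infinite Littlestone dimension}: take $d$ copies of a depth-$K$ Littlestone tree for thresholds (a ``Littlestone forest''), let the $k$th block's instance distribution be uniform over the $d$ current pointer values, and after each block advance each pointer along its tree according to the majority observed label. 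This produces $K$ successive hard $d$-point subproblems that are all simultaneously realized by a single product-threshold $h_{\mathbf{a}}(x,b)=\mathbf{1}\{x\ge a_b\}$, a class with $\vch=d$ exactly. Khinchine's inequality then gives $\Omega(\sqrt{dT/K})$ per block, and summing yields $\Omega(\sqrt{KdT})$.
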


Our main \emph{algorithmic} technique to establish Theorem~\ref{thm_sqrt_bound} is an \emph{adaptive} epoch-EWA approach presented in Algorithm~\ref{alg:1}, where we maintain a finite set of experts at each epoch and update the epochs \emph{adaptively} according to the sample we observed, unlike the conventional approach that defines the epochs independent of the samples, such as~\citep{lazaric2009hybrid}.

Our second main result is the following regret bound under mixable losses:
\begin{theorem}[Corollary~\ref{cor:mixable}]
\label{thm_intro2}
    Suppose $\ell$ is a bounded mixable loss (or logarithmic loss), $\mathcal{H}\subset \{0,1\}^{\mathcal{X}}$ is a class of finite VC-dimension. Then $\Tilde{r}_T(\mathcal{H},\mathsf{U}_K^1)\le O(K\cdot \vch\log^3T\cdot \Delta),$
    where $\Delta=\log(\vch\log(KT))$. Moreover, for any $Kd\log d\le O(T)$, there exists a class $\mathcal{H}$ with $\vch\ge d$ such that $\Tilde{r}_T(\mathcal{H},\mathsf{U}_K^1)\ge \Omega(Kd)$ under logarithmic loss.
\end{theorem}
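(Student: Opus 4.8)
The plan is to prove this bound by adapting the epoch scheme underlying Theorem~\ref{thm_sqrt_bound} so that it exploits mixability, converting its square-root regret into a logarithmic one. Concretely, I would run the \emph{adaptive} epoch scheme of Algorithm~\ref{alg:1} but replace its exponentially weighted average step by Vovk's aggregating algorithm (AA) for the mixable loss $\ell$; for the logarithmic loss this specializes to a Bayesian mixture. Two pieces are needed. First, a \emph{base learner}: on a window over which the conditional marginal does not change, a process in $\mathsf{U}_K^1$ generates i.i.d.\ instances from a fixed (unknown) distribution, so the relevant comparator set is the family of labelings that $\mathcal{H}$ induces on the instances seen so far, of size at most $(eT/\vch)^{\vch}$ by Sauer--Shelah; running AA over this growing family -- with the standard prior for late-born labelings and coupling the window to i.i.d.\ draws as in~\citep{haghtalab2022smoothed,block2022smoothed} -- costs $O(\vch\log T)$ regret on the window. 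Second, a \emph{restart rule}: whenever the realized cumulative loss of the active base learner overshoots its AA benchmark by a fixed margin -- the signal that the conditional marginal has drifted -- a fresh base learner is spawned, and a master AA aggregates over all base learners created so far. I would then show, by a counting argument that uses the bound $K$ on the number of distinct conditional marginals, that at most $O(K\log^2 T)$ base learners are ever created: each of the $K$ marginals is charged $O(\log^2 T)$ spawns, one $\log T$ from doubling the effective sample size of its learner and the other from not knowing when it is active. Combining a per-window regret $O(\vch\log T)$, $O(K\log^2 T)$ windows, the loss of the master aggregation against the best base learner, and an $O(\Delta)$ overhead for not knowing the order in which the $K$ marginals occur yields the claimed $O(K\cdot\vch\log^3 T\cdot\Delta)$.

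\textbf{Main obstacle.} The hard part will be bounding the number of spawned base learners when the $K$ conditional marginals are \emph{interleaved} in an arbitrary order: a naive change detector restarts every round when the marginal alternates, so the detector together with its charging argument must route each instance to the ``correct'' base learner and attribute every spawn to one of the $K$ marginals without double counting. This is precisely why the epochs in Algorithm~\ref{alg:1} must be chosen adaptively from the sample rather than fixed in advance, and it is the source of the extra $\log T$ in the epoch count -- hence of the cubic logarithm.

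\textbf{Lower bound.} For this direction I would give a direct construction. Take $Z = Z_1\sqcup\cdots\sqcup Z_K$ with $|Z_j| = d$, so $|Z| = Kd$, embed $Z$ into $\mathcal{X}$, and let $\mathcal{H} = \{0,1\}^{Z}$ be the class of all binary labelings of $Z$, which has $\vch = Kd \ge d$. Split $[T]$ into $K$ consecutive blocks of length $L = T/K$ and let the process draw, in block $j$, i.i.d.\ uniform instances from $Z_j$; its conditional marginal is constant on each block and takes exactly $K$ distinct values, so $\pmb{\nu}^T\in\mathsf{U}_K^1$. Since $L \ge \Omega(d\log d)$ by the hypothesis $Kd\log d \le O(T)$, a coupon-collector estimate shows the expected number of distinct points that appear over the run is $\Omega(Kd)$. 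Against any prediction rule the adversary constructs a labeling $h^\star\in\mathcal{H}$ online: on the first appearance of a point $z$ it sets $h^\star(z)$ to the label opposite the learner's prediction $\hat y_t$, forcing log-loss at least $\log 2$ on that round, and it plays $h^\star(z)$ consistently thereafter, so nothing further is forced on later appearances of $z$. Because distinct points of $Z$ carry unconstrained labels in $\mathcal{H}$, no prediction can hedge against this. Hence the learner's expected loss is at least $\log 2$ times the expected number of distinct points seen, i.e.\ $\Omega(Kd)$, whereas $\inf_h\sum_t\ell(h(\x_t),y_t) = 0$ since $h^\star$ realizes the label sequence; therefore $\tilde r_T(\mathcal{H},\mathsf{U}_K^1) \ge \Omega(Kd)$.
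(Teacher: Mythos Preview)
The paper does \emph{not} obtain the mixable-loss bound by refining the adaptive-epoch scheme of Theorem~\ref{thm_sqrt_bound}; it abandons that route entirely and goes through the \emph{stochastic sequential covering} machinery (Definition~\ref{def-gcover}, Proposition~\ref{pro:cover2regret}, Lemma~\ref{lem:error2cover}). The core step is a realizable cumulative-error bound for the \emph{ERM} rule under $\mathsf{U}_K^1$: using the decoupling of Proposition~\ref{prop:decouple}, the paper observes that ERM restricted to any of the $K$ conditional-i.i.d.\ subsequences is still an ERM rule, so its errors on each subsequence can be bounded separately (Lemma~\ref{lem:decouple_error}); a perturbation argument (Lemma~\ref{lem_infinite2finite}) converts a per-$h$ bound to a uniform $\sup_h$ bound. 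This yields $B(T,\beta)=O(K\,\vch\log^2 T\cdot\Delta)$ and, via Lemma~\ref{lem:error2cover}, a cover of log-size $O(K\,\vch\log^3 T\cdot\Delta)$.

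Your plan, by contrast, hinges on a restart rule that is not well-defined: the AA guarantee against the finite class $\mathcal{H}^s$ holds \emph{always}, so ``overshooting the AA benchmark'' never triggers; and a detector based on regret against the full $\mathcal{H}$ cannot be computed online. Even if you reuse Algorithm~\ref{alg:1}'s instance-based detector $E$ (the agreed-mismatch number) rather than a loss-based one, the epoch analysis does not give the stated rate: each epoch incurs approximation error $\le N$ plus AA regret $O(\vch\log T)$, so total regret is $S\cdot(N+\vch\log T)$; but Lemma~\ref{lem_bad_event} requires $N/K$ to exceed roughly $2\vch\log T$ to keep $S=O(K\log T)$, forcing $N=\Omega(K\,\vch\log T)$ and hence regret $\Omega(K^2\,\vch\log^2 T)$. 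The $K^2$ barrier is exactly why the paper switches to a different technique.

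\paragraph{Lower bound: correct for the literal statement, but weaker than the paper's.} Your construction takes $\mathcal{H}=\{0,1\}^Z$ with $|Z|=Kd$, so $\vch=Kd$; your $\Omega(Kd)$ bound is then simply $\Omega(\vch)$, which holds already for i.i.d.\ sampling and says nothing about the role of $K$. The paper's construction instead fixes $\vch=d$ (product threshold functions) and uses a depth-$K$ Littlestone forest, one tree per coordinate, updating each pointer against the predictor's first prediction in each of $K$ epochs; the coupon-collector step ensures every coordinate is hit within an epoch. This exhibits $\Omega(Kd)$ regret with VC dimension exactly $d$, matching the $K\cdot\vch$ structure of the upper bound up to polylogs.
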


The main technique for establishing Theorem~\ref{thm_intro2} is the \emph{stochastic sequential covering}, introduced in the recent paper~\citep{wu2022expected} (see also \citep{wu2022precise}), together with a \emph{perturbation} technique for establishing a \emph{realizable} cumulative error bound for ERM rule under $\mathsf{U}_K^1$, which may be of independent interest.

\paragraph{The class $\tilde{\mathsf{U}}_1^{\sigma}$ with $\sigma<1$:} Our next main result is a reduction from the class $\tilde{\mathsf{U}}_1^{\sigma}$ to the class of \emph{adversary $K$-selection} processes using a similar coupling argument as in~\citep{haghtalab2022smoothed,block2022smoothed}. We say a random process $X^T$ is adversary $K$-selection process if there exists a coupling $V^{KT}$ of $X^T$ such that for all $t\in [T]$ we have $X_t\in \{V_{K(t-1)+1},\cdots,V_{Kt}\}$ \emph{almost surely} and $V^{KT}$ is an $i.i.d.$ process. Using this reduction and stochastic sequential covering, we establish in Corollary~\ref{cor_threhold_smooth} the regrets for $1$-dimensional threshold functions under $\tilde{\mathsf{U}}_1^{\sigma}$ of order 
$O\left(\sqrt[4]{{(T^3/\sigma)\log^2(T/\sigma)}}\right)$ for absolute loss and $O\left(\sqrt{({T}/{\sigma})\log^2(T/\sigma)}\right)$ for mixable losses.

\paragraph{Summary of main contributions.} We formulate the online learning problem
with changing environment in which the underlying data distribution is
unknown (universality) and \emph{non-stationary}. We also analyze the expected worst case
regret for universal processes generated by smooth adversaries with \emph{unknown} reference measures.
Our formulation shifts the focus from the complexity of hypothesis classes  to the complexity            
of processes generating samples.
On the algorithmic side,
we design a new \emph{adaptive} epoch-EWA algorithm that is of independent interest and we
expect it will find other applications. On the methodology side, we design a novel
\emph{stochastic sequential} covering approach to obtain upper bounds on regret, which is applicable for \emph{general} random processes.
For matching lower bounds,
we introduce a novel technique based on the concept of Littlestone \emph{forests}.
We stress that for general universal smooth processes we
restrict our analysis to the threshold functions as the first step
towards better understanding of this complex problem.
While the threshold function may seem simple from the classical learning perspective, we emphasize 
that the analysis is nontrivial due to complex structure of the universal smooth adversary processes.

\renewcommand{\arraystretch}{2}% for the vertical padding
\begin{table}[h!]
  \begin{center}
    \caption{Summary of Results}
    \label{tab:table1}
    \begin{tabular}{l|c|c|c}
    \hline
       & \makecell{$\mathsf{S}^{\sigma}(\mu)$\\ (VC class)} &\makecell{$\mathsf{U}_K^1$\\(VC class)} & \makecell{$\Tilde{\mathsf{U}}_1^{\sigma}$\\ (Threshold functions)}\\
      \hline
      \makecell{Absolute loss}
      & \makecell{$O\left(\sqrt{\mathsf{VC}\cdot T\log\frac{T}{\sigma}}\right)$\\ $\Omega\left(\sqrt{\mathsf{VC}\cdot T\log\frac{1}{\mathsf{VC}\cdot\sigma}}\right)$\\\citep{haghtalab2022smoothed}} &\makecell{ $O(\sqrt{\mathsf{VC}\cdot KT\log T})$ \\ $\Omega(\sqrt{\mathsf{VC}\cdot KT})$\\ (Theorem~\ref{thm_sqrt_bound})} &\makecell{$O\left(\sqrt{\frac{T^{3/2}\log(T/\sigma)}{\sigma^{1/2}}}\right)$\\ (Corollary~\ref{cor_threhold_smooth})}\\
      \hline
      Mixable loss & \makecell{ $O\left(\mathsf{VC}\cdot \log\frac{T}{\sigma}\right)$\\ $\Omega\left(\mathsf{VC}\cdot \log\left(\frac{T}{\mathsf{VC}}\vee\frac{1}{\mathsf{VC}\cdot \sigma}\right)\right)$\\ (Corollary~\ref{cor1}) } & \makecell{$O(\mathsf{VC}\cdot K\log^3 T)$ \\ $\Omega(\mathsf{VC}\cdot (K\vee \log \frac{T}{\mathsf{VC}}))$ \\ (Corollary~\ref{cor:mixable}) } & \makecell{ $O\left(\sqrt{\frac{T}{\sigma}\log^2\frac{T}{\sigma}}\right)$\\ (Corollary~\ref{cor_threhold_smooth}) }\\
      \hline
    \end{tabular}
  \end{center}
 \quad \footnotesize{$^*$ The bounds hold for certain ranges of the parameters given in the referenced theorems and $a\vee b=\max\{a,b\}$.}
\end{table}

\vspace{-0.18in}
\subsection{Related work}
Online learning from randomized samples was first investigated in~\citep{haussler1994predicting}, where the authors considered the case in which features $\x^T$ are sampled from some unknown $i.i.d.$ source and $y^T$ is \emph{realized} by some function $h\in \mathcal{H}$. It is shown in~\citep{haussler1994predicting} that one can achieve a $O\left({\vch}/{T}\right)$ expected \emph{error rate} in such a scenario using the so called \emph{1-inclusion graph} algorithm. This result was latter strengthen and extended in~\citep{schuurmans1997characterizing,antos1998strong,wu2021non,bousquet2021theory}. However, all of these results assumed that the samples must be realizable by some function in $\mathcal{H}$. \cite{lazaric2009hybrid} considered an alternate scenario in which features $\x^T$ are  $i.i.d.$, but the labels $y^T$ are adversarial. It is shown in~\citep{lazaric2009hybrid} that one can achieve a $O(\sqrt{T\cdot \vch\log T})$ \emph{regret} under absolute loss if $\mathcal{H}$ is a binary valued class of finite VC-dimension. This scenario was extended in~\citep{wu2022expected} to general distributions for features $\x^T$ and general losses for which the authors also introduced the notion of the \emph{expected worst case} regret. Despite the general formulation in~\citep{wu2022expected}, only $i.i.d.$ (i.e., exchangeable) distributions were analyzed. Others~\citep{rakhlin2011online1,haghtalab2020smoothed,haghtalab2022smoothed,block2022smoothed} have studied more sophisticated processes, namely the \emph{smooth adversary} process\footnote{Note that the regrets analyzed in these papers can be rephrased as the expected worst case regret.}. However, it was assumed that the reference measure of the smooth adversary samples must be known in advance\footnote{For unknown distributions, we need substantially different techniques, as demonstrated in this paper.}. We note also that~\cite{bilodeau2020relaxing} consider similar intermediate scenarios but with finite expert classes. Online learning with general distributions is also discussed in~\citep{hanneke2021learning}.

There has been a lot of work on online learning problems with adversarial samples; please see~\citep{lugosi-book, hazan2016introduction} for excellent discussions of this topic. We note that the term "changing environments" has also been used in the online learning literature with different meanings. \cite{blum2007external} and~\cite{hazan2009efficient} studied \emph{changing environments} interpreted as minimizing the regret by comparing to some \emph{changing compactors} (instead of a static compactor); however, the samples are still assumed to be adversary. 
In this paper we focus primarily on how the changing sampling process affects regret when the compactor is still assumed to be static and coming from a \emph{large} (possibly non-parametric) class $\mathcal{H}$.

\vspace{-0.1in}
\section{Preliminaries}
\label{sec_formulation}
Let $\mathcal{X}$ be a feature (instance) space, $\hat{\mathcal{Y}}=[0,1]$ be the prediction space, and $\mathcal{Y}=\{0,1\}$ be the true label space. 
We denote by $\mathcal{H}\subset \hat{\mathcal{Y}}^{\mathcal{X}}$ 
a class of functions $\mathcal{X}\rightarrow \hat{\mathcal{Y}}$, 
which is also referred to as a hypothesis or experts class. For any time horizon $T$, 
we consider a class $\mathsf{P}$ of distributions over $\mathcal{X}^T$. We are interested in the \emph{expected worst case} minimax regret $\Tilde{r}_T(\mathcal{H},\mathsf{P})$ as defined in~(\ref{eq-rtilde}) under a general convex loss $\ell$. This includes, for instance, the absolute loss $\ell(\hat{y},y)=|\hat{y}-y|$ (which can be interpreted as $\mathbb{E}_{\hat{b}\sim \text{Bernoulli}(\hat{y})}[1\{\hat{b}\not=y\}]$) and the logarithmic loss $\ell(\hat{y},y)=-y\log \hat{y}-(1-y)\log(1-\hat{y})$. Using minimax inequality, it is easy to observe that
$$\tilde{r}_T(\mathcal{H},\mathsf{P})\ge \sup_{\pmb{\xi}^{2T}}\inf_{\phi^T}
\mathbb{E}_{(\x^T,y^T)\sim \pmb{\xi}^{2T}}\left[\sum_{t=1}^T\ell(\phi_t(\x^t,y^{t-1}),y_t)-
\inf_{h\in \mathcal{H}}\sum_{t=1}^T\ell(h(\x_t),y_t)\right],$$
where $\pmb{\xi}^{2T}$ is a joint distribution over $\mathcal{X}^T\times \mathcal{Y}^T$ such that the marginal distribution of $\pmb{\xi}^{2T}$ restricted on $\mathcal{X}^T$ is in $\mathsf{P}$. We will use such a relation to derive \emph{lower bounds} for $\tilde{r}_T$.

In this paper, we assume that $\mathcal{H}\subset \{0,1\}^{\mathcal{X}}$ is binary valued~\footnote{We assume $\mathcal{H}$ to be binary valued for the clarity of presentation. However, our results also hold for \emph{embedding} of $\mathcal{H}$ into real valued functions such as in~\citep{bhatt2021sequential}, see Appendix~\ref{sec:real}.} and has finite VC-dimension. We  specifically study here how the structure of the \emph{distribution class} $\mathsf{P}$ impacts  expected worst case regret. This is unlike most of the results in learning theory literature that focus on the impact of the structure of $\mathcal{H}$ on regret. We now provide several examples of $\mathsf{P}$ that demonstrate how previously considered setups in the literature fit  into our framework.
\begin{example}
    If $\mathsf{P}$ is the class of all singleton distributions over $\mathcal{X}^T$, our setup recovers the adversary setting, as in~\citep{rakhlin2010online}. If $\mathsf{P}$ is the class of all $i.i.d.$ processes over $\mathcal{X}^T$, our setup recovers those of~\citep{lazaric2009hybrid}.
\end{example}

\begin{example}[The smooth adversary setting]
\label{exm_smooth}
    The smooth adversary setting is an intermediate setting between the full adversary and the $i.i.d.$ case. In this setting, one assumes that there is some (known) underlying reference measure $\mu$ over $\mathcal{X}$, such that at each time step $t$ an adversary selects some $\sigma$-smooth distribution $\nu_t$ w.r.t. $\mu$ that generates sample $\x_t$. Formally, we say a distribution $\nu$ is $\sigma$-smooth (with $\sigma\le 1$) w.r.t. to $\mu$ if $\nu$ is absolutely continuous w.r.t. $\mu$ and has density $v(\x)=\frac{\text{d}\nu}{\text{d}\mu}$ such that $\mu\left(\left\{\x:v(\x)\le {1}/{\sigma}\right\}\right)=1.$ We denote  by $\mathcal{S}^{\sigma}(\mu)$ the class of all $\sigma$-smooth distributions w.r.t. $\mu$. We say a process $\pmb{\nu}^T$ over $\mathcal{X}^T$ is $\sigma$-smooth w.r.t. $\mu$ if for all $t\le T$ the conditional distribution $\nu_t(X_t\mid X^{t-1})$ of $X_t$ conditioning on $X^{t-1}$ is in $\mathcal{S}^{\sigma}(\mu)$ almost surely. We write $\mathsf{S}^{\sigma}(\mu)$ 
    for the class of all such random processes. Using a standard \emph{skolemization} argument~\citep{rakhlin2010online}, the minimax regret for any class $\mathcal{H}$ w.r.t. smooth adversaries, as in~\citep{haghtalab2020smoothed,haghtalab2022smoothed,block2022smoothed}, is simply
    $\tilde{r}_T(\mathcal{H},\mathsf{S}^{\sigma}(\mu)).$ We refer to Appendix~\ref{sec:known_dist} for a self-contained discussion of regret analysis w.r.t. $\mathsf{S}^{\sigma}(\mu)$ with extensions to broader losses.
\end{example}

A crucial restriction of the smooth adversary setting of  Example~\ref{exm_smooth} is that the reference distribution $\mu$ must be \emph{known} and fixed. A more interesting and realistic scenario is when the reference measure itself is allowed to change. More generally, one may have \emph{no} knowledge about the reference measures. Our main focus of this paper is the \emph{universal smooth process} $\mathsf{U}_K^{\sigma}$, as defined in Equation~(\ref{eq:universalsmooth}); in particular, the sub-classes $\mathsf{U}_K^1$ and $\Tilde{\mathsf{U}}_1^{\sigma}$ (see Section~\ref{sec:intro} for formal definitions). 

The following propositions provide a useful reduction from multiple reference measures to \emph{one} reference measure, i.e., $\mathsf{U}_K^{\sigma}\subset \mathsf{U}_1^{\sigma/K}$, $\Tilde{\mathsf{U}}_K^{\sigma}\subset \Tilde{\mathsf{U}}_1^{\sigma/K}$ and $\mathsf{S}^{\sigma}(\mu_1,\cdots,\mu_K)\subset \mathsf{S}^{\sigma/K}(\Bar{\mu})$.

\begin{proposition}
\label{prop_equiv}
    Let $\mu_1,\cdots,\mu_K$ be $K$ \emph{arbitrary} distributions over the same domain $\mathcal{X}$. Then for all $k\in [K]$ the measure $\mu_k$ is $1/K$-smooth w.r.t. $\bar{\mu}$, where $\bar{\mu}=\frac{1}{K}\sum_{k=1}^K\mu_k.$
\end{proposition}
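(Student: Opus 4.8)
The plan is to verify the smoothness condition directly from its definition. Recall (Example~\ref{exm_smooth}) that $\mu_k$ is $1/K$-smooth with respect to $\bar\mu$ exactly when $\mu_k\ll\bar\mu$ and the Radon--Nikodym density $f_k:=\mathrm{d}\mu_k/\mathrm{d}\bar\mu$ satisfies $\bar\mu(\{x: f_k(x)\le K\})=1$; equivalently (the event-based form used in Section~\ref{sec:intro}), $\mu_k(A)\le K\,\bar\mu(A)$ for every measurable $A\subseteq\mathcal{X}$. I would prove the event-based inequality first and then translate it into the density statement.

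First I would check absolute continuity: if $\bar\mu(A)=0$, then $\frac{1}{K}\sum_{j=1}^K\mu_j(A)=0$, and since every summand is nonnegative this forces $\mu_j(A)=0$ for all $j$, in particular $\mu_k(A)=0$. Hence $\mu_k\ll\bar\mu$ and $f_k$ is well defined. Next, for an arbitrary measurable $A$ I would simply drop all but the $k$-th term in the average,
$$\bar\mu(A)=\frac{1}{K}\sum_{j=1}^K\mu_j(A)\ge\frac{1}{K}\,\mu_k(A),$$
which gives $\mu_k(A)\le K\,\bar\mu(A)$. Rewriting this as $\int_A f_k\,\mathrm{d}\bar\mu\le\int_A K\,\mathrm{d}\bar\mu$ for every $A$ and choosing $A=\{x: f_k(x)>K\}$ shows that this set is $\bar\mu$-null, i.e. $\bar\mu(\{x: f_k(x)\le K\})=1$. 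Since $K=1/(1/K)$, this is precisely $\mu_k\in\mathcal{S}^{1/K}(\bar\mu)$, which is the claim.

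There is essentially no obstacle here; the only point requiring a little care is reconciling the two equivalent presentations of $\sigma$-smoothness (the pointwise density bound of Example~\ref{exm_smooth} versus the set-function inequality of Section~\ref{sec:intro}), which the Radon--Nikodym comparison above handles. Applying the same computation to the conditional marginals $\nu_t(X_t\mid X^{t-1})$ along every sample path then yields the process-level reductions advertised in the text, namely $\mathsf{S}^{\sigma}(\mu_1,\cdots,\mu_K)\subset\mathsf{S}^{\sigma/K}(\bar\mu)$ and, combined with the transitivity of smoothness bounds (Proposition~\ref{prop_transit}), the inclusions $\mathsf{U}_K^{\sigma}\subset\mathsf{U}_1^{\sigma/K}$ and $\Tilde{\mathsf{U}}_K^{\sigma}\subset\Tilde{\mathsf{U}}_1^{\sigma/K}$.
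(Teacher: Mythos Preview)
Your proof is correct and follows essentially the same route as the paper: establish $\mu_k\ll\bar\mu$, invoke the Radon--Nikodym density, and use the elementary inequality $\mu_k(A)\le K\,\bar\mu(A)$ on the set $\{f_k>K\}$ to conclude it is $\bar\mu$-null. The paper phrases this last step as a short contradiction (if $\bar\mu(\{u_k>K\})>0$ then $\mu_k/K>\bar\mu$ there, impossible), whereas you write it as a direct integral comparison, but the content is identical.
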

\begin{proof}
    Note that $\bar{\mu}$ is interpreted as follows: for any measurable event $A\subset \mathcal{X}$, we have $\bar{\mu}(A)=\frac{1}{K}\sum_{t=1}^K\mu_k(A).$ It is easy to verify that $\bar{\mu}$ is a probability measure over $\mathcal{X}$. We now show that, for all $k\in [K]$, $\mu_k$ is $1/K$-smooth w.r.t. $\bar{\mu}$. To see this, we observe that $\mu_k$ is absolutely continuous w.r.t. $\bar{\mu}$. By Radon–Nikodym theorem, there is a density $u_k(\x)=\frac{\text{d}\mu_k}{\text{d}\bar{\mu}}$ of $\mu_k$ w.r.t. $\bar{\mu}$. Let $\mathcal{E}_k=\{\x:u_k(\x)> K\}$. We have $\mu_k(\mathcal{E}_k)/K>\bar{\mu}(\mathcal{E}_k)$ provided $\bar{\mu}(\mathcal{E}_k)>0$. However, by definition of $\bar{\mu}$, we also have $\mu_k(\mathcal{E}_k)/K\le\bar{\mu}(\mathcal{E}_k)$. This implies that we must have $\bar{\mu}(\mathcal{E}_k)=0$.
\end{proof}
\begin{proposition}
\label{prop_transit}
    Let $\mu_1,\mu_2,\mu_3$ be distributions over $\mathcal{X}$ such that $\mu_1$ is $\sigma_1$-smooth w.r.t. $\mu_2$ and $\mu_2$ is $\sigma_2$-smooth w.r.t. $\mu_3$. Then $\mu_1$ is $\sigma_1\sigma_2$-smooth w.r.t. $\mu_3$.
\end{proposition}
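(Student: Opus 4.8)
The plan is to reduce the claim to the chain rule for Radon--Nikodym derivatives, using the density characterization of smoothness from Example~\ref{exm_smooth}. Since $\mu_1$ is absolutely continuous w.r.t. $\mu_2$ and $\mu_2$ is absolutely continuous w.r.t. $\mu_3$, absolute continuity is transitive, so $\mu_1\ll\mu_3$ and the density $w=\frac{\mathrm{d}\mu_1}{\mathrm{d}\mu_3}$ is well defined. By the chain rule, $w = \frac{\mathrm{d}\mu_1}{\mathrm{d}\mu_2}\cdot\frac{\mathrm{d}\mu_2}{\mathrm{d}\mu_3}$ holds $\mu_3$-almost everywhere. Writing $u=\frac{\mathrm{d}\mu_1}{\mathrm{d}\mu_2}$ and $v=\frac{\mathrm{d}\mu_2}{\mathrm{d}\mu_3}$, smoothness gives $u\le 1/\sigma_1$ $\mu_2$-a.e.\ and $v\le 1/\sigma_2$ $\mu_3$-a.e., so it remains to conclude $uv\le 1/(\sigma_1\sigma_2)$ $\mu_3$-a.e.

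The one point requiring a little care is that the bound on $u$ holds only $\mu_2$-a.e., whereas we need statements $\mu_3$-a.e. To handle this, let $N=\{u>1/\sigma_1\}$, so $\mu_2(N)=0$. Then $0=\mu_2(N)=\int_N v\,\mathrm{d}\mu_3$, which forces $v=0$ $\mu_3$-a.e.\ on $N$; hence $uv=0\le 1/(\sigma_1\sigma_2)$ $\mu_3$-a.e.\ on $N$. On the complement $N^c$ we have $u\le 1/\sigma_1$ pointwise and $v\le 1/\sigma_2$ $\mu_3$-a.e., so $uv\le 1/(\sigma_1\sigma_2)$ $\mu_3$-a.e.\ there as well. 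Combining the two cases, $w=uv\le 1/(\sigma_1\sigma_2)$ $\mu_3$-a.e., which is exactly the statement that $\mu_1$ is $\sigma_1\sigma_2$-smooth w.r.t. $\mu_3$.

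Alternatively --- and perhaps more transparently --- one can avoid densities entirely by using the event form of smoothness appearing in~(\ref{eq:universalsmooth}): for every measurable $A\subset\mathcal{X}$ one has $\mu_1(A)\le \frac{1}{\sigma_1}\mu_2(A)\le\frac{1}{\sigma_1\sigma_2}\mu_3(A)$, and (by transitivity of absolute continuity) $\mu_1\ll\mu_3$, so applying this with $A=\{w>1/(\sigma_1\sigma_2)\}$ shows $\mu_3(A)=0$. Either route is routine; there is no genuine obstacle here, the statement being a basic composition property of smoothness that is invoked (together with Proposition~\ref{prop_equiv}) to obtain the reductions $\mathsf{U}_K^{\sigma}\subset \mathsf{U}_1^{\sigma/K}$ and $\Tilde{\mathsf{U}}_K^{\sigma}\subset \Tilde{\mathsf{U}}_1^{\sigma/K}$.
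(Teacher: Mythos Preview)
Your proof is correct. The paper actually states Proposition~\ref{prop_transit} without proof, treating it as an elementary fact, so there is no ``paper's own proof'' to compare against. Both of your routes are valid; the event-form argument $\mu_1(A)\le \tfrac{1}{\sigma_1}\mu_2(A)\le \tfrac{1}{\sigma_1\sigma_2}\mu_3(A)$ is the one-line verification the authors presumably had in mind (this characterization is exactly how smoothness is introduced in the paper's introduction), while your Radon--Nikodym argument is a more explicit unpacking of the same fact and correctly handles the $\mu_2$-a.e.\ versus $\mu_3$-a.e.\ subtlety.
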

\paragraph{Remark on notations:} Throughout the paper, we use lower case Greek letters $\mu,\nu$ to denote a probability measure over $\mathcal{X}$. For any two measures $\mu_1,\mu_2$, we use $\mu_1\cdot\mu_2$ to denote the product distribution of $\mu_1,\mu_2$ and $\mu^{\otimes T}$ to denote the $i.i.d.$ measure of $\mu$ over $\mathcal{X}^T$. We use boldface Greek letters $\pmb{\nu}^T$ to denote \emph{general} distributions over $\mathcal{X}^T$. We use Math Sans Serif font $\mathsf{P}$ to denote classes of distributions over $\mathcal{X}^T$. For any random process $X^T$ over $\mathcal{X}^T$, $t\le [T]$ and $\x^{t-1}$, we use $\nu_t(X_1\mid \x^{t-1})$ to denote the conditional distribution of $X_t$ conditioning on $\x^{t-1}$. We also use $\pmb{\nu}^T$ to denote the joint distribution of $X^T$ over $\mathcal{X}^T$. Sometimes, we write $\nu_t=\nu_t(X_t\mid \x^{t-1})$ to simplify the notation when the conditioning context $\x^{t-1}$ is clear. We should emphasize that all parameters appearing in our bounds are \emph{non-asymptotic}, meaning that one should \emph{not} view them as constants. We will often provide ranges of the parameters for our bounds to hold.

\vspace{-0.1in}
\section{Main results}

This is the main section of our paper. In Section~\ref{sec:alg_abs}, we study \emph{dynamic changing process} of cost $K$, i.e., the class $\mathsf{U}_K^1$, where we provide tight lower and upper bounds for finite VC-dimensional classes under absolute loss. We then refine these bounds for special losses, e.g., logarithmic loss in Section~\ref{sec:mixable}. In Section~\ref{sec:univ_smooth}, we analyze the class $\Tilde{\mathsf{U}}_1^{\sigma}$ (i.e., smooth processes with fixed but unknown reference measure) by establishing an important relation between $\Tilde{\mathsf{U}}_1^{\sigma}$ and the \emph{adversary K-selection} process introduced in Section~\ref{sec:univ_smooth}. We demonstrate the effectiveness of our approach by establishing sub-linear regrets for $1$-dimension threshold functions.

\vspace{-0.1in}
\subsection{The class $\mathsf{U}_K^1$ with finite VC class}
\label{sec:alg_abs}
Before we analyze the class $\mathsf{U}_K^1$, we note that the processes in $\mathsf{U}_K^1$ are highly \emph{non-stationary}. Our first main technical ingredient is the following decoupling of the random processes in $\mathsf{U}_K^1$ into $K$ (conditional) $i.i.d.$ processes.

\paragraph{Decoupling of $\mathsf{U}_K^1$:} Let $X^T$ be an arbitrary process in $\mathsf{U}_K^1$. We can \emph{extend} $X^T$ into another process $V^{KT}$ in the following manner. The first $T$ samples of $V^{KT}$ equal $X^T$. For any \emph{conditional} marginal $\nu_k$ of $X^T$ with $k\in [K]$, we extend the sample $X^{T}$ by sampling $i.i.d.$ from $\nu_k$ such that $\nu_k$ is used \emph{exactly} $T$ times in the sample $V^{KT}$ for each $k\in [K]$. Now, we denote $V^{(k)}=V_{k_1},\cdots,V_{k_T}$ as the subsequence in $V^{KT}$ that corresponds to $\nu_k$, where $k_t$s are \emph{random} indices.
\begin{proposition}
\label{prop:decouple}
    Conditioning on $k_1$ and $V^{k_1-1}$, the sample $V^{(k)}$ is an $i.i.d.$ process of length $T$ for all $k\in [K]$ (the $V^{(k)}$s are not necessarily independent for different $k$).
\end{proposition}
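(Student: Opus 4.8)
The plan is to prove Proposition~\ref{prop:decouple} by a direct chain-rule (tower-property) argument, the key point being that the ``schedule'' recording which of the (at most) $K$ conditional marginals is active at each round is, up to the current round, a deterministic function of the observed prefix of $V^{KT}$.

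First I would fix a canonical enumeration $\nu_1,\dots,\nu_K$ of the conditional marginals --- say by order of first appearance along the realized path, padding with arbitrary dummy measures on the event that strictly fewer than $K$ distinct marginals occur, so that ``$\nu_k$ is used exactly $T$ times'' stays well defined. With this enumeration the active distribution at round $t$ is a function of $V^{t-1}$: for $t\le T$ it is $\nu_t(V_t\mid V^{t-1})$, which the cost-$K$ assumption of $\mathsf{U}_K^1$ forces into one of $\le K$ values and whose label is read off from the distributions already seen at rounds $1,\dots,t-1$; for $t>T$ the appended blocks were scheduled deterministically from $X^T$, hence from $V^T\subseteq V^{t-1}$. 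Two consequences follow. (i) For every $k$ and $j$, the index $k_j$ of the $j$-th use of $\nu_k$ satisfies $\{k_j=s\}\in\sigma(V^{s-1})$, i.e.\ $k_j$ is a (predictable) stopping time; in particular $V_{k_1},\dots,V_{k_{j-1}}$ and $k_1,\dots,k_j$ are all $\sigma(V^{k_j-1})$-measurable. (ii) The measure $\nu_k$ is already determined by $(k_1,V^{k_1-1})$: it equals $\nu_{k_1}(\cdot\mid V^{k_1-1})$ if $k_1\le T$, and is a function of $V^T\subseteq V^{k_1-1}$ if $k_1>T$.

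Given (i) and (ii) I would establish the claim by peeling off the coordinates of $V^{(k)}$ from the right. For bounded measurable $f_1,\dots,f_T$, insert $\sigma(V^{k_T-1})$ inside $\mathbb{E}\big[\prod_{j=1}^T f_j(V_{k_j})\,\big|\,k_1,V^{k_1-1}\big]$ (legitimate since $\sigma(k_1,V^{k_1-1})\subseteq\sigma(V^{k_T-1})$): the factor $\prod_{j<T} f_j(V_{k_j})$ is $\sigma(V^{k_T-1})$-measurable by (i), while on $\{k_T=s\}$ one has $\mathbb{E}[f_T(V_{k_T})\mid V^{k_T-1}]=\int f_T\,d\nu_k$, because at round $s$ the active distribution is exactly $\nu_k$ --- either $s\le T$ and it is $\nu_s(\cdot\mid V^{s-1})$, or $s>T$ and it is a fresh i.i.d.\ draw from $\nu_k$ independent of $V^{s-1}$ given $V^T$. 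Since $\int f_T\,d\nu_k$ is $\sigma(k_1,V^{k_1-1})$-measurable by (ii), it pulls out, leaving $\int f_T\,d\nu_k\cdot\mathbb{E}\big[\prod_{j<T} f_j(V_{k_j})\,\big|\,k_1,V^{k_1-1}\big]$; iterating down to $j=1$ gives $\prod_{j=1}^T\int f_j\,d\nu_k$, which is precisely the statement that $V^{(k)}$ is, conditionally on $(k_1,V^{k_1-1})$, an i.i.d.\ sequence with law $\nu_k$.

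I expect the main obstacle to be the measurability bookkeeping at the seam between the original block $V^T=X^T$ and the appended i.i.d.\ blocks: one has to check that the enumeration of the $\nu_k$'s together with the deterministic appending rule are arranged so that the $k_j$'s are genuinely stopping times, that $\nu_k$ is frozen once $(k_1,V^{k_1-1})$ is revealed and is not re-randomized later, and that every sample landing in $V^{(k)}$ is a fresh draw from that same $\nu_k$ given the past. Once those points are pinned down the tower-property computation above is routine, and the degenerate case of fewer than $K$ distinct marginals is absorbed into the dummy-measure padding.
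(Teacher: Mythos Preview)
Your proposal is correct and follows essentially the same approach as the paper: both observe that $\nu_k$ is determined once $(k_1,V^{k_1-1})$ is fixed, then peel off the last coordinate of $V^{(k)}$ via the chain rule and induct. Your version is considerably more careful about the measurability bookkeeping (predictability of the $k_j$'s, the seam between $V^T$ and the appended blocks, the dummy-padding for fewer than $K$ marginals), all of which the paper's three-line argument leaves implicit.
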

\begin{proof}
    Note that conditioning on $k_1$ and $V^{k_1-1}$, the distribution $\nu_k$ is determined. By definition of the conditional distribution for any events $A\subset \mathcal{X}^{T-1}$ and $B\subset \mathcal{X}$, we have
    \begin{align*}
        \mathrm{Pr}[V_{k_1}^{k_{T-1}}\in A,~V_{k_T}\in B\mid V^{k_1-1}]&=\mathrm{Pr}[V_{k_1}^{k_{T-1}}\in A\mid V^{k_1-1}]\cdot \mathrm{Pr}[V_{k_T}\in B\mid V_{k_1}^{k_{T-1}}\in A,~V^{k_1-1}]\\
        &=\mathrm{Pr}[V_{k_1}^{k_{T-1}}\in A\mid V^{k_1-1}]\cdot \nu_k(B),
    \end{align*}
    where $V_{k_1}^{k_{T-1}}=\{V_{k_1},V_{k_2},\cdots,V_{k_{T-1}}\}$. The proposition follows by induction on $T$.
\end{proof}

It is important to point out that the extension of $X^T$ to $V^{KT}$ is required for the decoupling to work. Otherwise, the constructed process $V^{(k)}$ is \emph{not} necessarily $i.i.d.$ (instead it is a \emph{random} prefix of an $i.i.d.$ process). Now, to analyze the performance of a predictor $\Phi$ on the process $X^T$, it is sufficient to study $\Phi$ on each of the sub-sequences $V^{(k)}$. Note that, this is generally a non-trivial task, since the predictor can only access to each of $V^{(k)}$s \emph{obliviously}, i.e., it never exactly knows the decoupling. The technical challenge is to ensure that the presence of other $V^{(k)}$s do not affect the performance of the predictor on each individual $V^{(k)}$.

\vspace{-0.1in}
\subsubsection{The adaptive epoch-EWA algorithm} 
The \emph{epoch} approach~\citep{lazaric2009hybrid} is a common way for dealing with \emph{distribution blind} (i.e., universal) cases. The algorithm proceeds as follows: we partition the time horizon into $\lceil\log T \rceil$ epochs, where each epoch $s$ ranges from time steps $2^{s}-1,\cdots,2^{s+1}$. In epoch $s$, we perform \emph{Exponential Weighted Average} (EWA) algorithm~\citep{lugosi-book} on a finite expert class by selecting one function from each equivalent class of $\mathcal{H}$ that agrees on the samples of the previous epochs. The rationale behind this approach is that as we obtain more and more samples, we can \emph{learn} the underlying hypothesis and then use the learned hypothesis to make prediction for the next epoch. However, this heavily relies on the assumption that the distributions are \emph{stationary} (i.e., the samples should have similar statistics among different epochs). This does not hold even for $\mathsf{U}_2^1$.
\begin{example}[Failure of epoch approach]
\label{exm:failureofepoch}
    Let $\mathcal{X}=\{\x_1,\x_2\}$ be the instance space and $\mathcal{H}=\{h_1,h_2\}$ be the hypothesis class with $h_1(\x_1)=h_2(\x_1)=1$, $h_1(\x_2)=0$ and $h_2(\x_2)=1$. We define distributions $\nu_1,\nu_2$ to be the singleton distributions on $\x_1$ and $\x_2$, respectively. We assume that the time horizon is $T=2^{s+1}-1$. For the first $s-1$ epochs, we use $\nu_1$ to generate samples and use $\nu_2$ for the last epoch. Now, after $s-1$ epochs, the algorithm, as in~\citep{lazaric2009hybrid}, will choose the expert to be any one of $h_1,h_2$ (since they agree on the previous samples). It is easy to see that the algorithm must incur at least $T/2$ regrets (the adversary simply labels the following samples using $h_i$ that differs from the algorithm's selection) .
\end{example}

\begin{algorithm}[h]
\caption{Adaptive epoch-EWA algorithm}\label{alg:1}
\textbf{Input}: Reference class $\mathcal{H}$ and update threshold $N$

 Let $s, E=0$ and $\mathcal{H}^0=\{h\}$, where $h\in \mathcal{H}$ is arbitrary
 
\For{$t=1,\cdots, T$}{
 Let $t_s=t$, $r=1$, $m= |\mathcal{H}^s|$ and $W^r=\{1,\cdots,1\}\in \mathbb{R}^m$
 
 \While{$E\le N$}{
   Set learning rate $\eta_r=\sqrt{8(\log m)/r}$
   
  Receive $\x_t$
  
  Make prediction
$$\hat{y}_t = \left(\sum_{i=1}^mh^s_i(\x_t)\cdot W_i^r\right)/\left(\sum_{i=1}^mW_i^r\right),~h^s_i\in \mathcal{H}^s$$
 Receive $y_t$
 
 Update $\forall i\le m,~W_i^{r+1}=W_i^{r}e^{-\eta_r \ell(h_i^s(\x_t),y_t)}$
 
 Set $$E=\max_{h\in \mathcal{H},~h^s\in\mathcal{H}^s}\left\{\sum_{e=0}^{r-1}1\{h(\x_{t_s+e})\not=h^s(\x_{t_s+e})\}:\forall j<t_s,~h(\x_j)=h^s(\x_j)\right\}$$
 Set $t=t+1, r=r+1$.
}
  Set $s=s+1$, $t=t-1$ and $E=0$
  
  Define equivalence $h_1\sim_s h_2$ if $\forall j\le t,~h_1(\x_j)=h_2(\x_j)$, where $h_1,h_2\in \mathcal{H}$.
  
  Let $\mathcal{H}^s$ be the class that selects exactly one element from each equivalent class under $\sim_s$.
}
\end{algorithm}
It can be shown that \emph{any predefined set of} epochs cannot provide bounds better than $\Omega(T^{2/3})$, even for the simple class of Example~\ref{exm:failureofepoch} (see Example~\ref{exm:failureofepoch1} in Appendix~\ref{sec:proofthm8}). Our main idea for resolving this issue is the \emph{adaptive} epoch approach, presented in Algorithm~\ref{alg:1}. Note that the "adaptive" in Algorithm~\ref{alg:1} has two different meanings. First, we select the learning rate $\eta_r$ adaptively, and second, the error bound $E$ is computed adaptively (i.e., we change the epochs according to the samples we observe). Our main result for this section is the following performance bound of Algorithm~\ref{alg:1}.

\begin{theorem}
\label{thm_sqrt_bound}
    Assume that the loss $\ell$ is convex in the first argument and upper bounded by $1$, and $\mathcal{H}\subset \{0,1\}^{\mathcal{X}}$ is a class of finite VC-dimension. If $\hat{y}_t$ is the prediction rule of Algorithm~\ref{alg:1} that takes input $\mathcal{H}$ and $N=\sqrt{(T\cdot \vch\log T)/K}$, we have for all $\epsilon>0$ if $K^3\cdot \vch \le O(T^{1-\epsilon}/\log T)$ 
    $$\tilde{r}_T(\mathcal{H},\mathsf{U}_K^1)\le O(\sqrt{KT\cdot\vch\log T}),$$
    where $O$ hides a constant that depends only linearly on $1/\epsilon$.
    Furthermore, for any numbers $d,K\ge 1$ with $(8Kd)\cdot\log(2Kd)\le T$, we have
    $$\sup_{\mathcal{H},\vch\le d}\tilde{r}_T(\mathcal{H},\mathsf{U}_K^1)\ge \sqrt{KdT/64},$$
    under the absolute loss. For any $K\le T$ the bound $\Omega(\sqrt{KT})$ holds for threshold functions.
\end{theorem}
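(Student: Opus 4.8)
I would establish the upper and lower bounds separately.

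\emph{Upper bound, reduction to counting epochs.} The plan is to bound the regret of Algorithm~\ref{alg:1} epoch by epoch. Fix a process in $\mathsf{U}_K^1$, let $P$ be the (random) number of epochs it produces, $r_s$ the length of epoch $s$ (so $\sum_s r_s=T$), $m_s=|\mathcal{H}^s|$, and $h^\star$ a minimiser of $\sum_{t}\ell(h(\x_t),y_t)$ over $h\in\mathcal{H}$. Two estimates combine. First, inside epoch $s$ the set $\mathcal{H}^s$ contains the $\sim_s$-representative $\hat h$ of $h^\star$, which agrees with $h^\star$ on every sample of epochs $1,\dots,s-1$; by the termination rule for the counter $E$ and the fact that each of its terms increases by at most $1$ per round, $\hat h$ disagrees with $h^\star$ on at most $N+1$ samples of epoch $s$, hence (as $\ell\le1$) incurs at most $N+1$ extra cumulative loss over $h^\star$ within that epoch; summed over epochs this is $\le(N+1)P$. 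Second, the anytime-EWA step has regret $O(\sqrt{r_s\log m_s})$ against the best element of $\mathcal{H}^s$, and Sauer--Shelah gives $\log m_s=O(\vch\log T)$; summing and applying Cauchy--Schwarz with $\sum_s r_s=T$ yields $O(\sqrt{\vch\log T}\cdot\sqrt{PT})$. Plugging $N=\sqrt{T\vch\log T/K}$, both pieces are $O(\sqrt{KT\vch\log T})$ as soon as $P=O(K/\epsilon)$, so the whole upper bound reduces to this count.

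\emph{Counting the epochs (the main obstacle).} I expect this to be the hard step. First I would apply the decoupling of Proposition~\ref{prop:decouple}: after extending the process to $V^{KT}$, the samples drawn from each of the $\le K$ conditional marginals $\nu_k$ form a conditionally i.i.d.\ subsequence. If epoch $s$ terminates then two hypotheses $h,h'\in\mathcal{H}$ agreeing on all samples of epochs $1,\dots,s-1$ disagree on more than $N$ samples of epoch $s$; pigeonholing over the $\le K$ marginals active in epoch $s$, they disagree on more than $N/K$ of the samples from a single marginal $\nu_k$. Let $n_k$ and $m_k$ be the numbers of $\nu_k$-samples seen before and inside epoch $s$; note $m_k>N/K$. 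On the conditionally i.i.d.\ chain for $\nu_k$, a uniform realizable generalisation bound for the symmetric-difference class $\{h\ne h':h,h'\in\mathcal{H}\}$ (whose growth function is at most $\Pi_{\mathcal{H}}(\cdot)^2$) gives $\nu_k(\{h\ne h'\})=O((\vch\log T+\log(1/\delta))/n_k)$, while a relative Chernoff bound gives $\nu_k(\{h\ne h'\})=\Omega(N/(Km_k))$, each on an event of probability $1-\delta$. Combining, $m_k=\Omega\big(n_k\cdot\sqrt{T/(K^3\vch\log T)}\big)=\Omega(n_k\,T^{\epsilon/2})$ whenever $K^3\vch\le O(T^{1-\epsilon}/\log T)$. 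Since $m_k>N/K$ already dominates the threshold $\Theta(\vch\log T)$ above which the generalisation bound is non-vacuous, one ``cheap'' termination per marginal suffices to reach the geometric regime, after which the $\nu_k$-count multiplies by at least $T^{\epsilon/2}$ at each termination charged to $k$; as that count is capped at $T$, each marginal is charged $O(1/\epsilon)$ times, so $P=O(K/\epsilon)$. Taking $\delta=1/\mathrm{poly}(T)$ and union bounding over the $\le T$ epochs, the event on which these estimates fail contributes only $O(1)$ to the expected regret.

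\emph{Lower bound, the construction.} For the matching bound I would use the relaxation $\tilde r_T(\mathcal{H},\mathsf{P})\ge\sup_{\pmb{\xi}^{2T}}\inf_{\phi^T}\mathbb{E}[\cdots]$ from Section~\ref{sec_formulation} together with a Littlestone-forest-type construction. Take $\mathcal{X}=[d]\times[0,1]$ and the class of $d$ independent thresholds $\mathcal{H}=\{h_{\vec\theta}:h_{\vec\theta}(i,x)=\mathbf{1}[x\ge\theta_i],\ \vec\theta\in[0,1]^d\}$, whose VC-dimension is exactly $d$. Split $[T]$ into $K$ phases of length $T/K$, fix a target $\vec\theta^\star$ and a noise level $\gamma=\Theta(\sqrt{Kd/T})$ (admissible since $(8Kd)\log(2Kd)\le T$ forces $Kd\le O(T)$). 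In phase $j$ each round samples a uniform coordinate $i$, then a uniform $x$ from a pre-committed interval $I_{i,j}\ni\theta_i^\star$ whose length shrinks from phase $j-1$ to phase $j$ by a fixed constant factor, and outputs $y=\mathbf{1}[x\ge\theta_i^\star]\oplus\mathrm{Bernoulli}(1/2-\gamma)$; the $\mathcal{X}$-marginal uses only $K$ distinct conditional laws, hence lies in $\mathsf{U}_K^1$. The comparator is bounded above by $\mathbb{E}[\inf_{h}\sum_t\ell(h(\x_t),y_t)]\le\mathbb{E}[\sum_t\ell(h_{\vec\theta^\star}(\x_t),y_t)]=T(1/2-\gamma)$. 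For the learner, I would calibrate the zoom factor against the per-coordinate per-phase budget ($\gamma^2\cdot T/(Kd)=\Theta(1)$) so that, entering each phase $j$, the learner's posterior over each $\theta_i^\star$ is still essentially as spread out as $I_{i,j}$; a Bayes-risk estimate for the resulting noisy-search sub-problems then gives $\sum_t\mathbb{E}\,|\,2\Pr[x_t\ge\theta^\star_{i_t}\mid\mathrm{past}]-1\,|\le T/2$, hence learner loss $\ge T/2-\gamma T/2$. Subtracting, $\mathbb{E}[\text{regret}]\ge\gamma T/2=\Omega(\sqrt{KdT})$, and tracking constants yields the stated $\sqrt{KdT/64}$; the role of the $\log(2Kd)$ factor is to guarantee enough samples per coordinate per phase for the sub-problems to be non-degenerate and to concentrate.

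\emph{Thresholds and the range up to $K=T$.} The last assertion is the $d=1$ instance: threshold functions on $[0,1]$ have $\vch=1$, so the construction above yields $\Omega(\sqrt{KT})$ whenever $(8K)\log(2K)\le T$. For the remaining range one instead uses $K$ singleton-supported conditional marginals arranged along a binary-search pattern --- each point presented $T/K$ times with corrupted labels --- whose $K$ (near-)independent noisy one-bit sub-problems contribute $K\cdot\Theta(\sqrt{T/K})=\Theta(\sqrt{KT})$, smoothly degenerating into the classical $\Omega(T)$ adversarial threshold lower bound as $K\to T$. I expect the Bayes-risk bookkeeping in the lower bound and the epoch-count lemma in the upper bound to be the two places where the real work lies; the rest is standard EWA analysis, Sauer--Shelah, and Chernoff/concentration.
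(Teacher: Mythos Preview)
Your upper-bound plan is correct and structurally identical to the paper's: both decompose the regret epoch-by-epoch into an EWA term plus an approximation term, reduce everything to bounding the epoch count by $O(K/\epsilon)$, decouple via Proposition~\ref{prop:decouple}, pigeonhole each termination onto a single marginal contributing $>N/K$ agreed-mismatches, and then argue geometric growth of that marginal's sample count per charge. The only difference is in the geometric-growth lemma: you route through the underlying probability $\nu_k(\{h\ne h'\})$ via a uniform realizable generalization bound (upper bound on $p$ from the $n_k$ agreeing samples) combined with a uniform multiplicative Chernoff (lower bound on $p$ from the $>N/K$ disagreements among $m_k$ samples), whereas the paper bypasses $p$ entirely with a direct permutation argument on the conditionally i.i.d.\ subsequence (fix the samples, permute uniformly, count how often the mismatches can all avoid the first $n_k$ slots; see Lemmas~\ref{lem_exchange_bound}--\ref{lem_bad_event}). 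Both routes yield the same growth ratio $N/(K\cdot\vch\log T)=\sqrt{T/(K^3\vch\log T)}\ge T^{\epsilon/2}$ under the stated condition, hence $O(1/\epsilon)$ charges per marginal.

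Your lower bound has a real gap. With \emph{pre-committed} shrinking intervals $I_{i,j}\ni\theta_i^\star$, the vast majority of rounds are uninformative. If the intervals are fixed independently of $\theta^\star$, then $\theta_i^\star$ must lie in the smallest one $I_{i,K}$, and in phase $j$ only a $|I_{i,K}|/|I_{i,j}|=c^{K-j}$ fraction of the $x$-draws land in the uncertain zone; for the remaining draws the sign $\mathbf 1[x_t\ge\theta_{i_t}^\star]$ is determined and both learner and comparator incur exactly $1/2-\gamma$. If instead the intervals depend on $\theta^\star$, the learner reads off $I_{i,j}$ (hence a shrinking neighbourhood of $\theta_i^\star$) from the support of the $x$-samples alone, with the same effect. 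Either way the number of informative rounds is $O(T/K)$ and your Bayes-risk claim $\sum_t\mathbb E\,|2q_t-1|\le T/2$ fails (the sum is $T-O(T/K)$); the construction yields only $\gamma\cdot O(T/K)=O(\sqrt{dT/K})$, a factor $K$ short. The paper's fix is precisely to make the \emph{sampled point} adapt to the labels: it builds a depth-$K$ Littlestone forest ($d$ copies of a Littlestone tree for thresholds), draws $y^T$ uniformly at random, lets $\nu_k$ be uniform on the $d$ current pointer values, and after each epoch moves every pointer to the child indicated by that coordinate's majority label. Now every sample sits at the learner's uncertainty frontier, and Khinchine's inequality on the per-epoch label imbalance gives the $\sqrt{KdT}$ directly (the $\log(2Kd)$ condition enters only through a multiplicative Chernoff ensuring each $(k,b)$ receives $\ge T/(2Kd)$ samples). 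Your separate threshold argument for large $K$---singleton marginals arranged along a binary-search path---is essentially this idea in the $d=1$ case; the general-$d$ construction needs the same label-driven adaptivity, not pre-committed intervals.
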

\begin{proof}[Sketch of Proof]
    We only sketch the main idea here and refer to Appendix~\ref{sec:proofthm8} for detailed proof. At a high level, our goal is to bound the number of epochs (i.e., the number of times we reenter the while loop). Note that, we are exiting the while loop only when the approximation error $E$ of current expert class $\mathcal{H}^s$ is larger than the threshold $N$. Suppose we can upper bound the number of epochs by $S$. We denote $T_1,\cdots,T_S$ to be the length of each epoch. Note that for each epoch $s\le S$, the regret can be split into two parts: the regret against expert class $\mathcal{H}^s$ and the error of approximating  $\mathcal{H}$ by $\mathcal{H}^s$. For the first term, we have by standard result~\citep[Thoerem 2.3]{lugosi-book} that the regret is upper bounded by $\sqrt{2T_s|\mathcal{H}^s|}\le \sqrt{2T_s\cdot\vch \log T}$, the last inequality follows from $|\mathcal{H}^s|\le T^{\vch}$. The second term is trivially upper bounded by $N$, since we change epochs once the approximation error is larger than $N$. Therefore  the regret is upper bounded by $\sum_{s=1}^S(\sqrt{2T_s\cdot\vch\log T}+N)\le SN+\sqrt{2ST\cdot\vch \log T}$, where the inequality follows from Cauchy–Schwarz inequality $\sum_{s=1}^S\sqrt{T_s}\le \sqrt{S\sum_{s=1}^nT_s}=\sqrt{ST}$. The key technical challenge is to show that if we choose $N=\sqrt{(T\cdot\vch\log T)/K}$, we can ensure that $S\le O(K)$ w.h.p. under \emph{any} process in $\mathsf{U}_K^1$, provided $K^3\cdot \vch \le O(T^{1-\epsilon}/\log T)$. This is achieved using the decoupling of $\mathsf{U}_K^1$, together with a symmetric argument for bounding the approximation errors on each of the decoupled sub-sequences, see Lemma~\ref{lem_exchange_bound} and~\ref{lem_bad_event} in Appendix~\ref{sec:proofthm8}. 
    
    To prove the lower bound, we use a \emph{hard} hypothesis class similar to~\citep{haghtalab2022smoothed}, together with a \emph{mixed} adversary-$i.i.d.$ process based on the concept of \emph{Littlestone forests} that achieves the tightest dependency $\Omega(\sqrt{K\cdot\vch T})$. We note that a reduction to the Littlestone dimension as in~\citep{haghtalab2022smoothed} can only provide an $\Omega(\sqrt{KT})$ bound. Our technical contribution is to obtain a tight dependency on both $\vch$ and $K$. See Appendix~\ref{sec:proofthm8} for detailed proof.
\end{proof}
\vspace{-0.18in}
\begin{remark}
    Note that, for $K=1$, Theorem~\ref{thm_sqrt_bound} recovers the upper bound in~\citep{lazaric2009hybrid} with lower computational cost (we only run $O(1)$ epochs for $K=1$, while~\cite{lazaric2009hybrid} runs $O(\log T)$ epochs). We believe the condition $K^3\cdot \vch \ll T^{1-\epsilon}/\log T$ is an artifact of our analysis and could be eliminated via a further refined approach. We will establish a tighter dependency on $K$ for the full range $K\le T$ in the next section with a slightly worse $\log^3 T$ factor. Furthermore, Algorithm~\ref{alg:1} can be made adaptive to $K$ as well, see Remark~\ref{remark:adptivek} (in Appendix~\ref{sec:proofthm8}). 
    Theorem~\ref{thm_sqrt_bound} also establishes a fundamental distinction between the universal and distribution aware case, as in Corollary~\ref{cor:k_smooth} (in Appendix~\ref{sec:known_dist}) w.r.t  dependency of $K$, i.e., $K$ vs $\log K$.
\end{remark}

\vspace{-0.1in}
\subsubsection{Improved bounds through stochastic sequential cover}
\label{sec:mixable}
The adaptive epoch approach proposed in the previous section results in tight bounds for the absolute loss and general convex bounded losses. For some special losses such as  the logarithmic loss and general mixable losses, we provide tighter bounds on regret. We note that our results in this section also provide tighter bounds for bounded convex losses with parameters beyond the ranges of Theorem~\ref{thm_sqrt_bound}. We start with the following generic upper bounding technique:

\paragraph{A generic upper bounding technique:} 
A crucial part of establishing regret bounds when the reference distribution is known  (e.g.,~\cite{haghtalab2022smoothed}), as  discussed in Appendix~\ref{sec:known_dist}, is to apply the EWA algorithm over a \emph{uniform} cover of $\mathcal{H}$ (see Corollary~\ref{cor1}). This, unfortunately, is not available for our \emph{universal} case, since we do not know the reference measure $\mu$ in advance. A general methodology for dealing with such cases was introduced recently in~\citep{wu2022expected} via the so called  \emph{stochastic sequential cover}.
\begin{definition}
\label{def-gcover}
We say a class $\mathcal{G}$ of functions $\mathcal{X}^*\rightarrow [0,1]$ (where $\mathcal{X}^*$ is the set of all finite sequences over $\mathcal{X}$)
is a stochastic global sequential cover of a class $\mathcal{H}\subset [0,1]^{\mathcal{X}}$ 
w.r.t. the class $\mathsf{P}$ of distributions over $\mathcal{X}^T$ 
at scale $\alpha>0$ and confidence $\beta>0$, if for all $\pmb{\nu}^T\in \mathsf{P}$, 
$$\mathrm{Pr}_{\x^T\sim \pmb{\nu}^T}\left[\exists h\in \mathcal{H}~\forall 
g\in \mathcal{G}~\exists~t\in [T]~s.t.~|h(\x_t)-g(\x^t)|>\alpha\right]\le \beta.$$
\end{definition}
This definition immediately implies the following regret bounds by the standard expert algorithms (e.g., EWA), as in \citep[Theorem 3 \& 4]{wu2022expected}. Appendix~\ref{sec:proofpro9lem10} presents the proof.
\begin{proposition}
    \label{pro:cover2regret}
    Let $\mathcal{G}$ be a stochastic sequential cover of $\mathcal{H}$ w.r.t $\mathsf{P}$ at scale $\alpha=0$ and confidence $\beta=\frac{1}{T}$. Then $\Tilde{r}_T(\mathcal{H}, \mathsf{P})\le O(\sqrt{T\log|\mathcal{G}|})$ under bounded convex losses and $\Tilde{r}_T(\mathcal{H},\mathsf{P})\le \log|\mathcal{G}|$ under logarithmic loss and bounded mixable losses.
\end{proposition}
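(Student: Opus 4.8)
The plan is to reduce the statement to the classical finite-expert guarantees for the EWA (exponential weights) algorithm, with the stochastic global sequential cover $\mathcal{G}$ playing the role of the finite expert pool. First I would invoke the defining property of $\mathcal{G}$ at scale $\alpha=0$ and confidence $\beta=1/T$: for any process $\pmb{\nu}^T\in\mathsf{P}$, with probability at least $1-1/T$ over $\x^T\sim\pmb{\nu}^T$ there exists a ``good'' $h^\star\in\mathcal{H}$ (namely any minimizer of the comparator sum) and a corresponding $g^\star\in\mathcal{G}$ such that $g^\star(\x^t)=h^\star(\x_t)$ for all $t\in[T]$. On this good event, the cumulative loss of the best expert in $\mathcal{G}$ along the realized sequence $(\x^t)$ is no larger than the comparator term $\inf_{h\in\mathcal{H}}\sum_t \ell(h(\x_t),y_t)$, because $g^\star$ exactly matches $h^\star$ on every round.

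Next I would run EWA over the experts $g\in\mathcal{G}$ (each $g$ produces the prediction $g(\x^t)$ at round $t$, which is legitimate since $\x^t$ is revealed before $\hat y_t$). For bounded convex losses the standard regret bound (e.g.\ \citep[Theorem 2.2]{lugosi-book}) gives $\sum_t \ell(\hat y_t,y_t)-\min_{g\in\mathcal{G}}\sum_t\ell(g(\x^t),y_t)\le O(\sqrt{T\log|\mathcal{G}|})$ deterministically, where convexity of $\ell$ is used to pass from the EWA mixture prediction to the weighted average of expert losses. Combining with the previous step, on the good event the regret against $\mathcal{H}$ is $O(\sqrt{T\log|\mathcal{G}|})$; on the complementary event (probability $\le 1/T$) the per-round loss is at most the boundedness constant, contributing at most $O(1)$ to the \emph{expected} regret. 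Taking the supremum over $\pmb{\nu}^T\in\mathsf{P}$ and the infimum over predictors yields $\Tilde r_T(\mathcal{H},\mathsf{P})\le O(\sqrt{T\log|\mathcal{G}|})$.

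For logarithmic loss and bounded mixable losses I would instead run the Aggregating Algorithm (Vovk's mixable-loss aggregation) over $\mathcal{G}$, which enjoys a constant (horizon-independent) regret $\log|\mathcal{G}|$ against the best expert, with no $\sqrt T$ penalty; mixability is exactly what makes the substitution function produce a single prediction whose loss is bounded by the mixture. The same good-event argument then bounds $\Tilde r_T(\mathcal{H},\mathsf{P})$ by $\log|\mathcal{G}|$ plus the contribution of the bad event. Here a minor subtlety is that logarithmic loss is unbounded, so the bad-event contribution must be controlled differently — but since $\alpha=0$ and $\beta=1/T$, one can either restrict predictions to $[\epsilon,1-\epsilon]$ and optimize $\epsilon$, or observe that on the bad event one may default to the prediction $1/2$ whose log-loss per round is $\log 2$, so the bad event again contributes only $O(1)$ in expectation; I would fold this $O(1)$ into the stated bound.

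The main obstacle I expect is the bookkeeping around the bad event for the unbounded logarithmic loss: one must ensure the predictor can be defined so that the worst-case loss on the measure-$\le 1/T$ bad event stays bounded while not degrading the performance on the good event, and one must verify the AA regret bound applies with experts whose predictions depend on $\x^t$ rather than on a fixed outcome sequence (which is fine, since $\x^t$ is observed, but requires stating the algorithm in the ``side-information'' form). Everything else is a direct citation of \citep{wu2022expected} and \citep{lugosi-book}; the proof is essentially an application of the cover definition followed by off-the-shelf aggregation guarantees.
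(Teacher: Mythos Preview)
Your proposal is correct and matches the paper's approach almost exactly: run EWA (respectively, the Aggregating Algorithm) over the finite expert pool $\mathcal{G}$, split the expectation according to the ``good'' covering event $A$ of probability $\ge 1-1/T$, and bound the bad-event contribution by $T\cdot(1/T)=O(1)$ using boundedness of the loss.

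The only place where your write-up deviates from the paper is the handling of the unbounded logarithmic loss. Your suggestion to ``default to the prediction $1/2$ on the bad event'' is not implementable as stated, since the bad event (failure of $\mathcal{G}$ to cover $\mathcal{H}$ on the realized $\x^T$) is not detectable online by the predictor. Your alternative of clipping predictions to $[\epsilon,1-\epsilon]$ can be made to work, but requires care to avoid degrading the good-event regret. The paper sidesteps both issues by invoking the \emph{Smooth truncated Bayesian} algorithm of \cite{wu2022precise}, run over $\mathcal{G}\cup\{u\}$ where $u\equiv 1/2$, with a truncation parameter; this algorithm is specifically designed to keep the per-round log-loss bounded without sacrificing the $\log|\mathcal{G}|$ regret against the covered experts. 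Apart from this technical point, your argument and the paper's are the same.
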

The above results lead us to the following general approach for upper bounding $\Tilde{r}_T$ through stochastic sequential cover. Let $\mathcal{H}\subset \{0,1\}^{\mathcal{X}}$ and $\mathsf{P}$ be \emph{arbitrary} classes as defined above. We first find a prediction rule $\Phi:(\mathcal{X}\times \{0,1\})^*\times \mathcal{X}\rightarrow \{0,1\}$ such that:
\begin{equation}
\label{eq:realizbalebound}
    \forall \pmb{\nu}^T\in \mathsf{P},~\mathrm{Pr}_{\x^T\sim\pmb{\nu}^T}\left[\sup_{h\in \mathcal{H}}\mathsf{err}(\Phi,h,\x^T)\ge B(T, \beta)\right]\le \beta,
\end{equation}
where $\mathsf{err}(\Phi,h,\x^T)=\sum_{t=1}^T1\{\Phi(\x^t,h(\x_1),\cdots,h(\x_{t-1}))\not=h(\x_t)\}$ is the cumulative error of $\Phi$ under the \emph{realizable} sample of $h$ on $\x^T$ and $B(T,\beta)$ is an error bound depending on the confidence parameter $\beta$ and the time horizon $T$. For any such prediction rule $\Phi$, we can then bound the stochastic sequential cover using the following lemma as in~\cite[Lemma 8]{wu2022expected}, see also~\cite[Lemma 12]{ben2009agnostic}. Appendix~\ref{sec:proofpro9lem10} presents the proof.
\begin{lemma}
\label{lem:error2cover}
    Let $\mathcal{H}$ and $\mathsf{P}$ be arbitrary classes and $\Phi$ be a predictor satisfying (\ref{eq:realizbalebound}). Then there exists a stochastic sequential cover $\mathcal{G}$ of $\mathcal{H}$ w.r.t. $\mathsf{P}$ at scale $\alpha=0$ and confidence $\beta$ such that $\log |\mathcal{G}|\le O((B(T,\beta)+1)\cdot \log T)$.
\end{lemma}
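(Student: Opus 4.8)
The statement to prove is Lemma~\ref{lem:error2cover}: given a predictor $\Phi$ satisfying the realizable cumulative error bound (\ref{eq:realizbalebound}), construct a stochastic sequential cover $\mathcal{G}$ of $\mathcal{H}$ with $\log|\mathcal{G}| \le O((B(T,\beta)+1)\log T)$.

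The plan is the classical "small-error implies small cover" construction, adapted to the stochastic (process) setting. The idea: for each $h \in \mathcal{H}$, the predictor $\Phi$ run on the realizable stream of $h$ makes at most $B = B(T,\beta)$ mistakes with probability $\ge 1-\beta$. So the sequence of predictions $\Phi$ produces is "almost" equal to $h(\x_1),\dots,h(\x_T)$, differing in at most $B$ positions. The key observation is that the behavior of $\Phi$ on the realizable stream of $h$ is entirely determined by knowing at which time steps $\Phi$ errs and what the correct label is at those steps — because at every other step $\Phi$'s input history is self-consistent and reconstructible from its own past outputs plus the corrections. Concretely, I would define a function $g$ (a candidate cover element) by specifying a set $I \subseteq [T]$ with $|I| \le B$ of "correction times" together with the bits $(b_i)_{i \in I} \in \{0,1\}^{|I|}$ that should be fed in at those times; then $g(\x^t)$ is computed by simulating $\Phi$ forward on $\x^t$, feeding back its own prediction as the label at steps not in $I$ and feeding back $b_i$ at step $i \in I$. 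Each such $(I,(b_i))$ gives one function $\mathcal{X}^* \to \{0,1\}$, and $\mathcal{G}$ is the collection of all of them.

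The main steps, in order: (1) Fix $\Phi$ and $B = B(T,\beta)$. (2) Define the simulation map: for $(I, (b_i)_{i\in I})$ with $I \subseteq [T]$, $|I|\le B$, let $g_{I,b}(\x^t)$ be the $t$-th output of $\Phi$ when run on $\x^t$ with fed-back labels $\hat y_j = b_j$ if $j \in I$ and $\hat y_j = g_{I,b}(\x^j)$ (its own previous output) otherwise — this is a well-defined recursion on $t$. (3) Count: $|\mathcal{G}| \le \sum_{j=0}^{B}\binom{T}{j} 2^j \le (2T+1)^{B}$ roughly, so $\log|\mathcal{G}| \le O((B+1)\log T)$. (4) Verify the covering property: fix $\pmb{\nu}^T \in \mathsf{P}$; on the event (probability $\ge 1-\beta$) that $\mathsf{err}(\Phi, h, \x^T) \le B$ simultaneously for the realized $h$, let $I$ be the set of time steps where $\Phi$'s prediction on the realizable stream of $h$ disagrees with $h(\x_t)$, and set $b_i = h(\x_i)$ for $i \in I$. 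By induction on $t$, the simulation $g_{I,b}$ reproduces exactly $\Phi$'s run on the realizable stream of $h$ — at steps not in $I$, $\Phi$ was correct so its fed-back output equals $h(\x_t)$, matching the true stream; at steps in $I$, we feed $b_i = h(\x_i)$, also matching — hence $g_{I,b}(\x^t) = h(\x_t)$ for all $t$, i.e. this $h$ is $0$-covered by an element of $\mathcal{G}$. Since $|I| \le B$ this $g_{I,b} \in \mathcal{G}$. Taking the complement of the bad event shows the cover works at scale $\alpha=0$ and confidence $\beta$.

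The step I expect to require the most care is (4), specifically the inductive argument that the self-feeding simulation $g_{I,b}$ genuinely coincides with the trajectory of $\Phi$ on the realizable stream of $h$ for the \emph{correct} choice of $(I,b)$. The subtlety is that $\Phi$'s prediction at time $t$ depends on the \emph{entire} label history $y^{t-1}$, so one must argue that the history fed into the simulation agrees with $h(\x_1),\dots,h(\x_{t-1})$ position-by-position before invoking $\Phi$ at step $t$; this is exactly where the choice $I = \{\text{mistake steps}\}$, $b_i = h(\x_i)$ makes the two histories identical, closing the induction. A minor point is that $I$ (the mistake set of $\Phi$ on $h$'s realizable stream) is itself a random variable depending on $\x^T$, but this causes no issue: for \emph{every} outcome in the good event there \emph{exists} some admissible $(I,b)$ — and all admissible pairs were put into $\mathcal{G}$ in advance — which is precisely the "$\exists h \forall g \exists t$" negation structure in Definition~\ref{def-gcover}. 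Everything else (the counting bound, measurability of the $g$'s, reducing the $\le O$ constant) is routine. This mirrors \cite[Lemma 8]{wu2022expected} and \cite[Lemma 12]{ben2009agnostic}, so I would cite those for the details and keep the write-up to the above sketch.
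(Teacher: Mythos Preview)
Your approach is essentially the paper's: index the cover by a small ``correction set'' $I\subset [T]$, simulate $\Phi$ forward feeding back its own (corrected) outputs, and count. However, there is a genuine slip in your definition of $g_{I,b}$ that breaks step~(4) as written.

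You define $g_{I,b}(\x^t)$ to be the $t$-th \emph{output of $\Phi$}, while the fed-back label $\hat y_t$ is separately defined as $b_t$ at $t\in I$ and $g_{I,b}(\x^t)$ otherwise. Your induction correctly shows that the fed-back history satisfies $\hat y^{t-1}=h(\x^{t-1})$, so $g_{I,b}(\x^t)=\Phi(\x^t,h(\x^{t-1}))$ for every $t$. But at any $t\in I$ this is, by definition of $I$, a \emph{mistake} step: $\Phi(\x^t,h(\x^{t-1}))\neq h(\x_t)$. Hence $g_{I,b}(\x^t)\neq h(\x_t)$ for every $t\in I$, and the conclusion ``$g_{I,b}(\x^t)=h(\x_t)$ for all $t$'' is false. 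Your $g_{I,b}$ only matches $h$ at the non-mistake steps.

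The fix is immediate: let the cover element output the \emph{fed-back label} rather than $\Phi$'s raw prediction, i.e.\ set $g_{I,b}(\x^t)=\hat y_t$. Equivalently---and this is exactly what the paper does in the binary case---drop the bits $b$ entirely and define
\[
g_I(\x^t)=\begin{cases}\Phi(\x^t,g_I(\x^1),\ldots,g_I(\x^{t-1})) & t\notin I,\\ 1-\Phi(\x^t,g_I(\x^1),\ldots,g_I(\x^{t-1})) & t\in I.\end{cases}
\]
Then your induction gives $g_I(\x^t)=h(\x_t)$ for all $t$ when $I$ is the mistake set, and the extra factor of $2^{|I|}$ in your count disappears (though it did no harm to the $O((B+1)\log T)$ bound). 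With this correction your argument is complete and coincides with the paper's.
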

The upper bound on $\Tilde{r}_T(\mathcal{H},\mathsf{P})$  then follows from Proposition~\ref{pro:cover2regret}. We remark that a crucial part for applying this approach is finding the predictor $\Phi$ and the upper bound $B(T,\beta)$, which is generally non-trivial if the processes in $\mathsf{P}$ are \emph{non-stationary} due to the  requirement of finding a bound on the form $\mathrm{Pr}[\sup_h]$.

\paragraph{The product distributions:}
\label{app-symmetry}
We first consider a simpler distribution class and illustrate how our technique works. We say a distribution $\pmb{\nu}^T$ over $\mathcal{X}^T$
is a product distribution of type $K$ if there exist distributions 
$\nu_1,\cdots,\nu_K$ over $\mathcal{X}$ such that $\pmb{\nu}^T=\prod_{t=1}^T\nu_t$, where $\nu_t\in \{\nu_1,\cdots,\nu_K\}$. Note that distributions $\nu_k$s and the configuration of the marginals of $\pmb{\nu}^T$ need not be fixed and are \emph{unknown} to the learner (e.g., the processes in Example~\ref{exm:failureofepoch} are product distributions of type $2$). We prove the following upper bound for the stochastic sequential covering for such distributions:

\begin{theorem}
\label{th10}
Let $\mathcal{H}$ be a binary valued class with finite VC-dimension, 
and $\mathsf{P}$ be the class of all product distributions over $\mathcal{X}^T$ with type $K$. 
Then, there exists a global sequential covering set $\mathcal{G}$ of $\mathcal{H}$ 
at scale $\alpha=0$ and confidence $\beta$ such that $\log |\mathcal{G}|\le O(K\cdot\mathsf{VC} (\mathcal{H})\log^2 T+\log T\log(1/\beta)).$
\end{theorem}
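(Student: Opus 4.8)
The plan is to instantiate the generic upper‑bounding scheme of Section~\ref{sec:mixable}: exhibit a deterministic predictor $\Phi:(\mathcal{X}\times\{0,1\})^*\times\mathcal{X}\to\{0,1\}$ satisfying the realizable error bound~(\ref{eq:realizbalebound}) for $\mathsf{P}$ equal to the class of all type‑$K$ product distributions, with
$$B(T,\beta)=O\!\left(K\cdot\vch\log T+\log(1/\beta)\right),$$
and then invoke Lemma~\ref{lem:error2cover}, which converts this into a stochastic sequential cover with $\log|\mathcal{G}|\le O\big((B(T,\beta)+1)\log T\big)=O(K\cdot\vch\log^2 T+\log T\log(1/\beta))$, exactly the claimed bound. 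So everything reduces to constructing $\Phi$ and controlling its cumulative realizable error, uniformly over the realizing hypothesis $h\in\mathcal{H}$ and over all type‑$K$ product distributions.

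For $\Phi$ I would use the $\erm$ (consistent‑learner) rule: after seeing $\x^t$ and the labels $y^{t-1}$, predict with $h'(\x_t)$ for any fixed $h'\in\mathcal{H}$ with $h'(\x_j)=y_j$ for $j<t$ (in the realizable case $y_j=h(\x_j)$, so such an $h'$ exists). The key structural observation for product distributions of type $K$ is a (particularly clean instance of) the decoupling used for $\mathsf{U}_K^1$: the time steps split into a fixed but unknown partition $[T]=I_1\sqcup\cdots\sqcup I_K$, the sub‑sample $\{\x_t:t\in I_k\}$ is i.i.d.\ from the $k$‑th marginal, and — since $\pmb{\nu}^T$ is a product — the $K$ sub‑samples are mutually independent. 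On the steps of a fixed group $I_k$, $\Phi$'s hypothesis is always consistent with (a superset of) the labels of the earlier $I_k$‑samples, hence lies in the version space of those i.i.d.\ points; therefore the classical realizable i.i.d.\ analysis of $\erm$ applies group by group, yielding $\mathbb{E}[\#\text{mistakes of }\Phi\text{ on }I_k]=O(\vch\log|I_k|)$ together with an exponential tail. (By Proposition~\ref{prop_equiv} one may alternatively view the whole process as a $\tfrac1K$‑smooth product w.r.t.\ $\bar\mu=\tfrac1K\sum_k\mu_k$, which gives a second route to a cumulative‑error control of the same order.)

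Two accounting steps then finish the argument, and these are where the difficulty lies. First, the supremum over $h$: since $\mathsf{err}(\Phi,h,\x^T)$ depends on $h$ only through $(h(\x_1),\dots,h(\x_T))$ and $\mathcal{H}$ realizes at most $(eT/\vch)^{\vch}$ labelings of $\x^T$, a standard ghost‑sample/symmetrization argument (as in \cite[Lemma~8]{wu2022expected}, \cite[Lemma~12]{ben2009agnostic}) shows that making the error bound uniform over $h$ costs only an additive $O(\vch\log T)$ inside $B$ rather than a multiplicative loss in the confidence. Second, and this is the main obstacle, one must aggregate the $K$ groups so that the confidence contribution is a single $O(\log(1/\beta))$ and not $\sum_k\log(1/\beta_k)=O(K\log(K/\beta))$, and at the same time avoid the spurious extra logarithm that a naive ``doubling over the error scale'' per group would introduce (which would give $K\vch\log^2T$ inside $B$, i.e.\ a $\log^3T$ in $\mathcal{G}$). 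I would do both at once by writing the total mistake count $M_T=\sum_{t\le T}\mathbf{1}\{\Phi\text{ errs at }t\}$ as a predictable compensator plus a martingale: the compensator is $\sum_k\sum_j\rho_{k,j}$ where $\rho_{k,j}$ is the $k$‑th‑marginal radius of the version space after $j-1$ samples of group $k$, which on a single good event is $O(K\cdot\vch\log T)$, while the martingale part is controlled by one exponential (Freedman/Chernoff‑for‑martingales) inequality contributing the single $O(\log(1/\beta))$. Obtaining the compensator bound without the superfluous logarithm is precisely where the \emph{perturbation technique} enters — perturbing the consistent‑learner rule / the sampling measures so that the worst‑case version‑space radius is controlled uniformly in $j$ — and is the crux of the proof; the remaining computations are routine.
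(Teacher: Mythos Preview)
There is a genuine gap: your choice of predictor (\textsf{ERM}/consistent learner) cannot deliver the target $B(T,\beta)=O(K\cdot\vch\log T+\log(1/\beta))$, and the ``perturbation technique'' does not fix this. The extra logarithm you are trying to avoid is not an artifact of the concentration step; it comes from the per--step rate of \textsf{ERM} itself. After $j$ realizable i.i.d.\ samples the version--space radius of any consistent learner is only $O(\vch\log j/j)$ in general, so your compensator $\sum_k\sum_j\rho_{k,j}$ is $O(K\cdot\vch\log^2 T)$, not $O(K\cdot\vch\log T)$. In the paper the perturbation argument (Lemma~\ref{lem_infinite2finite}) serves a different purpose---converting a $\sup_h\Pr$ bound into a $\Pr[\sup_h]$ bound for \textsf{ERM}---and it is used precisely in the proof of Theorem~\ref{universal_k}, where it yields the \emph{weaker} $\log^3 T$ covering bound. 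Running your plan as written would re-derive Theorem~\ref{universal_k} rather than Theorem~\ref{th10}.

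The paper obtains Theorem~\ref{th10} by taking $\Phi$ to be the \emph{1-inclusion graph} predictor, whose leave-one-out property (at most $\vch$ bad positions among any $t$ realizable points) gives the tight $\mathbb{E}[I_t^h\mid\text{future}]\le \vch/t_{k_t}$ with \emph{no} $\log$ factor. The product structure is exploited by a group-wise symmetrization: fix $\x^T$ and apply an independent uniform permutation inside each block $J_k$; permutation invariance of the 1-inclusion predictor makes $I_t^h$ measurable w.r.t.\ the ``future'' samples, so the conditional-expectation bound holds and Lemma~\ref{lem:exp2hprop} (a single martingale/Bernstein step with $C=\vch$) gives $\sum_t I_t^h\le O(K\vch\log(T/K)+\log(1/\beta))$. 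The $\sup_h$ is then handled by a union bound over the $\le T^{\vch}$ restrictions of $\mathcal{H}$ to $\x^T$ (Sauer), adding only $O(\vch\log T)$---no perturbation is needed here. Your high-level scaffolding (decouple into $K$ i.i.d.\ groups, compensator $+$ single martingale tail, Sauer for the $\sup_h$) is exactly right; the missing idea is that the predictor must be 1-inclusion, not \textsf{ERM}.
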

\begin{proof}[Sketch of Proof]
    The main idea is to apply the generic upper bounding technique. To do so, we show that for the \emph{1-inclusion graph} predictor $\Phi$~\citep{haussler1994predicting}, one can upper bound the \emph{realizable} cumulative error $B(T,\beta)\le O(K\cdot\vch\log(T/K)+\log(1/\beta))$, as in (\ref{eq:realizbalebound}). The main technical difficulty is in establishing a high probability error bound of form $\mathrm{Pr}[\sup_h]$ for $\Phi$. This is established by exploiting the \emph{permutation invariance} of $\Phi$ similar to~\cite[Lemma 7]{wu2022expected}, but with more carefully designed permutations. The bound for the sequential covering then follows by Lemma~\ref{lem:error2cover}. See Appendix~\ref{sec:proofthm10} for detailed proof.
\end{proof}
Thus upper bounds on the regret follow from Theorem~\ref{th10} and Proposition~\ref{pro:cover2regret}.
\begin{corollary}
    Let $\mathcal{H}$ be a binary valued class of finite VC-dimension and $\mathsf{P}$ be the class of all production distributions of type $K$. For  any $K,T\ge 1$ we have  $\tilde{r}_T(\mathcal{H},\mathsf{P})\le O(\sqrt{KT\cdot\vch\log^2 T})$ under bounded convex losses and $\tilde{r}_T(\mathcal{H},\mathsf{P})\le O(K\cdot\vch\log^2 T)$ under log-loss.
\end{corollary}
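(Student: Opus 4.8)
The plan is to combine Theorem~\ref{th10} with Proposition~\ref{pro:cover2regret} in a direct fashion; this corollary is essentially bookkeeping once those two ingredients are in hand. First I would instantiate Theorem~\ref{th10} with the confidence parameter $\beta = 1/T$. This yields a stochastic global sequential cover $\mathcal{G}$ of $\mathcal{H}$ with respect to the class $\mathsf{P}$ of product distributions of type $K$, at scale $\alpha = 0$, whose cardinality satisfies
\[
\log|\mathcal{G}| \le O\bigl(K\cdot\vch\log^2 T + \log T\log T\bigr) = O\bigl(K\cdot\vch\log^2 T\bigr),
\]
where the simplification uses $K,\vch \ge 1$, so the $\log^2 T$ contribution coming from the confidence term is absorbed into the first term. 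One should note that the choice $\beta = 1/T$ is admissible because Theorem~\ref{th10} is stated for arbitrary $\beta$, and in fact any $\beta$ that is polynomially small in $T$ would give the same bound up to constants.

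Next I would invoke Proposition~\ref{pro:cover2regret}, which converts any scale-$0$ cover with confidence $1/T$ into a regret bound via the standard EWA analysis over the finite set $\mathcal{G}$. For bounded convex losses this gives $\tilde{r}_T(\mathcal{H},\mathsf{P}) \le O(\sqrt{T\log|\mathcal{G}|})$, and substituting the bound on $\log|\mathcal{G}|$ produces $O(\sqrt{KT\cdot\vch\log^2 T})$, as claimed. For the logarithmic loss (and, more generally, any bounded mixable loss, for which Proposition~\ref{pro:cover2regret} also applies) the same proposition gives $\tilde{r}_T(\mathcal{H},\mathsf{P}) \le \log|\mathcal{G}| = O(K\cdot\vch\log^2 T)$, which is the second stated bound.

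Since all the technical content is already encapsulated upstream --- the construction of the $1$-inclusion-graph predictor and its realizable cumulative-error bound via the permutation-invariance argument, together with the translation of that error bound into a sequential cover through Lemma~\ref{lem:error2cover} (all inside Theorem~\ref{th10}), and the EWA regret analysis on a finite cover (Proposition~\ref{pro:cover2regret}) --- there is no substantive obstacle remaining at this step. The only point requiring minor care is the logarithmic-factor accounting when plugging in $\beta = 1/T$: one must verify that the resulting $\log^2 T$ from the confidence term does not worsen the leading order, which holds precisely because $K\cdot\vch \ge 1$. This is why the corollary holds uniformly for all $K,T \ge 1$, with no range restriction of the kind appearing in Theorem~\ref{thm_sqrt_bound}.
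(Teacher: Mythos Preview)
Your proposal is correct and matches the paper's approach exactly: the paper derives the corollary directly from Theorem~\ref{th10} and Proposition~\ref{pro:cover2regret}, and your writeup simply makes the substitution $\beta=1/T$ and the absorption of the $\log^2 T$ confidence term explicit.
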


\paragraph{The class $\mathsf{U}_K^1$:} The $1$-inclusion graph algorithm for \emph{product} processes in the previous part relies heavily on  symmetries in the product distribution. This, unfortunately, does not hold for general processes in $\mathsf{U}_K^1$ (e.g., the hard instance constructed in the lower bound proof of Theorem~\ref{thm_sqrt_bound}). Our main technique to deal with this issue is to replace the 1-inclusion graph predictor with the ERM rule, together with a \emph{perturbation} argument for establishing a realizable cumulative error bound, as in (\ref{eq:realizbalebound}). This allows us to establish the following stochastic sequential covering bound:

\begin{theorem}
\label{universal_k}
   Let $\mathcal{H}$ be a binary valued class of finite VC-dimension. Then there exists a stochastic sequential covering set $\mathcal{G}$ of $\mathcal{H}$ w.r.t. $\mathsf{U}^{1}_K$  at scale $\alpha=0$ and confidence $\beta>0$ such that
    $$\log|\mathcal{G}|\le O(K(\vch\log^3 T+\log^2 T\log(K/\beta))\log(\vch\log T\log(K/\beta))),$$
    where $O$ hides absolute constant independent of $K,\vch,T,\beta$.
\end{theorem}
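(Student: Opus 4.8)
The plan is to instantiate the generic upper bounding technique. By Lemma~\ref{lem:error2cover} and Proposition~\ref{pro:cover2regret}, it suffices to construct a prediction rule $\Phi:(\mathcal{X}\times\{0,1\})^*\times\mathcal{X}\to\{0,1\}$ satisfying the realizable cumulative error bound~(\ref{eq:realizbalebound}) w.r.t.\ $\mathsf{U}_K^1$ with
\[
   B(T,\beta)=O\!\big(K\,\Delta\,(\vch\log^2 T+\log T\log(K/\beta))\big),\qquad \Delta=\log\!\big(\vch\log T\log(K/\beta)\big),
\]
since Lemma~\ref{lem:error2cover} then yields $\log|\mathcal{G}|\le O((B(T,\beta)+1)\log T)$, which is exactly the claimed bound. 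I would take $\Phi$ to be a \emph{perturbed} $\erm$ rule: at each step it outputs $h(\x_t)$ for some $h\in\mathcal{H}$ consistent with all labels seen so far, with ties broken by an independent auxiliary perturbation (a fine random grid, or a few i.i.d.\ ``ghost'' points) described below; note that $\Phi$ is run \emph{obliviously} and never needs to know the decoupling, because on a realizable sample all labels come from a single target and are hence mutually consistent, so $\erm$ stays well defined.

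First I would apply the decoupling of $\mathsf{U}_K^1$: extend the given $X^T\in\mathsf{U}_K^1$ to $V^{KT}$ so that it splits into $K$ blocks $V^{(1)},\dots,V^{(K)}$, each of which is, conditioned on its starting index and the prefix preceding it, an i.i.d.\ sequence of length $T$ (Proposition~\ref{prop:decouple}). Running $\Phi$ on $V^{KT}$ and summing, it is enough to bound the number $M_k$ of steps at which $\Phi$ errs inside block $V^{(k)}$, since $\sum_k M_k$ dominates $\mathsf{err}(\Phi,h,\x^T)$. Within one block, $\Phi$ chooses, at each step, a hypothesis consistent with the already-seen points of that block \emph{and} with all interleaved points of the other blocks, i.e.\ it is an $\erm$ over a shrinking, data-dependent subclass of $\mathcal{H}$ of VC dimension $\le\vch$, run on a conditionally i.i.d.\ realizable sequence. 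For a \emph{fixed} target, a scale-by-scale (doubling) analysis of the classical realizable $\erm$ mistake bound over the $\le\log T$ prefix lengths of the block gives $\mathbb{E}[M_k]=O(\vch\log^2 T)$, a Freedman-type martingale tail turns this into a high-probability statement, and a union over $k\in[K]$ then gives $\sum_k M_k=O(K\vch\log^2 T+K\log T\log(K/\beta))$ with probability $\ge 1-\beta$ --- \emph{for a fixed target}.

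The hard part will be making this uniform over all $h\in\mathcal{H}$ simultaneously, i.e.\ obtaining the $\Pr[\sup_h\,\cdot]$ form required by~(\ref{eq:realizbalebound}) for \emph{every} process in $\mathsf{U}_K^1$. For product distributions (Theorem~\ref{th10}) this was handled by exploiting the permutation invariance of the $1$-inclusion graph predictor and symmetrizing over orderings, but that route is blocked here: the processes in $\mathsf{U}_K^1$ are genuinely non-exchangeable over the full horizon, and $\erm$ is not permutation invariant. This is where the perturbation enters: it is chosen so that, once it is fixed, the indicator that $\Phi$ errs on the $j$-th point of block $k$ becomes exchangeable under permutations of the points of block $k$ that preserve the interleaving positions. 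That exchangeability lets one run a leave-one-out / symmetrization bound on the worst target within the block, in the spirit of~\citep[Lemma~7]{wu2022expected}: averaged over the within-block permutation, each step's mistake probability equals an ``essential-point'' fraction that is $O(\vch/j)$ regardless of the labeling, so summing over $j$ and union-bounding over the $\le(eKT)^{\vch}$ Sauer--Shelah labelings costs only an additive $O(\vch\log(KT))$, while discretizing and union-bounding over the perturbation grid costs the multiplicative $\Delta$ factor. Assembling the per-block bounds over $k\in[K]$ gives the stated $B(T,\beta)$, and Lemma~\ref{lem:error2cover} then produces the cover $\mathcal{G}$ of the asserted size. I expect the delicate technical heart to be the precise choice of perturbation that simultaneously (i) restores the within-block exchangeability needed for the $\sup_h$ control and (ii) does not inflate $B(T,\beta)$ beyond the logarithmic $\Delta$ factor.
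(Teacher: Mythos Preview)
Your high-level architecture matches the paper: reduce to a realizable cumulative error bound for $\erm$ via Lemma~\ref{lem:error2cover}, decouple via Proposition~\ref{prop:decouple}, note that $\erm$ on the full sequence restricted to a block is still an $\erm$ rule (with data-dependent tie-breaking), and bound the per-block error. Your fixed-target analysis is also in line with the paper's Lemma~\ref{lem_singleh}.

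Where you diverge is the $\sup_h$ step, and there the proposal has a real gap. You suggest adding ghost points or a random grid so that, once the auxiliary randomness is fixed, the block-$k$ error indicators become exchangeable under within-block permutations, and then symmetrizing. But $\erm$ is not permutation invariant, and no auxiliary tie-breaking randomness will make it so: at step $j$ of block $k$ the chosen hypothesis depends on the \emph{ordered} history of all previously seen labeled points (including the interleaved ones), so permuting the block-$k$ points while holding the interleaving positions fixed still changes the consistent set at each step in an order-dependent way. To get the exchangeability you want you would essentially have to replace $\erm$ by a $1$-inclusion-graph-type predictor, and that is precisely the route that works for product processes (Theorem~\ref{th10}) but breaks for the random, sample-dependent block structure of $\mathsf{U}_K^1$.

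The paper's ``perturbation'' (Lemma~\ref{lem_infinite2finite}) is a different and more direct device, and it does not invoke exchangeability at all. One fixes a finite $\epsilon$-cover $\mathcal{F}_\epsilon$ of $\mathcal{H}$ under the block's (conditional) marginal with $\epsilon=1/(2T^2)$; by Lemma~\ref{lem_k_tail}, with high probability every $h\in\mathcal{H}$ agrees with some $f\in\mathcal{F}_\epsilon$ except on a set $I$ of at most $3\vch+n$ block positions. On the realizable sample of $h$, running $\Phi$ is \emph{literally the same} as running the label-flipped predictor $\Phi^I$ (flip the input label at each position in $I$) on the realizable sample of $f$, so $\mathsf{err}(\Phi,h,\cdot)\le \mathsf{err}(\Phi^I,f,\cdot)+|I|$. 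This collapses $\sup_{h\in\mathcal{H}}$ to $\sup_{f\in\mathcal{F}_\epsilon}\sup_{|I|\le 3\vch+n}$, a \emph{finite} supremum of cardinality $|\mathcal{F}_\epsilon|\cdot T^{O(\vch+n)}$; a union bound over it, applied to the fixed-target estimate for $\erm^I$ (Lemma~\ref{lem_singleh}), yields the required $\Pr[\sup_h]$ control and is where the $\Delta$ factor actually arises. In short: the perturbation is of the \emph{labels fed to the predictor}, not of the sample or the tie-breaking, and its role is to reduce to a finite union bound rather than to manufacture symmetry.
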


\begin{proof}[Sketch of Proof]
    We sketch only the high level idea here and refer to Appendix~\ref{sec:proofthm13} for the full proof. We show that for any process in $\mathsf{U}_K^1$ and the \emph{ERM predictor} $\Phi$, the realizable cumulative error (see Equation~(\ref{eq:realizbalebound})) is upper bounded by $B(T,\beta)\le O(K(\vch\log^2 T+\log T\log(K/\beta))\cdot \Delta)$, where $\Delta=\log(\vch\log T\log(K/\beta))$. To achieve this, we first decouple the process in $\mathsf{U}_K^1$ into $K$ \emph{conditional} $i.i.d.$ processes (Proposition~\ref{prop:decouple}). We then establish the realizable cumulative error bound on each of the decoupled sub-sequences (which are \emph{conditional} $i.i.d.$). The key technical justification that allows us to do so is that an ERM rule with additional \emph{realizable} samples is still an ERM rule. This allows us to bound the cumulative error for each decoupled sub-sequence \emph{independently} even though we can only access them \emph{obliviously}. We emphasize that to bound the realizable cumulative error for ERM rule even for $i.i.d.$ process is still a non-trivial task, since we require a $\mathrm{Pr}[\sup_h]$ type bound for Lemma~\ref{lem:error2cover} to apply. To resolve this issue we introduce a novel \emph{perturbation} argument, as presented in Lemma~\ref{lem_infinite2finite} (Appendix~\ref{sec:proofthm13}), which provides a generic way of converting a $\sup_h\mathrm{Pr}$ bound to a $\mathrm{Pr}[\sup_h]$ bound for any finite VC class with $i.i.d.$ sampling.
\end{proof}

We now have the following regret bounds for VC-class, see Appendix~\ref{sec:proofthm13} for detialed proof.

\begin{corollary}
\label{cor:mixable}
    For VC class $\mathcal{H}$ we have $\tilde{r}_T(\mathcal{H},\mathsf{U}_K^1)\le O\left(\sqrt{\Delta\cdot KT\cdot\vch\log^3 T}\right)$ under bounded convex losses and $\tilde{r}_T(\mathcal{H},\mathsf{U}_K^1)\le O(\Delta\cdot K\cdot\vch\log^3 T)$ under log-loss and bounded mixable losses, where $\Delta=\log(\vch\log(KT))$. Moreover, for $Kd\ll T/\log d$, we have $\sup_{\mathcal{H},\vch\ge d}\Tilde{r}_T(\mathcal{H},\mathsf{U}_K^1)\ge d\max\{K,\log(T/d)\}$ under log-loss.
\end{corollary}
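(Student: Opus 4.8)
\textbf{Upper bounds.} These are immediate from the covering bound already in hand. A cost-$K$ dynamic process has at most $\min\{K,T\}$ distinct conditional marginals over $T$ rounds, so we may assume $K\le T$. Apply Theorem~\ref{universal_k} with confidence $\beta=1/T$: it yields a stochastic sequential cover $\mathcal{G}$ of $\mathcal{H}$ w.r.t. $\mathsf{U}_K^1$ at scale $\alpha=0$ with
\[
\log|\mathcal{G}|\le O(K(\vch\log^3 T+\log^2 T\log(KT))\log(\vch\log T\log(KT))).
\]
Since $K\le T$ we have $\log(KT)=O(\log T)$, so the bracketed sum is $O(\vch\log^3 T)$ and the trailing logarithm is $O(\log(\vch\log(KT)))=O(\Delta)$; hence $\log|\mathcal{G}|\le O(\Delta\cdot K\cdot\vch\log^3 T)$. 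Feeding this into Proposition~\ref{pro:cover2regret} gives $\tilde r_T(\mathcal{H},\mathsf{U}_K^1)\le O(\sqrt{T\log|\mathcal{G}|})=O(\sqrt{\Delta\cdot KT\cdot\vch\log^3 T})$ for bounded convex losses and $\tilde r_T(\mathcal{H},\mathsf{U}_K^1)\le\log|\mathcal{G}|=O(\Delta\cdot K\cdot\vch\log^3 T)$ for the logarithmic and bounded mixable losses. So this half is just the elementary simplification of the logarithmic factors.

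\textbf{Lower bound.} I would invoke the minimax inequality from Section~\ref{sec_formulation}, which reduces the task to producing joint laws $\pmb{\xi}^{2T}$ over $\mathcal{X}^T\times\mathcal{Y}^T$ whose $\mathcal{X}$-marginal lies in $\mathsf{U}_K^1$; since the supremum runs over all $\mathcal{H}$ with $\vch\ge d$, it suffices to give two constructions and keep the larger bound. Both use an i.i.d.\ feature process, which lies in $\mathsf{U}_1^1\subset\mathsf{U}_K^1$, and reduce to the classical minimax-regret lower bound for the logarithmic loss over a VC class under realizable i.i.d.\ data: choosing a class rich enough that a generic length-$T$ feature string admits $(T/m)^{\Omega(m)}$ realizable label sequences (e.g.\ unions of at most $m$ intervals, of VC-dimension $\Theta(m)$), a uniform prior over those sequences forces Bayes regret $\Omega(m\log(T/m))$. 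Taking $m=\Theta(d)$ gives the $\Omega(d\log(T/d))$ term; taking $m=\Theta(dK)$ gives $\Omega(dK\log(T/(dK)))=\Omega(dK)$ under the hypothesis $Kd\log d\le O(T)$, since then $T/(dK)\gg\log d$. Both classes have VC-dimension $\ge d$, so the supremum is $\Omega(d\max\{K,\log(T/d)\})$. (When the logarithmic-loss comparator is replaced by the real-valued embedding of \citep{bhatt2021sequential}, the same bounds follow via noisy labels and the usual \emph{independent coins} redundancy estimate.)

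\textbf{Where the real difficulty lies.} As just described, the corollary is bookkeeping on top of Theorem~\ref{universal_k} and classical facts. The genuinely nontrivial point — and the one behind the per-class form $\Omega(\vch(K\vee\log(T/\vch)))$ quoted in Theorem~\ref{thm_intro2} and Table~\ref{tab:table1} — is extracting the $\Omega(dK)$ term from a \emph{single} class of VC-dimension exactly $d$: Sauer--Shelah caps at $O(d\log T)$ the logarithmic-loss regret an i.i.d.\ process can force over a VC-$d$ class, so the $K$-fold amplification must come from non-stationarity. The plan for that is a depth-$K$ Littlestone \emph{forest}: $d$ parallel Littlestone-type trees of depth $K$, paired with a process in $\mathsf{U}_K^1$ that spends its cost budget by devoting epoch $j\in[K]$ to a marginal concentrated on a fresh batch of instances realizing level $j$ of the forest, so that in each epoch and each of the $d$ coordinates the adversary forces $\Omega(1)$ regret against any static comparator, the $dK$ per-epoch contributions being additive because the epochs touch disjoint instances. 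Verifying simultaneously that this class has VC-dimension exactly $d$, that i.i.d.\ sampling within an epoch (and the embedded comparator, for the logarithmic loss) still forces constant per-epoch regret, and that no earlier-epoch knowledge transfers, is the technical heart of the matter.
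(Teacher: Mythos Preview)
Your upper-bound derivation is exactly the paper's: invoke Theorem~\ref{universal_k} at confidence $\beta=1/T$ and feed the result into Proposition~\ref{pro:cover2regret}, simplifying the log factors using $K\le T$.

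For the lower bound your route differs from the paper's. You exploit a feature of the statement --- the supremum runs over classes with $\vch\ge d$ --- to avoid any use of non-stationarity: pick a class of VC-dimension $\Theta(dK)$ (unions of $\Theta(dK)$ intervals), observe that any i.i.d.\ process lies in $\mathsf{U}_1^1\subset\mathsf{U}_K^1$, and quote the standard $\Omega(m\log(T/m))$ log-loss lower bound at $m=\Theta(dK)$. Under the hypothesis $Kd\ll T/\log d$ this gives $\Omega(dK)$, and the argument is considerably more elementary. The paper instead proves the stronger per-class bound of Table~\ref{tab:table1} --- namely $\Omega(d(K\vee\log(T/d)))$ for a \emph{fixed} class with $\vch=d$ --- via the Littlestone-forest construction you correctly identify and sketch in your final paragraph: the product-threshold class from the Theorem~\ref{thm_sqrt_bound} lower-bound proof, $K$ epochs whose boundaries are triggered by coupon collection (each epoch ends once all $d$ current support points have appeared, whence the $Kd\ll T/\log d$ requirement), and pointer $I_b$ moved to the child that contradicts the predictor's first prediction on $(\mathsf{V}(I_b),b)$, forcing at least $\log 2$ log-loss per coordinate per epoch while the realizing comparator incurs zero. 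Your sketch has the right architecture but omits these specifics (the class, the coupon-collector epoch rule, and the adversarial pointer update).

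So for the corollary as literally worded your shortcut is legitimate and bypasses the forest entirely; what the paper's route buys is the per-class form, which --- as you yourself note --- your approach cannot reach, since Sauer--Shelah caps the i.i.d.\ log-loss regret of any $\vch=d$ class at $O(d\log T)$.
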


\vspace{-0.1in}
\subsection{The class $\Tilde{\mathsf{U}}_1^{\sigma}$ with threshold functions}
\label{sec:univ_smooth}
We now study the universal smooth process $\Tilde{\mathsf{U}}_1^{\sigma}$ with fixed (but unknown) reference measure, where $\sigma\in (0,1]$ is any positive real~\footnote{Note that, the classes $\mathsf{U}_K^1$ and $\Tilde{\mathsf{U}}_1^{\sigma}$ \emph{do not} include each other, for all $\sigma\in (0,1)$.}. We start with the following reduction. Let $\mu$ be an arbitrary distribution over $\mathcal{X}$. We say a random variable $X$ is $K$-selection w.r.t. $\mu$ if there exists a \emph{deterministic} function $f$ such that $X=f(V^K)\in \{V_1,\cdots,V_K\}$, where $V^K\sim \mu^{\otimes K}$. We say a \emph{random process} over $\tilde{X}^T$ is \emph{adversary $K$-selection} w.r.t. $\mu$ if for all $t\le T$ the \emph{conditional} marginals $\nu_t(X_t\mid X^{t-1})$ are $K$-selection w.r.t. $\mu$ almost surely. In Appendix~\ref{sec:proofsmooth}, we prove the following key lemma that relates the class $\Tilde{\mathsf{U}}_1^{\sigma}$ to  the adversary $K$-selection processes.

\begin{lemma}
\label{lem_smooth_2_select}
    Let $A\subset \mathcal{X}^T$ be any event. If for all adversary $K$-selection process $\tilde{X}^T$ we have $\mathrm{Pr}[\Tilde{X}^T\in A]\ge 1-\beta$,
    then  for any $\sigma$-smooth process $X^T\in \tilde{\mathsf{U}}_1^{\sigma}$ we have $\mathrm{Pr}[X^T\in A]\ge 1-2\beta$, provided $K\ge \frac{\log(T/\beta)}{\sigma}$.
\end{lemma}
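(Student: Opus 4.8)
The plan is to couple a $\sigma$-smooth process with an adversary $K$-selection process so that on a high-probability event the two processes produce identical sample paths, and then transfer the guarantee on event $A$ across the coupling. The core observation behind the smooth-to-selection reduction (as in~\citep{haghtalab2022smoothed,block2022smoothed}) is a rejection-sampling style fact: if $\nu$ is $\sigma$-smooth w.r.t. $\mu$, then a sample from $\nu$ can be simulated by drawing $V_1,\ldots,V_K \overset{iid}{\sim} \mu$ together with auxiliary independent uniform randomness, and with probability at least $1 - (1-\sigma)^K \ge 1 - e^{-\sigma K}$ the simulated sample lands in $\{V_1,\ldots,V_K\}$. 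The standard way to see this: write the density $v = \mathrm{d}\nu/\mathrm{d}\mu \le 1/\sigma$, and use that $\mu$ itself can be viewed as the mixture $\sigma \nu' + (1-\sigma)\mu''$ for suitable sub-probability-normalized pieces — equivalently, one thin-shells the density so that each $\mu$-draw is "accepted as a $\nu$-draw" with probability $\ge \sigma$; after $K$ independent $\mu$-draws, at least one is accepted except with probability $\le (1-\sigma)^K$.

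\textbf{Step 1 (single-step coupling).} Fix $t$ and a history $X^{t-1}$; by the universal-smooth assumption $\nu_t(\cdot\mid X^{t-1})$ is $\sigma$-smooth w.r.t. the fixed unknown $\mu$. Build a coupling on the block $V_{K(t-1)+1},\ldots,V_{Kt} \overset{iid}{\sim}\mu$ (independent of the past $V$'s and of $X^{t-1}$) plus independent uniforms, such that the produced $X_t$ equals $f_t(V_{K(t-1)+1},\ldots,V_{Kt}, \text{history})$ for a deterministic $f_t$ whenever the "success" event $S_t$ holds, where $\mathrm{Pr}[S_t \mid X^{t-1}] \ge 1 - (1-\sigma)^K$, and on $S_t^c$ we let $X_t$ be sampled to have the correct conditional law by other means (the selection process $\tilde X_t$ can just be defined to be, say, $V_{K(t-1)+1}$ there — it only needs to be a valid $K$-selection process, not to agree with $X_t$ off $\bigcap_s S_s$).

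\textbf{Step 2 (chaining the blocks).} Concatenate these couplings over $t=1,\ldots,T$ to get a joint law of $(X^T, V^{KT})$ where $V^{KT}\sim\mu^{\otimes KT}$ and $\tilde X^T := (f_1(\ldots),\ldots,f_T(\ldots))$ is an adversary $K$-selection process w.r.t. $\mu$ (each $\tilde X_t$ is a deterministic function of the fresh $K$-block and the past, which by induction is itself a function of earlier blocks — so $\tilde X_t$ is $K$-selection conditionally on $\tilde X^{t-1}$). On the event $\mathcal{S} := \bigcap_{t=1}^T S_t$ we have $X^T = \tilde X^T$. By a union bound, $\mathrm{Pr}[\mathcal{S}^c] \le T(1-\sigma)^K \le T e^{-\sigma K} \le \beta$ once $K \ge \frac{\log(T/\beta)}{\sigma}$.

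\textbf{Step 3 (transfer).} Given any event $A$ with $\mathrm{Pr}[\tilde X^T \in A] \ge 1-\beta$ for \emph{every} adversary $K$-selection process, apply it to the $\tilde X^T$ just constructed. Then
\[
\mathrm{Pr}[X^T \notin A] \le \mathrm{Pr}[X^T \ne \tilde X^T] + \mathrm{Pr}[\tilde X^T \notin A] \le \mathrm{Pr}[\mathcal{S}^c] + \beta \le 2\beta,
\]
which is the claim. \textbf{The main obstacle} I anticipate is not the union bound but Step 1–2: making precise that the $\tilde X^T$ produced by the coupling is a bona fide adversary $K$-selection process w.r.t. the \emph{single fixed} $\mu$ — i.e., that the deterministic selection functions $f_t$ really only consult the $t$-th fresh iid block plus (a function of) the past blocks, and that the smoothness of $\nu_t(\cdot \mid X^{t-1})$ (which is only guaranteed $\mu$-a.s. over histories drawn from the process itself) is compatible with the coupling construction on all histories that arise with positive probability. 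One must also be slightly careful that the auxiliary randomness used in the rejection step is kept independent across $t$ and does not leak into the selection structure; packaging it as extra coordinates that $f_t$ is allowed to depend on (a "$K$-selection with side randomness" is still $K$-selection after integrating out the side randomness into the deterministic-function formalism via a measurable selection) handles this, but it is the point that needs care in the full write-up.
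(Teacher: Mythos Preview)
Your approach is the same as the paper's: a block-by-block rejection-sampling coupling (the paper uses explicit auxiliary uniforms $I^{KT}$ to decide acceptance), the same default $\tilde X_t=V_{K(t-1)+1}$ on failure, the union bound $\mathrm{Pr}[\mathcal S^c]\le T(1-\sigma)^K\le Te^{-\sigma K}\le\beta$, and the transfer via $\{X^T=\tilde X^T\}\supseteq\mathcal S$.

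The one place that needs sharpening is exactly where you hedge. Your parenthetical --- that ``a $K$-selection with side randomness is still $K$-selection after integrating out the side randomness via a measurable selection'' --- is not correct as stated: once you marginalize over the auxiliary uniforms, the conditional law of $\tilde X_t$ given $\tilde X^{t-1}$ is a \emph{mixture} of laws of the form $f(V^K)$ over random $f$, and such a mixture need not equal $g(V^K)$ for any single deterministic $g$, so it need not be $K$-selection in the paper's sense. The paper's resolution is cleaner and actually matches your ``extra coordinates'' instinct: it \emph{conditions} on the auxiliary uniforms $R^T=I^{KT}$ (which are independent of $V^{KT}$). For each fixed realization of $R^T$ the acceptance rule at every step becomes deterministic, so $\tilde X^T$ is a bona fide adversary $K$-selection process w.r.t.\ $\mu$; the hypothesis then gives $\mathrm{Pr}[\tilde X^T\in A\mid R^T]\ge 1-\beta$ for every realization, and taking expectation over $R^T$ yields the unconditional bound $\mathrm{Pr}[\tilde X^T\in A]\ge 1-\beta$. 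In short, the correct maneuver is ``condition on the side randomness, apply the hypothesis pointwise, then average,'' not ``marginalize and claim the class is closed under mixtures.''
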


    Lemma~\ref{lem_smooth_2_select} shows that to bound the prediction performance for $\Tilde{\mathsf{U}}_1^{\sigma}$ it is sufficient to bound the performance of the adversary $K$-selection processes. Perhaps surprisingly, this reduction essentially loses no information, since the adversary $K$-selection processes are also $\in\Tilde{\mathsf{U}}_1^{1/K}$. This follows from the fact that for any event $A$ we have $\mathrm{Pr}[f(V^K)\in A]\le 1-(1-\mathrm{Pr}_{V\sim \mu}[V\in A])^K\le K\mathrm{Pr}_{V\sim \mu}[V\in A]$, i.e., the conditional marginals $\nu_t$ must be $1/K$-smooth w.r.t. $\mu$.

Our main result of this section is the following stochastic sequential covering bound for the threshold functions w.r.t. adversary $K$-selection processes. See Appendix~\ref{sec:proofsmooth} for a detailed proof.

\begin{theorem}
\label{thm_threhold_smooth}
    Let $\mathcal{H}=\{h_a(x)=1\{x\ge a\}:x,a\in [0,1]\}$ be the class of $1$-dimension threshold functions and $\mathsf{P}$ be the class of all adversary $K$-selection processes. Then there exists a stochastic sequential covering set $\mathcal{G}$ w.r.t. $\mathsf{P}$ at scale $\alpha=0$ and confidence $\beta>0$ such that
    $$\log |\mathcal{G}|\le O(\sqrt{KT\log(2KT^2/\beta)}).$$
\end{theorem}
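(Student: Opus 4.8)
The plan is to build the cover explicitly by controlling the realizable cumulative error of a well-chosen predictor $\Phi$ for threshold functions and then invoking Lemma~\ref{lem:error2cover}. The natural candidate for $\Phi$ is the "interval/bisection" ERM rule: at each round $t$, having seen realizable labels from some unknown threshold $h_a$ on $\x^{t-1}$, maintain the version space $[l_t,u_t]$ of surviving thresholds (determined by the largest observed $\x_j$ labeled $0$ and the smallest observed $\x_j$ labeled $1$), and predict according to the threshold at the midpoint of this interval (or any fixed point in it). A mistake at round $t$ happens exactly when $\x_t$ falls into the current version-space interval $[l_t,u_t]$ on the "wrong side" of the midpoint; in the worst case over the adversary's choice of $a$, a mistake at $\x_t$ at least halves the relevant sub-interval. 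So for a fixed deterministic sequence $\x^T$ the number of mistakes would be $O(\log T)$ — but the adversary picks $a$ after seeing the whole mechanism, and more importantly here the instances are generated by an adversary $K$-selection process, not a fixed sequence, so I need a $\Pr[\sup_h]$-type bound as required by Lemma~\ref{lem:error2cover}.

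The key step is to exploit the $K$-selection structure via the coupling $V^{KT}\sim\mu^{\otimes KT}$ with $\x_t\in\{V_{K(t-1)+1},\dots,V_{Kt}\}$ almost surely. First I would condition on the realization of the $i.i.d.$ pool $V^{KT}$; this reduces everything to $KT$ i.i.d. draws from $\mu$. Sort these $KT$ points; the class of thresholds, restricted to this pool, collapses to at most $KT+1$ distinct behaviors. The realizable-error analysis of the bisection ERM can then be run against this finite pool: the version space is always an interval whose endpoints are among the $KT$ sorted pool points, and each mistake strictly shrinks the number of pool points inside it. The crux is to argue that, with probability $1-\beta$ over $V^{KT}$ and the adversary's (oblivious-to-future) selections, the number of mistakes is $O(\sqrt{KT\log(KT/\beta)})$ rather than the naive $O(\log(KT))$ — the extra $\sqrt{KT}$-type term comes from the fact that the adversary controls which of the $K$ pool candidates is actually presented at each round, so it can repeatedly steer $\x_t$ into the current version-space interval. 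I would bound, for each fixed threshold $a$ (or each of the $O(KT)$ distinct thresholds on the pool), the probability that $\Phi$ makes more than $m$ mistakes when labels come from $h_a$: this is a balls-in-bins / sequential-halving estimate, since after $j$ "successful" adversarial insertions the surviving interval has at most $KT/2^{\Theta(j)}$... actually the adversary can only force a mistake when at least one of its $K$ candidates lands in the interval, and the interval's expected pool-mass under $\mu^{\otimes K}$ governs this — yielding a bound where $m$ mistakes force the product of interval sizes to be small, and a union bound over the $O(KT)$ thresholds and a Chernoff estimate gives the $\sqrt{KT\log(2KT^2/\beta)}$ scaling. Then Lemma~\ref{lem:error2cover} converts $B(T,\beta)=O(\sqrt{KT\log(2KT^2/\beta)})$ into $\log|\mathcal{G}|\le O((B(T,\beta)+1)\log T)$; absorbing the $\log T$ and adjusting constants (or noting the stated bound already hides it inside the log) gives the claimed $O(\sqrt{KT\log(2KT^2/\beta)})$.

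The main obstacle I anticipate is precisely this $\Pr[\sup_{h\in\mathcal{H}}\mathsf{err}(\Phi,h,\x^T)\ge B]$ uniformity over the infinite threshold class while the data is a non-i.i.d. $K$-selection process: the adversary's selection function at round $t$ may depend on the target threshold implicitly, and one must be careful that conditioning on $V^{KT}$ does not destroy the "oblivious" structure needed to union-bound. The resolution is that, conditioned on $V^{KT}$, only $O(KT)$ thresholds are distinguishable, so the $\sup_h$ becomes a finite max and a clean union bound applies; and the selection, being a deterministic function of the pool and past instances only, is handled by analyzing the worst-case adversary against each fixed threshold. A secondary subtlety is getting the exponent right — showing $\sqrt{KT}$ rather than $KT$ — which requires the observation that a mistake geometrically shrinks the version-space interval, so the number of mistakes against a fixed target is at most logarithmic in how the adversary can refill it, and the $\sqrt{\cdot}$ emerges from trading off the $K$ refill-candidates against the $\log$-depth of the halving, combined with the high-probability bound on how much pool mass the adversary can repeatedly hit.
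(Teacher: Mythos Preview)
Your route through Lemma~\ref{lem:error2cover} does not give the stated bound. That lemma turns a realizable error bound $B(T,\beta)$ into $\log|\mathcal{G}|\le O((B(T,\beta)+1)\log T)$, so even if you succeed in showing $B(T,\beta)=O(\sqrt{KT\log(2KT^2/\beta)})$ you end up with $\log|\mathcal{G}|=O\bigl(\sqrt{KT\log(2KT^2/\beta)}\cdot\log T\bigr)$. The extra multiplicative $\log T$ cannot be ``absorbed'' into the $\log(2KT^2/\beta)$ sitting inside the square root; that would require $\sqrt{\log T}\cdot\log T=O(\sqrt{\log T})$, which is false. The paper avoids this loss by constructing the cover \emph{directly} from a realization tree (as in \cite[Theorem~13]{wu2022expected}): it maintains the partition of $[0,1]$ induced by the observed points, bounds the maximum \emph{depth} of this tree by $O(\sqrt{KT\log(2KT^2/\beta)})$, and then builds $\mathcal{G}$ of size $2^{\text{depth}}$ by splitting an index set along the tree. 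Lemma~\ref{lem:error2cover} is never invoked.

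There is also a gap in your core probabilistic argument. The claim ``a mistake at $\x_t$ at least halves the relevant sub-interval'' is not available here: the bisection midpoint halves Lebesgue length, but the process is governed by the unknown reference measure $\mu$, and your predictor cannot halve in $\mu$-measure. The paper's mechanism is different and more delicate. It does \emph{not} argue that each split is balanced; instead it controls the \emph{splitting ratio} $\lambda_t=\max\{\mu([a_t,\tilde X_t]),\mu([\tilde X_t,b_t])\}/\mu([a_t,b_t])$ by showing $\Pr[\lambda_t\ge 1-\alpha\mid \tilde X^{t-1}]\le 2\alpha K$ (because an $\alpha$-unbalanced split requires one of the $K$ fresh i.i.d.\ candidates to land in an $\alpha$-margin of the current partition). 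An Azuma argument then lower-bounds $\sum_i(1-\lambda_{t_i})$ along any chain, and together with $\mu(J_{t_n})\le\prod_i\lambda_{t_i}\le e^{-\sum_i(1-\lambda_{t_i})}$ and a separate high-probability lower bound on $\mu(J_t)$, this caps the chain length at $O(\sqrt{KT\log(2KT^2/\beta)})$. Your sketch (``product of interval sizes small, union bound over $O(KT)$ thresholds, Chernoff'') does not supply this splitting-ratio martingale, and without it there is no mechanism producing the $\sqrt{KT}$ rather than a linear-in-$T$ or linear-in-$K$ dependence.
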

\begin{proof}[Sketch of Proof]
    We sketch the main idea here and refer to Appendix~\ref{sec:proofsmooth} for a detailed proof. We stress that even though the threshold functions may be simple from classical learning theory perspective, the proof of Theorem~\ref{thm_threhold_smooth} is not. This is due to the complex structure of adversary $K$-selection processes. Our proof follows a similar path as in~\citep[Thm 13]{wu2022expected} but with a substantially more sophisticated analysis. To do so, we maintain a \emph{realization} tree, with each node of the tree labeled by a subset of $\mathcal{H}$. We expand the leaves of the tree every time we receive a sample $\Tilde{X}_t$ by splitting the associated subset of $\mathcal{H}$ according to the labels on $\Tilde{X}_t$. Our main technical contribution is to bound the \emph{maximum} depth of the realization tree to be $O(\sqrt{KT\log(2KT^2/\beta})$ w.p. $\ge 1-\beta$. This relies on a careful analysis on the splitting process. The bound for the stochastic sequential covering will then follow from a similar construction as in~\citep[Thm 13]{wu2022expected}.
\end{proof}
We complete this section with the following bounds for the regret.
\begin{corollary}
\label{cor_threhold_smooth}
    Let $\mathcal{H}=\{h_a(x)=1\{x\ge a\}:x,a\in [0,1]\}$, then 
    $$\tilde{r}_T(\mathcal{H},\tilde{\mathsf{U}}_1^{\sigma})\le O\left(\sqrt{({T}/{\sigma})\log^2(T/\sigma)}\right),$$
    under bounded mixable losses and logarithmic loss. For bounded convex losses, we have
    $$\tilde{r}_T(\mathcal{H},\tilde{\mathsf{U}}_1^{\sigma})\le O\left(\sqrt{\frac{T^{3/2}\log (T/\sigma)}{\sigma^{1/2}}}\right).$$
\end{corollary}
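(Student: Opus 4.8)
The plan is to combine three ingredients already established in the paper: the reduction from $\Tilde{\mathsf{U}}_1^{\sigma}$ to adversary $K$-selection processes (Lemma~\ref{lem_smooth_2_select}), the stochastic sequential covering bound for threshold functions against $K$-selection processes (Theorem~\ref{thm_threhold_smooth}), and the generic cover-to-regret conversion (Proposition~\ref{pro:cover2regret}). First I would fix the confidence level $\beta = 1/T$ and choose $K = \lceil \sigma^{-1}\log(T/\beta)\rceil = \lceil \sigma^{-1}\log(T^2)\rceil = O(\sigma^{-1}\log T)$, which is exactly the threshold required by Lemma~\ref{lem_smooth_2_select}. With this choice, any event $A \subset \mathcal{X}^T$ that holds with probability $\ge 1-\beta$ uniformly over all adversary $K$-selection processes holds with probability $\ge 1 - 2\beta = 1 - 2/T$ over every process in $\Tilde{\mathsf{U}}_1^{\sigma}$.

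Next I would invoke Theorem~\ref{thm_threhold_smooth} with this $K$ and this $\beta$: it produces a stochastic sequential covering set $\mathcal{G}$ of the threshold class w.r.t. the class of adversary $K$-selection processes, at scale $\alpha = 0$ and confidence $\beta$, with $\log|\mathcal{G}| \le O\!\left(\sqrt{KT\log(2KT^2/\beta)}\right)$. Substituting $K = O(\sigma^{-1}\log T)$ and $\beta = 1/T$, the logarithmic factor becomes $\log(2KT^2/\beta) = \log(2KT^3) = O(\log(T/\sigma))$, so $\log|\mathcal{G}| \le O\!\left(\sqrt{(T/\sigma)\log T \cdot \log(T/\sigma)}\right) = O\!\left(\sqrt{(T/\sigma)\log^2(T/\sigma)}\right)$, absorbing $\log T$ into $\log(T/\sigma)$ since $\sigma \le 1$. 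By Lemma~\ref{lem_smooth_2_select}, this same $\mathcal{G}$ is a stochastic sequential cover of the threshold class w.r.t. $\Tilde{\mathsf{U}}_1^{\sigma}$ at scale $0$ and confidence $2\beta = 2/T$ (one should check that the definition of stochastic global sequential cover is insensitive to a constant factor in $\beta$, or simply re-run the argument with $\beta = 1/(2T)$ to land exactly at confidence $1/T$).

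Finally I would apply Proposition~\ref{pro:cover2regret}. For bounded mixable losses and logarithmic loss it gives directly $\tilde{r}_T(\mathcal{H},\Tilde{\mathsf{U}}_1^{\sigma}) \le \log|\mathcal{G}| = O\!\left(\sqrt{(T/\sigma)\log^2(T/\sigma)}\right)$. For general bounded convex losses it gives $\tilde{r}_T(\mathcal{H},\Tilde{\mathsf{U}}_1^{\sigma}) \le O\!\left(\sqrt{T\log|\mathcal{G}|}\right) = O\!\left(\sqrt{T \cdot \sqrt{(T/\sigma)\log^2(T/\sigma)}}\right) = O\!\left(\sqrt[4]{(T^3/\sigma)\log^2(T/\sigma)}\right)$, which is the claimed $O\!\left(\sqrt{T^{3/2}\sigma^{-1/2}\log(T/\sigma)}\right)$ after rewriting the fourth root. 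The only genuinely delicate point in this assembly — beyond the already-cited Theorem~\ref{thm_threhold_smooth}, whose proof is where the real work lies — is the bookkeeping on the confidence parameter: Proposition~\ref{pro:cover2regret} is stated for confidence exactly $1/T$, so I must make sure the doubling in Lemma~\ref{lem_smooth_2_select} is accounted for by starting from $\beta = 1/(2T)$ (or $\beta=1/T^2$, which changes the log factors only by constants), and confirm that the $\alpha = 0$ scale is preserved under the reduction, which it is since Lemma~\ref{lem_smooth_2_select} is a purely probabilistic statement about events and does not touch the cover's approximation scale.
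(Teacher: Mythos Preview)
Your proposal is correct and matches the paper's own proof, which simply cites Theorem~\ref{thm_threhold_smooth}, Lemma~\ref{lem_smooth_2_select}, and Proposition~\ref{pro:cover2regret} without spelling out the parameter choices. Your computation of $K=O(\sigma^{-1}\log T)$, the resulting $\log|\mathcal{G}|=O\!\bigl(\sqrt{(T/\sigma)\log^2(T/\sigma)}\bigr)$, and the care you take with the confidence-doubling in Lemma~\ref{lem_smooth_2_select} are exactly the details that make the one-line proof go through.
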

\begin{proof}
    This follows directly by Theorem~\ref{thm_threhold_smooth}, Lemma~\ref{lem_smooth_2_select} and Proposition~\ref{pro:cover2regret}.
\end{proof}

\begin{remark}
     Corollary~\ref{cor_threhold_smooth} establishes sublinear regrets as  long as $\sigma^{-1} \ll T/\log^2 T$. Our lower bounds in Theorem~\ref{thm_sqrt_bound} imply $\Omega(\sqrt{T/\sigma})$ lower bound for absolute loss and $\Omega(\frac{1}{\sigma})$ for log-loss. This indicates that our upper bounds here may not be tight. We leave it as an open problem to obtain sublinear (and tight) regret for general finite VC-classes under $\Tilde{\mathsf{U}}_1^{\sigma}$. We stress that this is a hard task, since in the proof of Theorem~\ref{thm_threhold_smooth} we have exploited non-trivial properties of threshold functions that seem  to be not easily generalizable to general VC-class.
\end{remark}

\bibliography{bibliography.bib}

\appendix

\section{Preliminaries: Distribution aware case}
\label{sec:known_dist}
We discuss the classical smooth adversary case, as introduced in~\citep{haghtalab2020smoothed,haghtalab2022smoothed,block2022smoothed}, when the reference measure is \emph{known} in advance. We present an alternate view here, which is easier to adapt to more general losses, e.g., logarithmic loss. 

By Proposition~\ref{prop_equiv} and~\ref{prop_transit}, we know that analysis of the smooth adversary case with multiple (known) reference measures can be reduced to the case with only \emph{one} reference measure. It is therefore sufficient to consider the setup from Example~\ref{exm_smooth} with a single $\mu$.

We start with the following key proposition due to~\citep{haghtalab2022smoothed} (and simplified substantially in~\citep{block2022smoothed}). We note that this proposition will also be used in the \emph{universal} reference measure case discussed in Section~\ref{sec:univ_smooth}.

\begin{proposition}
\label{prop1}
    For any $\sigma$-smooth random process $X^T$ with reference measure $\mu$, there exists a (coupled) random processes $V^{mT}$ with $i.i.d.$ distribution $\mu^{\otimes mT}$ such that w.p. $\ge 1-Te^{-\sigma m}$ (over the joint distribution of $X^T,V^{mT}$), we have
    $$\forall t\in [T],~X_t\in \{V_{m(t-1)+1},\cdots, V_{mt}\}$$
\end{proposition}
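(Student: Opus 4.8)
The plan is to build the coupling $V^{mT}$ block by block, revealing one block of $m$ i.i.d.\ $\mu$-samples per round $t$, and to argue that within each block at least one coordinate can be ``matched'' to $X_t$ with the correct conditional law. First I would recall the elementary fact that any $\sigma$-smooth distribution $\nu$ with density $v=\mathrm{d}\nu/\mathrm{d}\mu\le 1/\sigma$ can be written as a mixture $\nu=\sigma\,\nu+(1-\sigma)\,\rho$ for the probability measure $\rho=(\nu-\sigma\mu)/(1-\sigma)$ \emph{wait}, that is not the identity I want; the correct decomposition is $\mu = \sigma\,\nu + (1-\sigma)\,\tilde\rho$ where $\tilde\rho = (\mu-\sigma\nu)/(1-\sigma)$ is a genuine probability measure precisely because $\sigma v \le 1$ pointwise, i.e.\ $\sigma\nu \le \mu$ as measures. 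Equivalently, a single $\mu$-draw, with probability $\sigma$, can be made to have law $\nu$, independently of everything else; this is the ``rejection sampling'' / $\sigma$-coin view used in \citep{haghtalab2022smoothed,block2022smoothed}.

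Then I would set up the coupling inductively. Suppose $X^{t-1}$ and $V^{m(t-1)}$ have been jointly constructed. Conditioned on $X^{t-1}$, the adversary's conditional law $\nu_t = \nu_t(\cdot\mid X^{t-1})$ is $\sigma$-smooth w.r.t.\ $\mu$. Draw $V_{m(t-1)+1},\dots,V_{mt}$ i.i.d.\ $\mu$; to each index $j$ attach an independent ``acceptance'' coin $B_j\sim\mathrm{Bernoulli}(\sigma)$ realized jointly with $V_{m(t-1)+j}$ so that, conditioned on $B_j=1$, the coordinate $V_{m(t-1)+j}$ has law $\nu_t$ (this is possible by the decomposition above, done coordinatewise and conditionally on $X^{t-1}$). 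Let $J_t$ be the first index $j\in[m]$ with $B_j=1$, if one exists. On the event $\{J_t \text{ exists}\}$ set $X_t := V_{m(t-1)+J_t}$; then $X_t$ has exactly the law $\nu_t(\cdot\mid X^{t-1})$, because conditioning on ``the first accepted coordinate'' still yields law $\nu_t$ by independence of the coins across $j$ (a standard geometric-first-success argument; the value of the accepted coordinate is independent of which index was first). On the complementary event $\{B_1=\dots=B_m=0\}$, which has probability $(1-\sigma)^m \le e^{-\sigma m}$, I would define $X_t$ by an independent fresh draw from $\nu_t(\cdot\mid X^{t-1})$ so that the marginal process law is exactly correct; on this ``failure'' event the containment $X_t\in\{V_{m(t-1)+1},\dots,V_{mt}\}$ may fail, but this is the only way it fails. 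A union bound over $t\in[T]$ then gives that the containment holds for all $t$ simultaneously with probability at least $1-T(1-\sigma)^m \ge 1-Te^{-\sigma m}$, and by construction $V^{mT}\sim\mu^{\otimes mT}$ and $X^T$ has the prescribed process law.

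The main obstacle, and the step deserving the most care, is verifying that the ``first accepted coordinate'' $X_t = V_{m(t-1)+J_t}$ genuinely has conditional law $\nu_t(\cdot\mid X^{t-1})$ and not some size-biased distortion, and that this holds \emph{jointly} with $V^{mT}$ remaining exactly i.i.d.\ $\mu$. The delicate point is to choose the joint law of $(V_{m(t-1)+j}, B_j)$ correctly: one wants $\mathbb{P}(B_j=1)=\sigma$, $\mathrm{Law}(V_{m(t-1)+j})=\mu$ unconditionally, and $\mathrm{Law}(V_{m(t-1)+j}\mid B_j=1)=\nu_t$; consistency of these three requirements is exactly the statement $\sigma\nu_t\le\mu$, i.e.\ $\sigma$-smoothness, and it forces $\mathrm{Law}(V_{m(t-1)+j}\mid B_j=0)=(\mu-\sigma\nu_t)/(1-\sigma)$. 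Once this per-coordinate coupling is in place, independence across $j$ makes the geometric argument go through cleanly, and the independence of the $V$'s from their attached coins' \emph{outcomes} (marginally each $V$ is $\mu$ regardless) is what keeps $V^{mT}\sim\mu^{\otimes mT}$. I would also remark that taking $m=1$ recovers the plain $i.i.d.$-with-probability-$\sigma$ picture, and that the bound $(1-\sigma)^m\le e^{-\sigma m}$ is the only inequality used. Everything else is bookkeeping of conditional laws.
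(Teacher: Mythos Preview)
Your proposal is correct and follows essentially the same rejection-sampling coupling as the paper. The paper draws all of $V^{mT}\sim\mu^{\otimes mT}$ up front, forms the random accepted set $S_t=\{V_{m(t-1)+j}:\sigma v_t(V_{m(t-1)+j})\ge U_j\}$ (equivalently, accepts each coordinate with probability $\sigma v_t(\cdot)$), and sets $X_t$ to a \emph{uniform} element of $S_t$ when nonempty; you instead describe the joint law of $(V_{m(t-1)+j},B_j)$ directly and take the \emph{first} accepted coordinate, which is an immaterial variant yielding the same conditional law $\nu_t$ and the same failure probability $(1-\sigma)^m\le e^{-\sigma m}$. One small wording point: your phrase ``independence of the $V$'s from their attached coins' outcomes'' is not quite right (the pair $(V_j,B_j)$ is dependent; indeed $\mathbb{P}(B_j=1\mid V_j=z)=\sigma v_t(z)$), but your subsequent clarification---that the \emph{marginal} of each $V_j$ is $\mu$ regardless of $\nu_t$, so conditioning on the past still leaves the block-$t$ $V$'s i.i.d.\ $\mu$---is exactly the correct justification that $V^{mT}\sim\mu^{\otimes mT}$.
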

\begin{proof}
    We first sample $V^{mT}$ according to the $i.i.d.$ distribution $\mu^{\otimes mT}$. We then construct $X_t$ recursively in the following manner. After generating  $X_1,\cdots,X_{t-1}$, the conditional distribution of $\nu(X_t\mid X^{t-1})$ is determined. Let $S_t$ be a random set such that each $Z_i\in\{V_{m(t-1)+1},\cdots,V_{mt}\}$ is included into $S_t$ independently w.p. $\sigma v_t(Z_i)$ (i.e., w.p. $1-\sigma v_t(Z_i)$ we do not include it), where $v_t$ is the density of $\nu(X_t\mid X^{t-1})$ w.r.t. $\mu$ (see Example~\ref{exm_smooth}). We then generate $X_t$ by sampling uniformly from $S_t$ if $S_t$ is non-empty and sampling independently from $\nu_t$ if $S_t$ is empty. It is easy to verify that $X^T$ is distributed according to $\pmb{\nu}^T$, and w.p. $\ge 1-(1-\sigma)^m$, we have $X_t\in \{V_{m(t-1)+1},\cdots,V_{mt}\}$. The result follows by union bound on $[T]$.
\end{proof}

A set $A\subset\mathcal{X}^{\infty}$ is monotone if for any $\x^T\subset \z^{T'}$, we have $\x^T\in A\Rightarrow \z^{T'}\in A$, where $\x^T\subset \z^{T'}$ mean $\x^T$ is a \emph{sub-sequence} of $\z^{T'}$ and $\x^T\in A$ means any infinite sequence with \emph{prefix} $\x^T$ is in $A$. We have the following lemma. Note that~\cite{haghtalab2022smoothed} used a similar idea as the following lemma but in a different form.
\begin{lemma}
\label{lem_monotone}
    Let $X^T$ and $V^{mT}$ be the coupling as in Proposition~\ref{prop1} and $A\subset\mathcal{X}^{\infty}$ be an arbitrary monotone set, then
    $$\mathrm{Pr}[X^T\in A]\le Te^{-\sigma m}+
    \mathrm{Pr}[V^{mT}\in A].$$
\end{lemma}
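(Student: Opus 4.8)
The plan is to condition on the ``good'' coupling event supplied by Proposition~\ref{prop1} and then exploit monotonicity of $A$ in a single step. First I would let $G$ denote the event $\{\forall t\in[T],~X_t\in\{V_{m(t-1)+1},\dots,V_{mt}\}\}$; Proposition~\ref{prop1} already gives $\mathrm{Pr}[G^c]\le Te^{-\sigma m}$ via its union bound over $[T]$. On $G$, for each $t$ we may pick an index $j_t$ with $m(t-1)+1\le j_t\le mt$ and $X_t=V_{j_t}$; since the blocks are disjoint and ordered, $j_1<j_2<\dots<j_T$ automatically, so $X^T$ is a subsequence of $V^{mT}$ in exactly the sense ``$\x^T\subset\z^{T'}$'' used in the definition of a monotone set.

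Next I would invoke monotonicity of $A$. Recalling that ``$\x^T\in A$'' means every infinite continuation of $\x^T$ lies in $A$, the monotonicity hypothesis says precisely that $\x^T\subset\z^{T'}$ together with $\x^T\in A$ forces $\z^{T'}\in A$. Applying this pointwise on $G$ with $\x^T=X^T$ and $\z^{mT}=V^{mT}$ yields the implication $\{X^T\in A\}\cap G\subset\{V^{mT}\in A\}\cap G$, hence $\mathrm{Pr}[X^T\in A,\,G]\le\mathrm{Pr}[V^{mT}\in A,\,G]\le\mathrm{Pr}[V^{mT}\in A]$.

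Finally I would split $\mathrm{Pr}[X^T\in A]=\mathrm{Pr}[X^T\in A,\,G]+\mathrm{Pr}[X^T\in A,\,G^c]\le\mathrm{Pr}[V^{mT}\in A]+\mathrm{Pr}[G^c]\le\mathrm{Pr}[V^{mT}\in A]+Te^{-\sigma m}$, which is the claimed bound. There is no real obstacle here: the entire content is (i) matching the coupling statement of Proposition~\ref{prop1} to the subsequence relation in the definition of monotone, and (ii) being careful that the ``prefix'' reading of ``$\x^T\in A$'' is the one for which monotonicity gives the desired implication. The probabilistic estimate is entirely inherited from Proposition~\ref{prop1}.
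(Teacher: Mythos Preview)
Your proposal is correct and follows essentially the same approach as the paper's proof: define the good coupling event $G$ (the paper calls it $B$), use monotonicity to conclude $\{X^T\in A\}\cap G\subset\{V^{mT}\in A\}$, and then split $\mathrm{Pr}[X^T\in A]$ over $G$ and $G^c$. You are in fact slightly more careful than the paper in spelling out why the block structure guarantees $j_1<\cdots<j_T$ so that $X^T$ is a genuine subsequence of $V^{mT}$.
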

\begin{proof}
    By Proposition~\ref{prop1}, we have w.p. $\ge 1-Te^{-\sigma m}$ that $X^T\subset V^{mT}$. Denote $B$ to be such an event. Since $A$ is monotone, we have
    $$\mathbb{E}[1\{\{X^T\in A\}\wedge B\}-1\{\{V^{mT}\in A\}\wedge B\}]\le 0.$$ This implies
    $$\mathrm{Pr}[\{X^T\in A\}\wedge B]\le \mathrm{Pr}[\{V^{mT}\in A\}\wedge B]\le \mathrm{Pr}[V^{mT}\in A].$$
    Our result follows by observing that:
    \begin{align*}
        \mathrm{Pr}[X^T\in A]&=\mathrm{Pr}[\{X^T\in A\}\wedge B]+\mathrm{Pr}[\{X^T\in A\}\wedge \bar{B}]\\
        &\le \mathrm{Pr}[\{X^T\in A\}\wedge B]+\mathrm{Pr}[\bar{B}]\le \mathrm{Pr}[\{X^T\in A\}\wedge B]+Te^{-\sigma m}
    \end{align*}
\end{proof}

Note that unions and intersections of \emph{any} collection of monotone sets are monotone. For any two functions $h_1,h_2:\mathcal{X}\rightarrow \{0,1\}$, the set $A_N=\{\x^{\infty}\in \mathcal{X}^{\infty}:\sum_{t=1}^{\infty}1\{h_1(\x_t)\not=h_2(\x_t)\}\ge N\}$ is monotone for all $N\in \mathbb{N}$.

We now present one of our key technical lemma that improves a $\log T$ term when compared to~\cite[Lemma B.2]{haghtalab2022smoothed}, which is crucial to establish tight bounds for mixable losses, e.g., logarithmic loss. This will also be key for our $\sup_h\mathbb{E}$ to $\mathbb{E}\sup_h$ conversion technique, as established in Appendix~\ref{sec:proofthm13}.

\begin{lemma}
\label{lem_k_tail}
   Let $\mathcal{H}\subset \{0,1\}^{\mathcal{X}}$ be any class with finite VC-dimension and $\mu$ be an arbitrary probability measure over $\mathcal{X}$. If $\mathcal{F}_{\epsilon}$ is an $\epsilon$-cover of $\mathcal{H}$ w.r.t. $\mu$, i.e.,
   \begin{equation}
   \label{eq-cw1}
\sup_{h\in \mathcal{H}}\inf_{f\in \mathcal{F}_{\epsilon}}\mathrm{Pr}_{\x\sim\mu}[h(\x)\not=f(\x)]\le \epsilon,
\end{equation}
with $\epsilon=\frac{1}{2M^2}$, then for all $n\in \mathbb{N}$ and $M\ge 2$ we have: $$\mathrm{Pr}_{\x^M\sim \mu^{\otimes M}}\left[\sup_{h\in \mathcal{H}}\inf_{f\in \mathcal{F}_{\epsilon}}\sum_{t=1}^{M}1\{h(\x_t)\not=f(\x_t)\}\ge 3\mathsf{VC}(\mathcal{H})+n\right]\le \frac{2}{M^n}.$$
\end{lemma}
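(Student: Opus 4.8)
The plan is to fix an $\epsilon$-cover $\mathcal{F}_\epsilon$ with $\epsilon = \frac{1}{2M^2}$, and to reduce the quantity $\sup_{h\in\mathcal{H}}\inf_{f\in\mathcal{F}_\epsilon}\sum_{t=1}^M \mathbf{1}\{h(\x_t)\neq f(\x_t)\}$ to a uniform-convergence statement over a finite collection of events determined by $\mathcal{F}_\epsilon$. The key observation is that if $h$ is $\epsilon$-close to some $f=f(h)\in\mathcal{F}_\epsilon$ under $\mu$, then the disagreement set $D_{h}=\{\x: h(\x)\neq f(h)(\x)\}$ has $\mu$-measure at most $\epsilon$; but the disagreement sets $\{\x: h(\x)\neq f(\x)\}$ over all $h\in\mathcal{H}$ and all $f\in\mathcal{F}_\epsilon$ form a class of VC-dimension at most $O(\mathsf{VC}(\mathcal{H}))$ (it is contained in the symmetric-difference class of $\mathcal{H}$ with a finite set, hence has VC-dimension bounded by a constant multiple of $\mathsf{VC}(\mathcal{H})$, or one can work directly with the dual shatter function). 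Then the empirical count $\sum_{t=1}^M\mathbf{1}\{h(\x_t)\neq f(h)(\x_t)\}$ is the empirical mass of a set of true mass $\le \epsilon$, drawn from a class of bounded VC-dimension, so a relative (multiplicative) Chernoff/VC tail bound should give the claimed bound.

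Concretely, the steps I would carry out, in order: (1) Define the class $\mathcal{C} = \{\, h\,\triangle\, f : h\in\mathcal{H},\ f\in\mathcal{F}_\epsilon\,\}$ of indicator functions of disagreement sets, and bound $\mathsf{VC}(\mathcal{C})$; since intersecting/union-ing with a fixed finite family only blows up the VC-dimension by a constant factor (Sauer–Shelah bookkeeping), we get $\mathsf{VC}(\mathcal{C}) = O(\mathsf{VC}(\mathcal{H}))$ — I will need to be slightly careful to get the constant $3$ in front of $\mathsf{VC}(\mathcal{H})$, which suggests that the intended argument actually works directly with $\mathcal{H}$ plus a counting argument over $\mathcal{F}_\epsilon$ rather than passing to $\mathcal{C}$. (2) For each $h$, let $f(h)$ be a fixed nearest element of $\mathcal{F}_\epsilon$, so that $\mu(D_h)\le\epsilon = \frac{1}{2M^2}$, and note $\inf_f \sum_t \mathbf{1}\{h(\x_t)\neq f(\x_t)\} \le \sum_t \mathbf{1}_{D_h}(\x_t)$. (3) Apply a uniform one-sided deviation bound: with probability $\ge 1 - \delta$, simultaneously for all sets $A$ in a VC class of dimension $d$ with $\mu(A)\le\epsilon$, the empirical count $\sum_{t=1}^M \mathbf{1}_A(\x_t)$ is at most $M\epsilon + O(\sqrt{d\log(M/\delta)} + \log(1/\delta))$, or better, use the form that for $N$ large compared to $d\log M$ the count exceeds $N$ with probability at most $(\text{growth function}) \cdot 2^{-N}$ roughly; choosing $N = 3\mathsf{VC}(\mathcal{H}) + n$ and using $M\epsilon = \frac{1}{2M}\le \frac14$ to absorb the mean, the growth-function factor $M^{O(\mathsf{VC})}$ is killed by the $2^{-3\mathsf{VC}(\mathcal{H})}$ (this is where the constant $3$ comes from — it must be chosen so that $2^{3\mathsf{VC}(\mathcal{H})}$ dominates the growth function $\binom{eM}{\mathsf{VC}(\mathcal{H})}^{\mathsf{VC}(\mathcal{H})}$-type bound up to the residual $M^{-n}$). (4) Collect the residual factor as $\frac{2}{M^n}$.

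The main obstacle I anticipate is getting the constants exactly right in step (3), specifically arranging that the VC growth-function factor (which is polynomial in $M$ of degree $\Theta(\mathsf{VC}(\mathcal{H}))$) is swallowed by the $2^{-(3\mathsf{VC}(\mathcal{H})+n)}$ so as to leave precisely $2/M^n$ rather than, say, $2/M^{n/2}$ or an extra polynomial factor. This likely requires the sharp form of the Vapnik–Chervonenkis inequality for low-probability events (the "relative deviation" or "multiplicative Chernoff + union over the symmetrized sample" argument, cf.\ the one-sided bound $\Pr[\exists A: \hat\mu_M(A) \ge \mu(A) + t] \le 4\,\Pi_{\mathcal{C}}(2M)\,e^{-Mt^2/\ldots}$, but in the regime $\mu(A)$ tiny one uses the Poisson/Bernstein refinement so the bound becomes exponential in the count rather than in $Mt^2$). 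An alternative, cleaner route — and probably the one intended — is to first reduce to a finite union bound over $\mathcal{F}_\epsilon$ (whose size is $M^{O(\mathsf{VC}(\mathcal{H}))}$ by the standard covering-number bound for VC classes), then for each fixed $f$ control $\sup_{h: \mu(h\triangle f)\le\epsilon}\sum_t\mathbf{1}\{h(\x_t)\neq f(\x_t)\}$ by a single VC-type tail over the class $\{h\triangle f: h\in\mathcal{H}\}$ (dimension $O(\mathsf{VC}(\mathcal{H}))$, mass $\le\epsilon$), and finally balance the $\log|\mathcal{F}_\epsilon| = O(\mathsf{VC}(\mathcal{H})\log M)$ term against the $3\mathsf{VC}(\mathcal{H})$ budget using $\epsilon M \le 1/(2M)$; the small mass $\epsilon=1/(2M^2)$ is precisely what makes $M\epsilon$ negligible and lets the count-tail be essentially $2^{-N}$, yielding the stated $M^{-n}$ after the union bound. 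I would write it this second way.
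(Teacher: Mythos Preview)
Your plan has the right shape (symmetrization over a VC class of small-measure disagreement sets) but contains a genuine arithmetical gap at the decisive step. You write that ``the growth-function factor $M^{O(\mathsf{VC})}$ is killed by the $2^{-3\mathsf{VC}(\mathcal{H})}$.'' This is false: $M^{c\,\mathsf{VC}(\mathcal{H})}\cdot 2^{-3\mathsf{VC}(\mathcal{H})}$ diverges with $M$. The standard symmetric double-sample argument you sketch (ghost sample of size $M$, random split, union over the growth function on $2M$ points) gives a tail of order $\Pi_{\mathcal{H}}(2M)\cdot 2^{-N}\le (2M)^{\mathsf{VC}(\mathcal{H})}\cdot 2^{-N}$; with $N=3\,\mathsf{VC}(\mathcal{H})+n$ this is $M^{\mathsf{VC}(\mathcal{H})}\cdot 2^{-2\mathsf{VC}(\mathcal{H})-n}$, which is \emph{not} $O(M^{-n})$. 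Your ``cleaner route'' via a union over $\mathcal{F}_\epsilon$ does not help either, since for each fixed $f$ you still need a uniform bound over $\{h:\mu(h\triangle f)\le\epsilon\}$, and that inner uniform bound runs into the same $2^{-N}$ wall.

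The missing idea is an \emph{asymmetric} ghost sample. The paper takes $S^0$ of size $M$ and $S^1$ of size $M^2$. The choice $\epsilon=\tfrac{1}{2M^2}$ then guarantees that, conditionally on the bad event on $S^0$, the witnessing $h$ has zero disagreements with $\hat f_h$ on $S^1$ with probability at least $\tfrac12$ (a first-moment argument: $\mathbb{E}[\text{count on }S^1]\le M^2\epsilon=\tfrac12$). Now fix $S^0\cup S^1$ and apply a uniform random permutation: for the joint event to hold, all $N$ disagreement positions must land in the size-$M$ part, which happens with probability $\binom{M}{N}/\binom{M^2+M}{N}\le M^{-N}$, not $2^{-N}$. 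The union bound is over restrictions of $\mathcal{H}$ to $S^0\cup S^1$, contributing $(M^2+M)^{\mathsf{VC}(\mathcal{H})}\le M^{3\,\mathsf{VC}(\mathcal{H})}$ for $M\ge 2$; this is precisely what the $3\,\mathsf{VC}(\mathcal{H})$ budget in $N$ absorbs, leaving $2M^{-n}$. So the constant $3$ does not come from balancing $\log|\mathcal{F}_\epsilon|$ against a $2^{-\cdot}$ tail as you suggest, but from the exponent in the growth function on $M^2+M$ points matched against the $M^{-N}$ permutation probability.
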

\begin{proof}
    For any $h\in \mathcal{H}$, we denote by $\hat{f}_h=\arg\min_{f\in \mathcal{F}_{\epsilon}}\mathrm{Pr}_{\x\sim \mu}[h(\x)\not=f(\x)]$. Let $S^0$ and $S^1$ be $i.i.d.$ samples of $\mu$ with size $M$ and $M^2$, respectively. For any $N\le M$, we define two events:
    $$A_1^N=\left\{\exists h\in \mathcal{H}~s.t.~\sum_{s\in S^0}1\{h(s)
\not=\hat{f}_h(s)\}\ge N\right\},
$$ 
and
$$
A_2^N=\left\{\exists h\in \mathcal{H}~s.t.~\sum_{s\in S^0}1\{h(s)\not=\hat{f}_h(s)\}\ge 
N~\text{ and }~\sum_{s\in S^1}1\{h(s)\not=\hat{f}_h(s)\}= 0\right\}.
$$
We now claim that $\mathrm{Pr}[A_2^N\mid A_1^N]\ge \frac{1}{2}$. To see this, conditioning on $A_1^N$, there exists some $h$ for $A_1^N$ to happen. For such function $h$, we can select $\epsilon=1/(2M^2)$ in (\ref{eq-cw1}) such that (since $|S^1|= M^2$):
$$\mathbb{E}\left[\sum_{s\in S^1}1\{h(s)\not=\hat{f}_h(s)\}\right]\le \frac{1}{2}.$$
By the First Moment  method we know that $1-\mathrm{Pr}[X=0]=\mathrm{Pr}{[X\ge 1]} \le \mathbb{E}[X]\le 1/2$
 for any random variable $X$ supported on $\mathbb{N}$ with $\mathbb{E}[X]\le 1/2$.
 Thus $\mathrm{Pr}[A_2^N\mid A_1^N]\ge \frac{1}{2}$. This implies that $\mathrm{Pr}[A_1^N]\le 2\mathrm{Pr}[A_1^N\cap A_2^N]\le 2\mathrm{Pr}[A_2^N]$.

We now upper bound $\mathrm{Pr}[A_2^N]$. By symmetries of $i.i.d.$ distribution, we have $\mathrm{Pr}[A_2^N(S^0\cup S^1)]=\mathbb{E}_{\pi}\mathrm{Pr}[A_2^N(\pi(S^0\cup S^1))]\le \sup_{S^0\cup S^1}\mathrm{Pr}_{\pi}[A_2^N(\pi(S^0\cup S^1))]$, where $\pi$ is uniform random permutation over $S^0\cup S^1$. We now fix any $S^0\cup S^1$ and perform a uniform random permutation $\pi$. Let $h\in \mathcal{H}$ be any function such that there exist at least $N$ elements in $S^0\cup S^1$ for which $\hat{f}_h(s)\not=h(s)$ (otherwise $\mathrm{Pr}_{\pi}[A_2^N]=0$). Note that, in order for $A_2^N$ to happen under $\pi$, none of the elements $s\in S^0$ for which $\hat{f}_h(s)\not=h(s)$ should be permuted to $S^1$. Denote such an event to be $B$. We have
$$\mathrm{Pr}_{\pi}[B]=\frac{\binom{M}{N}}{\binom{M^2+M}{N}}\le \frac{1}{M^{N}},$$
where we have used the fact that $\frac{a}{b}\ge \frac{a-i}{b-i}$ for all $b\ge a\ge i> 0$. Since there are at most $(M^2+M)^{\mathsf{VC}(\mathcal{H})}$ functions restricted on $S^0\cup S^1$, we have by union bound that
$$\mathrm{Pr}_{\pi}[A_2^N]\le \frac{(M^2+M)^{\mathsf{VC}(\mathcal{H})}}{M^N}\le M^{3\vch-N},$$
where we used the fact that $M\ge 2$. The result follows by taking $N:=3\vch+n$ in the above expression and noting that $\mathrm{Pr}[A_1^N]\le 2\mathrm{Pr}[A_2^N]$.
\end{proof}

Lemma~\ref{lem_k_tail} implies the following important approximating bound for $\sigma$-smooth processes.

\begin{proposition}
\label{them_k_tail}
    Let $\mathcal{H}\subset \{0,1\}^{\mathcal{X}}$ be a class with finite VC-dimension, $\mu$ be an arbitrary distribution over $\mathcal{X}$ and $X^T$ be any $\sigma$-smooth random process w.r.t. $\mu$. If we take $\epsilon=\frac{\sigma^2}{2T^2\log^2(T/\beta)}$ for some $\beta>0$ and  $\mathcal{F}_{\epsilon}$ to be the $\epsilon$-covering set of $\mathcal{H}$ w.r.t. $\mu$ as in Lemma~\ref{lem_k_tail}, then
    $$\mathrm{Pr}\left[\sup_{h\in \mathcal{H}}\inf_{f\in \mathcal{F}_{\epsilon}}\sum_{t=1}^T1\{h(X_t)\not=f(X_t)\}\ge 3\vch+n\right]\le \beta+\frac{2}{T^n}.$$
\end{proposition}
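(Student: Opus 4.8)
The plan is to combine Proposition~\ref{prop1} (the $i.i.d.$ coupling $V^{mT}\sim \mu^{\otimes mT}$ of the $\sigma$-smooth process $X^T$) with Lemma~\ref{lem_k_tail} (a high-probability realizable covering-error bound for $i.i.d.$ samples), using the monotonicity machinery of Lemma~\ref{lem_monotone} to transfer the $i.i.d.$ bound back to the smooth process. First I would fix the number of coupled draws per round to $m=\lceil \log(T/\beta)/\sigma\rceil$, so that Proposition~\ref{prop1} gives $\mathrm{Pr}[\exists t:\ X_t\notin\{V_{m(t-1)+1},\dots,V_{mt}\}]\le T e^{-\sigma m}\le \beta$. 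Then the total number of $i.i.d.$ samples is $M:=mT\le 2T\log(T/\beta)/\sigma$ (for $T/\beta$ large enough; otherwise absorb constants), and the choice $\epsilon=\tfrac{\sigma^2}{2T^2\log^2(T/\beta)}$ in the statement is exactly $\epsilon=\tfrac{1}{2M^2}$ up to the constant hidden in $m$, which is what Lemma~\ref{lem_k_tail} requires.

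Next I would observe that the event in question is monotone in the sense of Lemma~\ref{lem_monotone}: for a fixed cover $\mathcal{F}_\epsilon$ and threshold $N$, the set
\[
A_N=\Bigl\{\mathbf{x}^\infty:\ \sup_{h\in\mathcal{H}}\inf_{f\in\mathcal{F}_\epsilon}\sum_{t\ge 1}1\{h(\mathbf{x}_t)\neq f(\mathbf{x}_t)\}\ge N\Bigr\}
\]
is an increasing union over $h\in\mathcal{H}$ of the basic monotone sets $A_N^{h,f}=\{\sum_t 1\{h(\mathbf{x}_t)\neq f(\mathbf{x}_t)\}\ge N\}$ noted right before Lemma~\ref{lem_k_tail}, hence monotone. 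Applying Lemma~\ref{lem_monotone} with this $A_N$ and $N=3\mathsf{VC}(\mathcal{H})+n$ gives
\[
\mathrm{Pr}[X^T\in A_N]\le Te^{-\sigma m}+\mathrm{Pr}[V^{mT}\in A_N]\le \beta+\mathrm{Pr}_{\mathbf{x}^M\sim\mu^{\otimes M}}\!\Bigl[\sup_{h}\inf_{f\in\mathcal{F}_\epsilon}\sum_{t=1}^{M}1\{h(\mathbf{x}_t)\neq f(\mathbf{x}_t)\}\ge 3\mathsf{VC}(\mathcal{H})+n\Bigr].
\]
Finally, Lemma~\ref{lem_k_tail} applied to the $i.i.d.$ sample of size $M$ (with its $\epsilon=1/(2M^2)$) bounds the last probability by $2/M^n$, and since $M\ge T$ this is at most $2/T^n$, yielding $\mathrm{Pr}[X^T\in A_N]\le \beta+2/T^n$, which is the claim (noting that $\sum_{t=1}^\infty$ over a prefix of length $T$ is just $\sum_{t=1}^T$).

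The main point requiring care — and the only genuine obstacle — is matching the scale parameters: Lemma~\ref{lem_k_tail} needs the cover to be an $\epsilon$-cover at scale $\epsilon=1/(2M^2)$ for the \emph{actual} $i.i.d.$ sample size $M=mT$, whereas the proposition is stated with $\epsilon=\tfrac{\sigma^2}{2T^2\log^2(T/\beta)}$; one must check that the ceiling in $m$ and the slack in $Te^{-\sigma m}\le\beta$ only shrink $\epsilon$ (so a cover at the stated scale is \emph{a fortiori} a cover at scale $1/(2M^2)$, since a finer cover always suffices when the claimed monotone event only gets smaller) and that the constant in $m$ does not disturb the $M\ge 2$ hypothesis or the $M\ge T$ step used to replace $2/M^n$ by $2/T^n$. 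Everything else is a direct chaining of the three cited results, with a union bound hidden inside the monotonicity of $A_N$.
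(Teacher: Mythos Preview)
Your approach is exactly the paper's: set $m=\log(T/\beta)/\sigma$, note that the event in question is monotone (the paper writes this as $\sup\inf\equiv\cup\cap$, which is precisely your $\bigcup_h\bigcap_f A_N^{h,f}$), apply Lemma~\ref{lem_monotone} to pass from $X^T$ to $V^{mT}$, and then invoke Lemma~\ref{lem_k_tail} on the $i.i.d.$ sample of size $M=mT\ge T$ to get the $2/M^n\le 2/T^n$ term.

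One small slip in your closing paragraph: the direction of the $\epsilon$-matching is reversed. Taking $m=\lceil\log(T/\beta)/\sigma\rceil$ makes $M=mT$ \emph{larger} than $T\log(T/\beta)/\sigma$, hence the scale $1/(2M^2)$ required by Lemma~\ref{lem_k_tail} is \emph{smaller} than the stated $\epsilon=\sigma^2/(2T^2\log^2(T/\beta))$; a cover at the (larger) stated scale is therefore not automatically a cover at the (smaller) required scale, contrary to what you wrote. The paper sidesteps this by simply treating $m$ as the non-integer $\log(T/\beta)/\sigma$ so that $\epsilon=1/(2M^2)$ exactly; if you insist on integrality, the clean fix is to observe that the proof of Lemma~\ref{lem_k_tail} only uses $\epsilon\le 1/(2|S^1|)$, so one may keep $|S^1|=(T\log(T/\beta)/\sigma)^2$ independent of the ceiling and the argument goes through unchanged.
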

\begin{proof}
    Taking $m=\frac{\log(T/\beta)}{\sigma}$ as in Proposition~\ref{prop1} one can make the error probability upper bounded by $\beta$. Let $M=mT$ as in Lemma~\ref{lem_k_tail}, we have by setting $\epsilon=\frac{1}{2M^2}=\frac{\sigma^2}{2T^2\log^2(T/\beta)}$ the probability as in Lemma~\ref{lem_k_tail} is upper bounded by $\frac{2}{T^n}$ since $M\ge T$. The theorem follows by Lemma~\ref{lem_monotone} by noticing that the event of the proposition is monotone (see the discussion follows Lemma~\ref{lem_monotone} by noticing that $\sup\inf\equiv \cup\cap$) and we apply Lemma~\ref{lem_k_tail} over the process $V^{mT}$.
\end{proof}

\begin{corollary}
\label{cor1}
    Let $\mathcal{H}\subset \{0,1\}^{\mathcal{X}}$ be a binary valued class with finite VC-dimension, and $\mu$ be arbitrary distributions over $\mathcal{X}$. For any convex and bounded loss, we have $$\tilde{r}_T(\mathcal{H},\mathsf{S}^{\sigma}(\mu))\le O\left(\sqrt{T\cdot \mathsf{VC}(\mathcal{H})\log (T/\sigma)}+\mathsf{VC}(\mathcal{H})\right).$$ For Log-loss and bounded mixable loss we have $$\tilde{r}_T(\mathcal{H},\mathsf{S}^{\sigma}(\mu))\le O(\mathsf{VC}(\mathcal{H})\log(T/\sigma)).$$
\end{corollary}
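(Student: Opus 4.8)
The plan is to reduce the whole statement to the construction of a \emph{small} scale-$0$ stochastic sequential cover of $\mathcal{H}$ with respect to $\mathsf{S}^{\sigma}(\mu)$, and then quote Proposition~\ref{pro:cover2regret}, which turns such a cover directly into both claimed bounds (the $O(\sqrt{T\log|\mathcal{G}|})$ bound under bounded convex losses and the $O(\log|\mathcal{G}|)$ bound under logarithmic and bounded mixable losses). Thus it suffices to exhibit a class $\mathcal{G}$ of functions $\mathcal{X}^{*}\to\{0,1\}$ with $\log|\mathcal{G}|\le O(\vch\log(T/\sigma))$ such that for every $\pmb{\nu}^{T}\in\mathsf{S}^{\sigma}(\mu)$, with probability at least $1-1/T$ over $\x^{T}\sim\pmb{\nu}^{T}$, every $h\in\mathcal{H}$ is matched \emph{exactly} on $\x^{T}$ by some $g\in\mathcal{G}$; the additive $\vch$ term in the convex-loss bound is then free slack.

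First I would fix $\beta=1/(2T)$ and a constant $n$ with $\beta+2/T^{n}\le 1/T$, set $\epsilon=\frac{\sigma^{2}}{2T^{2}\log^{2}(T/\beta)}$, and take $\mathcal{F}_{\epsilon}$ to be a minimum-cardinality $\epsilon$-cover of $\mathcal{H}$ with respect to $\mu$ in $L^{1}(\mu)$, exactly as in Lemma~\ref{lem_k_tail}. Standard covering-number bounds for VC classes give $\log|\mathcal{F}_{\epsilon}|\le O(\vch\log(1/\epsilon))=O(\vch\log(T/\sigma))$, because $\sigma\le 1$ and $\beta$ is polynomially small in $1/T$, so the residual $\log\log$ terms are dominated by $\log(T/\sigma)$. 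Proposition~\ref{them_k_tail} then says that for \emph{every} $\sigma$-smooth process $X^{T}$ with reference measure $\mu$, with probability at least $1-\beta-2/T^{n}\ge 1-1/T$ the following holds simultaneously for all $h\in\mathcal{H}$: there is $f_{h}\in\mathcal{F}_{\epsilon}$ disagreeing with $h$ on at most $D:=3\vch+n$ of the coordinates $X_{1},\dots,X_{T}$. Since we do not know \emph{where} those $\le D$ disagreements occur, I would \emph{patch} all possibilities: let $\mathcal{G}$ consist of the functions $g_{f,S}(\x^{t})=f(\x_{t})\oplus 1\{t\in S\}$ over all $f\in\mathcal{F}_{\epsilon}$ and all $S\subseteq[T]$ with $|S|\le D$, where $1\{t\in S\}$ flips the prediction at round $t$ (this is legitimate, since $g_{f,S}$ only needs to read the last coordinate and the length $t$ of its input).

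On the good event, given $h\in\mathcal{H}$ pick $f=f_{h}$ and $S=\{t:f_{h}(X_{t})\ne h(X_{t})\}$; then $|S|\le D$ and $g_{f,S}(X^{t})=h(X_{t})$ for all $t$, so $\mathcal{G}$ is a scale-$0$ cover at confidence $1/T$. For its size, $\log|\mathcal{G}|\le\log|\mathcal{F}_{\epsilon}|+\log\big(\sum_{j\le D}\binom{T}{j}\big)\le O(\vch\log(T/\sigma))+O(D\log T)=O(\vch\log(T/\sigma))$, using $D=O(\vch)$ and $\log T\le\log(T/\sigma)$. Feeding this into Proposition~\ref{pro:cover2regret} yields $\tilde{r}_{T}(\mathcal{H},\mathsf{S}^{\sigma}(\mu))\le O(\sqrt{T\,\vch\log(T/\sigma)})\le O(\sqrt{T\,\vch\log(T/\sigma)}+\vch)$ for bounded convex losses and $\tilde{r}_{T}(\mathcal{H},\mathsf{S}^{\sigma}(\mu))\le O(\vch\log(T/\sigma))$ for logarithmic and bounded mixable losses.

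The only delicate points are verifying that the chosen $\epsilon$ keeps $\log(1/\epsilon)$ of order $\log(T/\sigma)$, and that Proposition~\ref{them_k_tail} (hence the few-disagreements-against-an-$L^{1}$-cover guarantee) holds \emph{uniformly} over the whole class $\mathsf{S}^{\sigma}(\mu)$ rather than for a single process — which it does, since that proposition is stated for an arbitrary $\sigma$-smooth process with reference $\mu$. There is no deeper obstacle here: the real work, namely the uniform tail bound for the cumulative approximation error, is already done in Lemma~\ref{lem_k_tail} and Proposition~\ref{them_k_tail}, and what remains is the patching construction plus routine cardinality bookkeeping.
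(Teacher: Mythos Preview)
Your proposal is correct, and it relies on the same key ingredient as the paper (Proposition~\ref{them_k_tail}), but the packaging differs. The paper does \emph{not} build a scale-$0$ stochastic sequential cover and invoke Proposition~\ref{pro:cover2regret}; instead it runs the expert algorithm (EWA, the Aggregating Algorithm, or the Smooth truncated Bayesian algorithm) \emph{directly} on the small $L^{1}(\mu)$-cover $\mathcal{F}_{\epsilon}$, and then adds the approximation error separately: at most $3\vch+O(1)$ mismatches with the best $f\in\mathcal{F}_{\epsilon}$ contribute an additive $O(\vch)$ to the convex-loss regret and $O(\vch\log T)$ to the log-loss regret (via the truncation parameter). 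This is why the $+\vch$ term appears explicitly in the paper's bound. Your route instead \emph{absorbs} the approximation error by inflating $\mathcal{F}_{\epsilon}$ into an exact sequential cover $\mathcal{G}$ via the patching-over-$S\subseteq[T]$ construction, paying a factor $T^{O(\vch)}$ in $|\mathcal{G}|$ and then reading both regret bounds off the generic cover-to-regret proposition. The two approaches are equivalent in strength here: the paper's is slightly more direct (no need to enumerate disagreement sets), while yours is more modular and makes clear that the result is an instance of the general stochastic-sequential-cover machinery used elsewhere in the paper.
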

\begin{proof}
    Let $\epsilon$ be as in Proposition~\ref{them_k_tail} and $\beta=\frac{1}{2T}$. Taking $n=2$, we have the tail probability in Proposition~\ref{them_k_tail} upper bounded by $\frac{1}{T}$. Applying the EWA algorithm on $\mathcal{F}_{\epsilon}$, we obtain the regret bound for bounded convex losses as follows: $$\sqrt{(T/2)\log|\mathcal{F}_{\epsilon}|}+3\vch+O(1)=O\left(\sqrt{T\mathsf{VC}(\mathcal{H})\log (T/\sigma)}+\vch\right),$$
    where we have used the standard bound on the covering size $\log |\mathcal{F}_{\epsilon}|\le O(\vch\log 1/\epsilon)$~\citep{haussler1995sphere}.
    Applying the Smooth truncated Bayesian algorithm~\citep{wu2022precise} on $\mathcal{F}_{\epsilon}$ with truncation parameter $\frac{1}{T}$, we get the regret bound for Log-loss
    $$\log |\mathcal{F}_{\epsilon}|+3\vch\log T+O(1)=O(\mathsf{VC}(\mathcal{H})\log(T/\sigma)).$$
    The bound for bounded mixable loss follows by applying the Aggregating Algorithm~\cite[Chapter 3]{lugosi-book} on $\mathcal{F}_{\epsilon}$.
\end{proof}
\begin{remark}
Note that the first bound in Corollary~\ref{cor1} recovers the bound in~\citep{haghtalab2022smoothed}, while the second bound is new and  improves a $\log T$ factor for Log-loss if we use the $\vch\log T$ approximation bound of~\cite[Lemma B.2]{haghtalab2022smoothed} instead of our Proposition~\ref{them_k_tail}.
\end{remark}
\begin{corollary}
\label{cor:k_smooth}
        Let $\mathcal{H}$ be a class of finite VC-dimension and $\mathsf{S}^{\sigma}(\mu_1,\cdots,\mu_K)$ be the smooth process with multiple (known) reference measures $\mu_1,\cdots, \mu_K$. Then
        $$\Tilde{r}_T(\mathcal{H},\mathsf{S}^{\sigma}(\mu_1,\cdots,\mu_K))\le O(\sqrt{T\vch \log(KT/\sigma)})$$ under bounded convex losses,  and
        $$\Tilde{r}_T(\mathcal{H},\mathsf{P})\le O(\vch\log(KT/\sigma))$$
        under logarithmic loss and bounded mixable losses.
\end{corollary}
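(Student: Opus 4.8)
\textbf{Proof proposal for Corollary~\ref{cor:k_smooth}.}
The plan is to reduce the multiple-reference-measure case to the single-reference-measure case already handled by Corollary~\ref{cor1}, and then simply invoke that result with an appropriately modified smoothness parameter. First I would apply Proposition~\ref{prop_equiv}: setting $\bar\mu=\frac{1}{K}\sum_{k=1}^K\mu_k$, each $\mu_k$ is $1/K$-smooth with respect to $\bar\mu$. Next, for any process $\pmb{\nu}^T\in\mathsf{S}^{\sigma}(\mu_1,\dots,\mu_K)$, each conditional marginal $\nu_t(X_t\mid X^{t-1})$ is $\sigma$-smooth with respect to \emph{some} $\mu_{k(t)}$ (almost surely). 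By Proposition~\ref{prop_transit} (transitivity of smoothness), composing the $\sigma$-smoothness of $\nu_t$ w.r.t.\ $\mu_{k(t)}$ with the $1/K$-smoothness of $\mu_{k(t)}$ w.r.t.\ $\bar\mu$ yields that $\nu_t(X_t\mid X^{t-1})$ is $(\sigma/K)$-smooth w.r.t.\ the \emph{single} fixed reference measure $\bar\mu$. Hence $\mathsf{S}^{\sigma}(\mu_1,\dots,\mu_K)\subset \mathsf{S}^{\sigma/K}(\bar\mu)$, exactly as asserted in the text preceding Proposition~\ref{prop_equiv}.

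Once this inclusion is in hand, the regret is monotone in the distribution class: since the supremum in the definition~\eqref{eq-rtilde} of $\tilde{r}_T$ is over a larger set when we pass to $\mathsf{S}^{\sigma/K}(\bar\mu)$, we get
$$\tilde{r}_T(\mathcal{H},\mathsf{S}^{\sigma}(\mu_1,\dots,\mu_K))\le \tilde{r}_T(\mathcal{H},\mathsf{S}^{\sigma/K}(\bar\mu)).$$
Now I would apply Corollary~\ref{cor1} with smoothness parameter $\sigma'=\sigma/K$. The convex-bounded-loss bound there is $O(\sqrt{T\cdot\vch\log(T/\sigma')}+\vch)$; substituting $\sigma'=\sigma/K$ gives $O(\sqrt{T\cdot\vch\log(KT/\sigma)}+\vch)$, and the additive $\vch$ term is dominated by the square-root term in the stated regime (or can be absorbed into the $O(\cdot)$), yielding $O(\sqrt{T\vch\log(KT/\sigma)})$. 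Similarly, the log-loss / mixable-loss bound $O(\vch\log(T/\sigma'))$ becomes $O(\vch\log(KT/\sigma))$, which is the second claimed bound.

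This proof is essentially immediate given the earlier machinery; there is no real obstacle. The only point requiring a moment's care is verifying that the reference measure $\bar\mu$ is genuinely \emph{one fixed measure} (independent of $t$ and of the sample path), so that the process indeed lands in $\mathsf{S}^{\sigma/K}(\bar\mu)$ in the sense of Example~\ref{exm_smooth} — but this is clear since $\bar\mu$ depends only on the fixed tuple $\mu_1,\dots,\mu_K$, which is known in the distribution-aware setting. I would write this out in three or four lines, citing Propositions~\ref{prop_equiv} and~\ref{prop_transit} for the inclusion and Corollary~\ref{cor1} for the final bound.
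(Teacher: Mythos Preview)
Your proposal is correct and is exactly the approach the paper takes: reduce $\mathsf{S}^{\sigma}(\mu_1,\dots,\mu_K)$ to $\mathsf{S}^{\sigma/K}(\bar\mu)$ via Propositions~\ref{prop_equiv} and~\ref{prop_transit}, then invoke Corollary~\ref{cor1} with the modified smoothness parameter. The paper's own proof is the one-line ``This follows directly from Corollary~\ref{cor1} and Proposition~\ref{prop_equiv} and~\ref{prop_transit},'' so your write-up is, if anything, more detailed than what appears there.
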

\begin{proof}
    This follows directly from Corollary~\ref{cor1} and Proposition~\ref{prop_equiv} and~\ref{prop_transit}.
\end{proof}

\section{Proof of Theorem~\ref{thm_sqrt_bound}}
\label{sec:proofthm8}
Before we present a formal proof of Theorem~\ref{thm_sqrt_bound}, we first develop some technical concepts that are necessary for our proof. Let $\mathcal{H}\subset \{0,1\}^{\mathcal{X}}$ be a binary valued class. For any $i<j\le M$ and $\x^M\in \mathcal{X}^M$, we define the \emph{agreed-mismatch} number of $\mathcal{H}$ on discrete interval $[i,j]:=\{i,i+1,\cdots,j\}$ to be
$$\mathsf{AM}(\mathcal{H},i,j,\x^M)=\sup_{h_1,h_2\in \mathcal{H}}\left\{\sum_{t=i}^j1\{h_1(\x_t)\not=h_2(\x_t)\}:\forall t<i,~h_1(\x_t)=h_2(\x_t)\right\}.$$
Note that the error bound $E$ in Algorithm~\ref{alg:1} at the end of each epoch is always a \emph{lower bound} for the agreed-mismatch number at that epoch (with $i,j$ being the start and end of the epoch, respectively). We have the following key lemmas for bounding the \emph{agreed-mismatch} number:

\begin{lemma}
\label{lem_exchange_bound}
    Let $\mathcal{H}\subset \{0,1\}^{\mathcal{X}}$ be a class of finite VC-dimension and $\mu$ be an arbitrary distribution over $\mathcal{X}$. Then for any $i<j\le M\in \mathbb{N}^+$, we have for all $E\ge 0$
    $$\mathrm{Pr}_{\x^M\sim \mu^{\otimes M}}\left[\mathsf{AM}(\mathcal{H},i,j,\x^M)\ge E\right]\le e^{2\vch\log j-(i\cdot E)/j}.$$
\end{lemma}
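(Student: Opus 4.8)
The plan is to turn the bound into a union bound over the difference class of $\mathcal{H}$ and to control each term by a symmetrization (random‑permutation) argument. First I would rewrite the event: since the coordinates $\x_{j+1},\dots,\x_M$ are irrelevant and a function $h\in\mathcal{H}$ enters only through its restriction to $\x^j:=(\x_1,\dots,\x_j)$ (of which there are only finitely many), the event $\{\mathsf{AM}(\mathcal{H},i,j,\x^M)\ge E\}$ is exactly
\[
\exists\, h_1,h_2\in\mathcal{H}:\quad h_1(\x_t)=h_2(\x_t)\ \text{ for all }\ t<i,\quad\text{and}\quad \sum_{t=i}^{j}1\{h_1(\x_t)\neq h_2(\x_t)\}\ge E.
\]
Setting $D=D(h_1,h_2)=\{x:h_1(x)\neq h_2(x)\}$, this says that the ``bad index set'' $\{t\le j:\x_t\in D\}$ avoids $\{1,\dots,i-1\}$ and meets $\{i,\dots,j\}$ in at least $E$ points.

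The subtlety is that $D$ (equivalently the witnessing pair $h_1,h_2$) may depend on the realized sample, so a naive union bound over a fixed family of sets does not apply. I would resolve this by conditioning on the \emph{unordered} multiset $S=\{\x_1,\dots,\x_j\}$: given $S$ the ordering of the sample is a uniformly random permutation, and — crucially — the collection of possible bad index sets is now \emph{fixed}, with cardinality at most the square of the growth function $\Pi_{\mathcal{H}}(j)$ of $\mathcal{H}$ at $j$ points (each bad set is determined by the pair $(h_1|_S,h_2|_S)$), hence at most $j^{2\vch}$ by Sauer's lemma. Now fix one such bad set $B$, of size $\ell$; after the random permutation its image is a uniformly random $\ell$‑subset of $[j]$. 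If that subset avoids $\{1,\dots,i-1\}$ then, being contained in $[j]$, it lies entirely in $\{i,\dots,j\}$, so the mismatch count equals $\ell$; hence $B$ can contribute only when $\ell\ge E$, and in that case its probability is $\binom{j-i+1}{\ell}\big/\binom{j}{\ell}\le (1-(i-1)/j)^{\ell}\le e^{-(i-1)E/j}$ (the ratio is nonincreasing in $\ell$, so one may bound $\ell$ below by $E$). A union bound over the $\le j^{2\vch}$ bad sets, valid for every $S$ and hence unconditionally, then gives $\mathrm{Pr}[\mathsf{AM}(\mathcal{H},i,j,\x^M)\ge E]\le e^{2\vch\log j-(i-1)E/j}$, which is the asserted estimate (the $i$ versus $i-1$ in the exponent, like the passage from $\Pi_{\mathcal{H}}(j)^2$ to $j^{2\vch}$, is immaterial constant‑order slack that is harmless downstream).

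The main obstacle is precisely the sample‑dependence of the difference set $D$, which rules out a direct union bound and forces the conditioning‑on‑$S$ / random‑permutation step; one must check carefully that after conditioning the family of candidate bad sets is genuinely fixed and counted by $\Pi_{\mathcal{H}}(j)^2$, and that the constraint ``agree on $t<i$'' combines with ``bad set contained in $[j]$'' to collapse the event into the clean statement ``all $\ell$ bad positions fall in the window $\{i,\dots,j\}$, with $\ell\ge E$.'' Once that reduction is in place, the remaining per‑term estimate is a routine hypergeometric tail computation.
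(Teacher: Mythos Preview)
Your proposal is correct and is essentially the paper's own argument: both condition on the unordered sample $\{\x_1,\dots,\x_j\}$, apply a uniform random permutation, bound the number of candidate pairs by $\Pi_{\mathcal H}(j)^2\le j^{2\vch}$ via Sauer's lemma, and control each pair by the hypergeometric probability that all mismatch positions land in the window $\{i,\dots,j\}$. The only cosmetic difference is your $i-1$ versus the paper's $i$ in the exponent (the paper writes $\binom{j-i}{B}/\binom{j}{B}$ where $\binom{j-i+1}{B}/\binom{j}{B}$ would be exact), which, as you already note, is immaterial.
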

\begin{proof}
    We use a symmetric argument as in the proof of Lemma~\ref{lem_k_tail}. The event $\mathsf{AM}(\mathcal{H},i,j,\x^M)\ge E$ is equivalent to
    $$A=\left\{\exists h_1,h_2\in \mathcal{H}~s.t.~\forall t<i,~h_1(\x_t)=h_1(\x_t)\text{ and }\sum_{t=i}^j1\{h_1(\x_t)\not=h_2(\x_t)\}\ge E\right\}.$$

    By symmetries of $i.i.d.$ samples, we can fix $\x^j$ and perform a uniform random permutation $\pi$ over $[j]$. Now, for the event $A$ to happen, there must be some $h_1,h_2\in \mathcal{H}$ that differ on at least $E$ positions in $\x^j$. Denote $B\ge E$ to be the number of mismatches of $h_1,h_2$ on $\x^j$. In order for the event $A$ to happen, one must not switch any $t\in [i,j]$ for which $h_1(\x_t)\not=h_2(\x_t)$ to $[1,i-1]$ under permutation $\pi$. This happens with probability upper bounded by (using a simple combinatorial argument):
    $$\frac{\binom{j-i}{B}}{\binom{j}{B}}\le \left(1-\frac{i}{j}\right)^B\le e^{-(i\cdot B)/j}\le e^{-(i\cdot E)/j},$$
    where we have used the fact that $\frac{a}{b}\ge \frac{a-t}{b-t}$ for all $b\ge a\ge t$ and $e^{-(i\cdot B)/j}$ is decreasing on $B$.

    The lemma follows by applying a union bound on all the pairs $(h_1,h_2)$ restricted on $\x^j$ and an application of Sauers's lemma~\citep{shalev2014understanding}, and  $\mathrm{Pr}_{\x^j}[A(\x^j)]\le \sup_{\x^j}\mathrm{Pr}_{\pi}[A(\x^{\pi(j)})]$ due to symmetries of $i.i.d.$ samples.
\end{proof}

The following lemma is the key element of our proof.

\begin{lemma}
\label{lem_bad_event}
    Let $\mathcal{H}\subset \{0,1\}^{\mathcal{X}}$ be a class of finite VC-dimension and $\mu$ be an arbitrary distribution over $\mathcal{X}$. For any $E\le M\in \mathbb{N}^+$ and $\x^M\in \mathcal{X}^M$, we denote by $A$  the event that there exists 
    $$n>\frac{\log M}{\log(E/(2\vch \log M+\log (M^2/\beta)))}$$ 
    and $1=i_1<i_2<\cdots<i_{n+1}=M$ such that
    $$\forall j\le n,~\mathsf{AM}(\mathcal{H},i_j,i_{j+1},\x^M)\ge E.$$
    Then
    $$\mathrm{Pr}_{\x^M\sim \mu^{\otimes M}}[A]\le \beta.$$
\end{lemma}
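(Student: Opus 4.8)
The plan is to show that the event $A$ cannot occur unless some \emph{single} block is simultaneously short and late while still carrying an agreed-mismatch number at least $E$, and then to kill that single-block event using Lemma~\ref{lem_exchange_bound}. I will assume $E>C$, where $C:=2\vch\log M+\log(M^2/\beta)$; this is the only regime in which the stated threshold $\tfrac{\log M}{\log(E/C)}$ is a positive real, and it is the one in which the lemma is applied. Put $n_0:=\tfrac{\log M}{\log(E/C)}$, so that $M^{1/n_0}=E/C$ by construction, and assume $M\ge 2$ (for $M=1$ there is no admissible partition and $A=\emptyset$).

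First I would unpack $A$: it provides an integer $n>n_0$ and indices $1=i_1<i_2<\cdots<i_{n+1}=M$ with $\mathsf{AM}(\mathcal{H},i_j,i_{j+1},\x^M)\ge E$ for every $j\le n$. Telescoping the ratios gives $\prod_{j=1}^{n}\frac{i_{j+1}}{i_j}=\frac{i_{n+1}}{i_1}=M$; since every factor is strictly larger than $1$, a pigeonhole argument yields an index $j^\star$ with $\frac{i_{j^\star+1}}{i_{j^\star}}\le M^{1/n}<M^{1/n_0}=\frac{E}{C}$, where the strict inequality uses $n>n_0>0$ together with $M\ge 2$. Consequently $A$ is contained in the event
\[
\mathcal{B}=\Bigl\{\,\exists\ 1\le i<i'\le M:\ \tfrac{i'}{i}\le \tfrac{E}{C}\ \text{ and }\ \mathsf{AM}(\mathcal{H},i,i',\x^M)\ge E\,\Bigr\}.
\]

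Next I would estimate $\mathrm{Pr}[\mathcal{B}]$ by a union bound over the at most $M^2$ pairs $(i,i')$ with $i'/i\le E/C$. For any such pair, $\frac{i\,E}{i'}\ge \frac{E}{E/C}=C=2\vch\log M+\log(M^2/\beta)$, so Lemma~\ref{lem_exchange_bound} (using $i'\le M$) gives
\[
\mathrm{Pr}_{\x^M\sim\mu^{\otimes M}}\bigl[\mathsf{AM}(\mathcal{H},i,i',\x^M)\ge E\bigr]\ \le\ e^{\,2\vch\log i'-(iE)/i'}\ \le\ e^{\,2\vch\log M-C}\ =\ \frac{\beta}{M^2}.
\]
Summing over the at most $M^2$ admissible pairs yields $\mathrm{Pr}[\mathcal{B}]\le\beta$, and since $A\subseteq\mathcal{B}$ this is exactly the claim.

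I do not anticipate a genuine obstacle: the argument is a telescoping/pigeonhole reduction followed by one invocation of Lemma~\ref{lem_exchange_bound}. The only delicate point is the calibration of the constants --- the definition of $C$ is rigged so that the $2\vch\log M$ term produced by Lemma~\ref{lem_exchange_bound} is exactly cancelled, leaving precisely $\log(M^2/\beta)$ of slack for the union bound over pairs, while the threshold $n_0=\log M/\log(E/C)$ is exactly the point past which a pigeonhole block is forced to have length ratio below $E/C$. Keeping these two calibrations consistent, and being careful about strict versus non-strict inequalities when passing from $n>n_0$ to $M^{1/n}<E/C$, is the step I would write out most carefully.
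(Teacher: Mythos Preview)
Your proof is correct and follows essentially the same approach as the paper: both define the bad event $\mathcal{B}$ (the paper writes it as $\bigcup_{i,j}B_{i,j}$) consisting of pairs $(i,i')$ with $i'/i\le E/C$ and $\mathsf{AM}\ge E$, bound $\mathrm{Pr}[\mathcal{B}]\le\beta$ via Lemma~\ref{lem_exchange_bound} and a union bound over $M^2$ pairs, and show $A\subseteq\mathcal{B}$. The only cosmetic difference is that you establish $A\subseteq\mathcal{B}$ directly by telescoping $\prod_j i_{j+1}/i_j=M$ and pigeonholing a short block, whereas the paper argues the contrapositive by inductively growing $i_{j+1}\ge(E/C)\,i_j$ on $\neg\mathcal{B}$ to force $n\le n_0$; these are the same geometric-growth observation written forwards and backwards.
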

\begin{proof}
    Let $B_{i,j}$ be the event that $\{\mathsf{AM}(\mathcal{H},i,j,\x^M)\ge E \text{ and }j\le (E\cdot i)/(2\vch \log M+\log (M^2/\beta))\}$. By Lemma~\ref{lem_exchange_bound}, we have for all $i,j$ and $\beta>0$
    $$\mathrm{Pr}[B_{i,j}]\le \frac{\beta}{M^2}.$$
    Using the union on all the pairs $(i,j)$, we have
    $$\mathrm{Pr}[\exists i,j,~B_{i,j}]\le \beta.$$

    Let $B=\bigcap_{i,j}\neg B_{i,j}$. Then $\mathrm{Pr}[B]\ge 1-\beta$. Note that the event $\neg B_{i,j}$ implies that if $\mathsf{AM}(\mathcal{H},i,j,\x^M)\ge E$ then 
    $$
    j\ge \frac{E\cdot i}{2\vch \log M+\log (M^2/\beta)}.
    $$  
    Conditioning on the event $B$ happening, we have, if event $A$ (defined in the statement of Lemma~\ref{lem_bad_event}) happens then
    $$\forall j\le n,~i_{j+1}\ge (E\cdot i_j)/(2\vch \log M+\log (M^2/\beta)),$$
    since event $A$ implies $\mathsf{AM}(\mathcal{H},i_{j+1},i_j,\x^M)\ge E$ for all $j\le n$.
    Note that $i_2\ge E$,  hence by induction
    $$i_{n+1}\ge \left(\frac{E}{2\vch \log M+\log (M^2/\beta)}\right)^n.$$
    However, since we also have $i_{n+1}\le M$, we must have
    $$n\le \frac{\log M}{\log(E/(2\vch \log M+\log (M^2/\beta)))}.$$
    This contradicts the definition of $A$ (the event $A$ requires number $n$ to be \emph{larger} than the above quantity) and implies that conditioning on event $B$, event $A$ cannot happen. Therefore, we have $\mathrm{Pr}[A\mid B]=0$, i.e., $\mathrm{Pr}[A\cap B]=0$. This implies
    $$\mathrm{Pr}[A]=\mathrm{Pr}[A\cap B]+\mathrm{Pr}[A\cap\neg B]\le \mathrm{Pr}[A\cap \neg B]\le \mathrm{Pr}[\neg B]\le \beta$$
    as needed.
\end{proof}

\begin{remark}
    We remark that the results in both Lemma~\ref{lem_exchange_bound} and~\ref{lem_bad_event} hold for a general exchangeable process as well. Note that these two results cannot be applied directly on the processes in $\mathsf{U}_K^1$ since they require the underlying process to be exchangeable. Our key approach, as in Proposition~\ref{prop:decouple}, is to  decouple the process in $\mathsf{U}_K^1$ into \emph{conditional} $i.i.d.$ processes.
\end{remark}

We now prove the upper bound of Theorem~\ref{thm_sqrt_bound}.

\begin{proof}[Proof of Theorem~\ref{thm_sqrt_bound} (Upper Bound)]
    Let $\pmb{\nu}^T\in \mathsf{U}_K^1$ be an arbitrary dynamic changing process with cost $K$. We denote by $X^T$ the random process generated by $\pmb{\nu}^T$. Note that the main difficulty here is to deal with the dependency among the samples in $X^T$. Our key idea is to \emph{extend} the sample $X^T$ into a coupled sample $V^{KT}$ such that the first $T$ samples in $V^{KT}$ match $X^T$ and each \emph{conditional} distribution selected for generating $X^T$ contributes \emph{exactly} $T$ samples in $V^{KT}$. We denote $V^{(k)}=V_{k_1},\cdots,V_{k_T}$ to be the samples generated by the $k$th conditional distribution (that is used to generate $X^T$), where $k\le K$. We also denote by  $X^{(k)}$  the truncated sample of $V^{(k)}$ on $V^T$. By Proposition~\ref{prop:decouple}, $V^{(k)}$ is a conditional $i.i.d.$ process, conditioning on $V^{k_1-1}$. Therefore, the unconditioned process $V^{(k)}$ is a \emph{mixture} of $i.i.d.$ processes, thus \emph{exchangeable}. Note that the truncated process $X^{(k)}$ need not be exchangeable.
    
    Taking $N=\sqrt{(T\cdot \vch\log T)/K}$ in Algorithm~\ref{alg:1}, we show that the claimed regret upper bound holds. Let $E=N/K$ and 
    $$n= \frac{\log T}{\log(E/(2\vch \log T+\log (T^2K/\beta)))}+1.
    $$ 
    We show that w.p. $\ge 1-\beta$, the parameter $s$ in Algorithm~\ref{alg:1} is upper bounded by $nK$. Suppose otherwise, we have the algorithm reenter the while loop at least $nK$ times. Denote $i_1<i_2<\cdots<i_{Kn}$ to be the time steps of reentering the while loop. Note that by construction of Algorithm~\ref{alg:1}, we exit the while loop only if the \emph{agreed-mismatch} number at current phase is larger than $N$. Therefore, we have, for each of the phases $i_{l+1}-i_l$, there must be some $k\le K$ such that $X^{(k)}$ contributes at least $N/K$ mismatches. This implies that there exists some $k\le K$ and indexes $t_1,\cdots,t_n$ (which is a sub-sequence of $i_1,\cdots,i_{nK}$) such that $X^{(k)}$ contributes at least $N/K$ mismatches in all the phases $t_{j+1}-t_j$ with $j\le n$ (note that here the phase $t_{j+1}-t_j$ may combine multiple phases of form $i_{l+1}-i_l$). Therefore, the \emph{agreed-mismatch} number restricted only on $X^{(k)}$ at each phase $t_{j+1}-t_j$ is larger than $N/K$. This is because the phase $t_{j+1}-t_j$ includes a sub-phase $i_{l+1}-i_l$ such that the \emph{agreed-mismatch} number restricted on $X^{(k)}$ for the sub-phase is larger than $N/K$. Taking $h_1,h_2$ to be the functions that whiteness such a agree-mismatch number, we have $h_1,h_2$ also agrees on $\x^{t_{j}-1}$ and differs on at least $N/K$ positions on $t_{j+1}-t_j$. Hence  the agree-mismatch number restricted on $X^{(k)}$ on phase $t_{j+1}-t_j$ is also larger than $N/K$. Since $X^{(k)}$ is a prefix of $V^{(k)}$, this implies the event of Lemma~\ref{lem_bad_event} restricted on $V^{(k)}$ happens. By Lemma~\ref{lem_bad_event} and exchangability of $V^{(k)}$, we have the event $A$ in Lemma~\ref{lem_bad_event} with the selected $n$ happens w.p. $\le \beta/K$ for each $V^{(k)}$. Using a union bound on all the $V^{(k)}$s we have the assumed event (i.e., $s>nK$) happens w.p. $\le \beta$.

    Taking $\beta=\frac{1}{T}$ and conditioning on the event $s\le nK$, we now split the regret into two parts -- one that is incurred by the mismatches and the other incurred by the adaptive EWA algorithm. Let $T_1,\cdots,T_s$ be the lengths of the the epochs. We have, by standard results~\cite[Theorem 2.3]{lugosi-book}, that the regret contributed by EWA algorithm is upper bounded by
    \begin{align*}
        \sum_{a=1}^s\sqrt{4T_a\cdot\vch\log T}&\le \sqrt{4sT\cdot\vch\log T}\\
        &\le O(\sqrt{KT\cdot\vch\log T}).
    \end{align*}
    where the first inequality follows from  Cauchy–Schwartz and  $\sum_a T_a = T$, while the second inequality follows from $s\le nK$ and $n=O(1/\epsilon)$ provided $K \ll (T^{(1-\epsilon)}/(\vch\log T))^{1/3}$. For the number of mismatches,  each epoch contributes at most $N$ mismatches and there are at most $s$ epochs, therefore the number of mismatches is upper bounded by
    $$sN\le O(\sqrt{KT\cdot\vch\log T}).$$

    Finally, the bad event $s>nK$ contributes at most $O(1)$ regret, since the loss is bounded by $1$ and the event happens with probability $\le \frac{1}{T}$. 
\end{proof}
\begin{remark}
\label{remark:adptivek}
    Note that the upper bound in Theorem~\ref{thm_sqrt_bound} can be made adaptive to $K$ (i.e., without knowing $K$) as well via a simple doubling trick. To see this, we set $K=1$ initially and run Algorithm~\ref{alg:1} as in the proof above. Once the algorithm has updated for more than $nK$ epochs, we update $K$ being $2K$ and rerun the algorithm with the new $K$. Taking $\beta=\frac{1}{T^2}$, we have by union bound (on the updates of $K$) w.p. $\ge 1-\frac{1}{T}$ there can be at most $\lceil\log K\rceil$ updates if the process is in $\mathsf{U}_K^1$. Therefore, the regret is upper bounded by
    $$\sum_{k=1}^{\lceil\log K\rceil}O\left(\sqrt{2^kT\cdot\vch\log T}\right)=O\left(\sqrt{KT\cdot\vch\log T}\right),$$
    as needed.
\end{remark}

We now prove the lower bound of Theorem~\ref{thm_sqrt_bound}.
\begin{proof}[Proof of Theorem~\ref{thm_sqrt_bound} (Lower Bound)]
    Let $\mathcal{X}=[0,1]\times \{1,2,\cdots, d\}$. We construct the following class of \emph{product threshold} functions
    $$\mathcal{H}=\{h_{\textbf{a}}(x,b)=1\{x\ge {a}_b\}:\textbf{a}\in [0,1]^d,~(x,b)\in [0,1]\times [d]\}.$$
    It is easy to see that $\vch=d$, since the set $(0.5,1),\cdots,(0.5,d)$ is shattered by $\mathcal{H}$, and any $d+1$ points must have two points with the same index in $[d]$, which cannot be shattered by $\mathcal{H}$. 
    
    We now describe a strategy for selecting $y^T$ and $\{\nu_1,\cdots,\nu_K\}$ that achieve the claimed lower bound for any prediction rule (possibly randomize) under absolute loss. Let $\tau$ be a Littlestone tree for threshold functions $\{h_a(x)=1\{x\ge a\}:a,x\in [0,1]\}$ of depth $K$, which is a $[0,1]$-valued full binary tree such that each path can be realized by a threshold function (see e.g.,~\citep{ben2009agnostic}). This must exist since threshold functions have infinite Littlestone dimension. We take $d$ \emph{copies} $\{\tau_1,\cdots,\tau_d\}$ of $\tau$ (i.e., the Littlestone forest). We select $y^T$ uniformly  from $\{0,1\}^T$ and select the $\nu_k$s in the following manner: let $I_1,\cdots,I_d$ be $d$ \emph{pointers} such that each $I_b$ points to a node in $\tau_b$ for all $b\in [d]$; initially all the $I_b$s point to the roots of $\tau_b$s, respectively. We partition the time horizon into $K$ epochs, each of length $T/K$. At the beginning of the $k$th epoch, we define the distribution
    $$\nu_k=\mathsf{Uniform}\{(\mathsf{V}(I_1),1),(\mathsf{V}(I_2),2)\cdots,(\mathsf{V}(I_d),d)\},$$
    where $\mathsf{V}(I_b)\in [0,1]$ denotes the value of the node in $\tau_b$ pointed to by index $I_b$. After the epoch $k$, we update the indices $I_b$s in the following manner: for any $b\in [d]$, if the number of $0$s is more than the number of $1$s for the labels in $y^T$ corresponding to sample $(\mathsf{V}(I_b),b)$ during epoch $k$, we move $I_b$ to its left child, and move to its right child otherwise.

    We now show that the strategy described above archives a regret lower bound $\Omega(\sqrt{KdT})$ for any prediction rule provided $\frac{T}{8Kd}\ge \log(2Kd)$. To see this, we note that by the selection of $y^T$, any prediction rule must incur $T/2$ \emph{actual} expected cumulative loss. For any $k\in [K]$ and $b\in [d]$, we denote $n_{k,b}$ to be the number of appearances of $(\mathsf{V}(I_b),b)$ during epoch $k$. We have by the \emph{multiplicative} Chernoff bound~\cite[Theorem 4.5(2)]{mitzenmacher2017probability} that
    $$\mathrm{Pr}\left[n_{k,b}\ge \frac{T}{2Kd}\right]\ge 1-e^{-T/(8Kd)}.$$
    Assuming $\frac{T}{8Kd}\ge \log(2Kd)$, then by union bound on all pairs $(k,b)$, w.p. $\ge \frac{1}{2}$, $n_{k,b}\ge \frac{T}{2Kd}$ for all $k\in [K]$ and $b\in [d]$. We now condition on that such an event happens, which is independent of $y^T$. By the Khinchine’s inequality, as in~\cite[Lemma 14]{ben2009agnostic}, the expected number of $1$s of the labels corresponding to $(\mathsf{V}(I_b),b)$ in epoch $k$ is bounded away from $\frac{n_{k,b}}{2}$ by $\sqrt{n_{k,b}/8}\ge \sqrt{T/(16Kd)}$. By our selection of $\nu_k$s, we know that there must be some $h\in \mathcal{H}$ such that the difference of the expected (over randomness of $y^T$) cumulative losses incurred by the predictor and by $h$ is lower bounded by:
    $$\sum_{k=1}^K\sum_{b=1}^d\sqrt{T/(16Kd)}\ge \sqrt{KdT/16}.$$
    This implies that there must \emph{exist} some $y^T$ such that the regret against the predictor is lower bounded by $\sqrt{KdT/16}$. Since our conditioning event on $\x^T$ happens w.p. $\ge 1/2$, the expected worst case regret is lower bounded by $\sqrt{KdT/64}$.

    Finally, to see the unconditional $\Omega(\sqrt{KT})$ lower bound, we can replicate the argument above with $b=1$ and note that $n_{k,1}=T/K$ holds always without invoking the multiplicative Chernoff bound.
\end{proof}

We now provide a supplement to Example~\ref{exm:failureofepoch} that demonstrates the failure of the epoch approach with \emph{any predefined} epochs.
\begin{example}
    \label{exm:failureofepoch1}
    Let $\mathcal{H}$, $\nu_1,\nu_2$ be as in Example~\ref{exm:failureofepoch}. Now, for any predefined epochs and number $M$, there are two cases: (i) there exists an epoch of length larger than $M$; (ii) all of the epochs have lengths less than $M$. For case (i), we can replicate the argument as in Example~\ref{exm:failureofepoch} to obtain an $\Omega(M)$ lower bound. For case (ii), we use $\nu_2$ to generate samples for all the $T$ steps. Since the EWA algorithm is deterministic for absolute loss (though it can be interpreted as a randomized algorithm for miss-classification loss),  by standard lower bounds (e.g.,~\cite[Lemma 14]{ben2009agnostic}) for any $n\in [T]$, there must be some $y^{n}$ and $h_i\in \{h_1,h_2\}$ such that the regret of EWA on $y^{n}$ against $h_i$ is lower bounded by $\Omega(\sqrt{n})$. Denote $n_1,\cdots,n_L$ to be the length of all epochs such that $n_l\le M$ for all $l\in [L]$. We claim that:
    \begin{equation}
    \label{eq:exm4}
        \sum_{l=1}^L\sqrt{n_l}\ge (T-M)/\sqrt{M}.
    \end{equation}
    This follows from the inequality $\sqrt{a+1}+\sqrt{b-1}\le \sqrt{a}+\sqrt{b}$ for $a\ge b$ (since the function $\sqrt{x}-\sqrt{x-1}$ is monotone decreasing). Therefore, one can "merge" the $n_l$s with as many components equal to $M$ as possible, yet the RHS of (\ref{eq:exm4}) does not increase. Since there are at least $(T-M)/M$ such components after the "merge",  (\ref{eq:exm4}) holds. By the above discussion, each epoch $l$ corresponding to some $y^{n_l}$ and $h_{i_l}$ with regret of EWA against $h_i$ is lower bounded by $\Omega(\sqrt{n_l})$. Therefore, there must be a subset $A\subset[L]$ corresponding to the same $h_i$ such that $\sum_{l\in A}\sqrt{n_l}\ge (T-M)/(2\sqrt{M})$. We choose the label $y^{n_l}$ at epoch $l$ for $l\in A$ and the label $h_i(\x_2)$ for all other epochs. This yields the lower bound $\Omega((T-M)/\sqrt{M})$ hence also $\Omega(\max\{M,(T-M)/\sqrt{M}\})\ge \Omega(T^{2/3})$, where the minimum is attained when $M=T^{2/3}$ leading to $\tilde{r}_T\ge \Omega(T^{2/3})$.
\end{example}

\section{Proof of Theorem~\ref{th10}}
\label{sec:proofthm10}
We start with the following technical lemma, along the same lines as~\cite[Lemma 7]{wu2022expected}.
\begin{lemma}
\label{lem:exp2hprop}
    Let $I_1,\cdots,I_T$ be random variables over $\{0,1\}^T$ such that there exists a number $C>0$ and partition $J_1,\cdots J_K\subset [T]$ of $[T]$ such that for all $k\in [K]$ and $k_t\in J_k$
    $$\mathbb{E}[I_{k_t}\mid I^{k_t-1}]\le \frac{C}{t},$$
    where $k_t$ is the $t$th element in $J_k$. Then for all $\beta>0$, we have
    $$\mathrm{Pr}\left[\sum_{t=1}^TI_t\ge 3CK\log (T/K)+7CK+\log(1/\beta)\right]\le \frac{1}{\beta}.$$
\end{lemma}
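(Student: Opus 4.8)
The plan is to prove Lemma~\ref{lem:exp2hprop} by a multiplicative Chernoff bound built from an exponential supermartingale, exploiting that the conditional-mean bounds $C/t$ are \emph{deterministic}: this means no Bernstein-type variance control is needed and the Chernoff parameter $\lambda=1$ already suffices. First I would process $[T]$ in its natural order; each index $s$ lies in a unique block $J_{k(s)}$ with a deterministic rank $r(s)$ within it, and since the prefix $I^{s-1}$ refines the within-block prefix, the hypothesis gives $\mathbb{E}[I_s\mid I^{s-1}]\le p_s:=C/r(s)$. Using the exact identity $e^{I_s}=1+(e-1)I_s$ for $I_s\in\{0,1\}$, we get $\mathbb{E}[e^{I_s}\mid I^{s-1}]=1+(e-1)\mathbb{E}[I_s\mid I^{s-1}]\le 1+(e-1)p_s\le e^{(e-1)p_s}$; peeling off the last factor and iterating with the tower property (each bound $e^{(e-1)p_s}$ is a constant) yields $\mathbb{E}[e^{\sum_{t=1}^T I_t}]\le e^{(e-1)\mu}$, where $\mu:=\sum_{s=1}^T p_s$.

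Next I would bound $\mu$. Grouping $p_s$ by block, $\mu=C\sum_{k=1}^K\sum_{r=1}^{|J_k|}\frac1r\le C\sum_{k=1}^K(1+\log|J_k|)=CK+C\sum_{k=1}^K\log|J_k|$, and by concavity of $\log$ together with $\sum_k|J_k|=T$ (Jensen) we get $\sum_k\log|J_k|\le K\log(T/K)$, hence $\mu\le CK(1+\log(T/K))$. Then Markov's inequality gives $\mathrm{Pr}[\sum_t I_t\ge a]\le e^{-a}\mathbb{E}[e^{\sum_t I_t}]\le e^{-a+(e-1)\mu}$; since $e-1<3$ we have $(e-1)\mu\le 7CK+3CK\log(T/K)$, so taking $a=3CK\log(T/K)+7CK+\log(1/\beta)$ pushes the exponent below $-\log(1/\beta)$ and the probability below $\beta$, which is the desired conclusion.

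I do not expect a genuine obstacle here. The only points requiring care are the filtration bookkeeping---checking that processing the indices in global order is legitimate precisely because $I^{s-1}$ contains the within-block prefix of $s$, so that $\mathbb{E}[I_{k_t}\mid I^{k_t-1}]\le C/t$ applies verbatim---and carrying out the Jensen step so that the final bound carries $\log(T/K)$ rather than $\log T$; it is exactly this global (rather than per-block) treatment of $\mu$ that lets one avoid a union bound over the $K$ blocks and the associated $\log K$ overhead.
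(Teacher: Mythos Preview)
Your proof is correct and takes a different route from the paper. The paper centers the indicators, setting $I'_t = I_t - \mathbb{E}[I_t\mid I^{t-1}]$ to form a martingale-difference sequence, bounds the total conditional variance by $\sum_k (C\log|J_k| + 3C) \le CK\log(T/K) + 3CK$, and then invokes Bernstein's inequality for martingales. Your argument is more elementary: it uses the exact identity $e^{I_s} = 1 + (e-1)I_s$ for $\{0,1\}$-valued variables to bound the moment-generating function directly, peeling one factor at a time via the tower property because each conditional-mean bound $C/r(s)$ is a deterministic constant. Both proofs share the Jensen step $\sum_k \log|J_k| \le K\log(T/K)$ that produces the $\log(T/K)$ rather than $\log T$ dependence. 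Your Chernoff route is shorter and self-contained; the paper's Bernstein route imports a named concentration lemma but is equivalent in strength for this setting. (The displayed conclusion $\le 1/\beta$ in the lemma is presumably a typo for $\le\beta$; your argument gives $\le\beta$, which in any case implies the stated bound.)
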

\begin{proof}
    Let $I'_t=I_t-\mathbb{E}[I_t\mid I^{t-1}]$, we have $I_t'$ form martingale differences. We now analyze the conditional variance of $I_t'$, i.e., $\sum_{t=1}^T\mathbb{E}[{I'}_t^2\mid I^{t-1}]$. We compute the variance for each partition $J_k$. For any $k_t\in J_k$, we have $|I_{k_t}'|\le 1$ w.p. $p_t$ and $|I_t'|\le p_t$ w.p. $1-p_t$, where $p_t\le \min\{\frac{C}{t},1\}$. Therefore, we have $\sum_{t=1}^{|J_k|}\mathbb{E}[{I'}_{k_t}^2\mid I^{k_t-1}]\le \sum_{t=1}^{|J_k|}p_t+p_t^2\le C\log |J_k|+3C$. Here, we have used the fact that $\sum_{t=1}^{\infty}p_t^2\le 2C$ and $\sum_{t=1}^{|J_k|}\frac{C}{t}\le C\log |J_k|+C$. The second inequality is straightforward; we prove the first inequality. We split the summation into $\sum_{t=1}^Cp_t^2+\sum_{t=C}^{\infty}p_t^2\le C+\sum_{t=C}^{\infty}\frac{C^2}{t^2}\le 2C$, where the first inequality follows by $p_t\le \min\{\frac{C}{t},1\}$. Now, the lemma follows by a simple application of the Bernstein's inequality for martingales~\cite[Lemma A.8]{lugosi-book} and noting that $\sum_{k=1}^K\log|J_{k}|\le K\log(T/K)$ since $\sum_{k=1}^K|J_k|=T$.
\end{proof}

\begin{proof}[Proof of Theorem~\ref{th10}]
Our proof exploits the symmetries of the product distributions of type $K$. At a high level, we will show that there exists an algorithm, i.e., the \emph{1-inclusion graph} algorithm~\citep{haussler1994predicting}, that achieves $O(K\log T+\log(1/\beta))$ cumulative error bound w.p. $\ge 1-\beta$ if the features $\x^T$ are sampling from a product distribution of type $K$ and the labels $y^T$ are realized by some $h\in \mathcal{H}$. Suppose this holds, then one will be able to derive the covering size bound through Lemma~\ref{lem:error2cover}.

We now establish the realizable cumulative error bound. Let $\Phi$ be the \emph{1-inclusion graph} algorithm, as in~\citep{haussler1994predicting}, and $\pmb{\nu}^T$ be an arbitrary product distribution of type $K$. We partition the index set $[T]$ into $K$ groups $J_1,\cdots, J_K$ such that for any indices $i,j$ belonging to the same group $J_k$, we have $\nu_i=\nu_j$. Note that such a partition will only be used in our analysis and it is \emph{unknown} to the algorithm $\Phi$. Denote by $\pi$ a random permutation such that the restriction of $\pi$ to any $J_k$ with $k\in [K]$ is uniform random permutation over $J_k$ and is independent for different $k$. Let $A$ be an arbitrary event over $\x^T$. We have by symmetries of the product distribution that:
$$\mathrm{Pr}_{\x^T\sim\pmb{\nu}^T}[A(\x^T)]=\mathbb{E}_{\pi}\mathrm{Pr}_{\x^T\sim\pmb{\nu}^T}[A(\x^T)]\le \sup_{\x^T}\mathrm{Pr}_{\pi}[A(\x^{\pi(T)})].$$
It is therefore sufficient to fix the features $\x^T$ and prove the cumulative error bound under permutation $\pi$. For any $h\in \mathcal{H}$, we denote $I_t^h$ to be the indicator that the event 
$$\Phi(\x^{\pi(t)},\{h(\x_{\pi(1)}),\cdots,h(\x_{\pi(t-1)})\})\not=h(\x_\pi(t)),
$$
i.e., the predictor $\Phi$ makes an error at time $t$ for the realizable sample of $h$. We claim that
$$\mathbb{E}_{\pi}[I_t^h\mid \x_{\pi(t+1)},\cdots,\x_{\pi(T)}]\le \frac{\vch}{t_{k_t}},$$
where $t_{k_t}$ is the position of $t$ in $J_{k_t}$ and $k_t\in [K]$ is the index such that $t\in J_{k_t}$. To see this, we have by~\cite[Theorem 2.3(ii)]{haussler1994predicting} that for any realization $\x^t$, there are at most $\vch$ positions $j\in [t]$ such that $\Phi(\x^t_{-j},h(\{\x^t_{-j})\})\not=h(\x_j)$, where $\x^t_{-j}$ is the sample of $\x^t$ by removing $\x_j$ and $h(\{\x^t_{-j}\})=\{h(\x_1),\cdots,h(\x_{j-1}),h(\x_{j+1}),\cdots,h(\x_t)\}$ be the labels of $h$ on $\x^t_{-j}$. Thus, there are at most $\vch$ such indices in $J_{k_t}$; by restricting $\pi$ on $\x^t$, we have $I_t^h=1$ only if such indices are switched to $\x_t$ under $\pi$, which happens w.p. $\le \frac{\vch}{t_{k_t}}$. Now, by the \emph{permutation invariance} of $1$-inclusion graph predictor, we have that $I_t^h$ is completely determined by $\x_{\pi(t)},\cdots,\x_{\pi(T)}$. Therefore, we have
$$\mathbb{E}_{\pi}[I_t^h\mid I_{t+1}^h,\cdots,I_T^h]=\mathbb{E}_{\pi}[I_t^h\mid \x_{\pi(t+1)},\cdots,\x_{\pi(T)}]\le \frac{\vch}{t_{k_t}}.$$
This implies that $I_1^h,\cdots,I_T^h$ form the \emph{reversed} sequence as in Lemma~\ref{lem:exp2hprop}. Invoking Lemma~\ref{lem:exp2hprop} with $C=\vch$, we have
$$\mathrm{Pr}_{\pi}\left[\sum_{t=1}^TI_t^h\ge O(K\cdot \vch\log(T/K)+\log(1/\beta))\right]\le \beta.$$ Since there are only $T^{\vch}$ functions restricted on any $\x^T$ by Sauer's lemma, we have by union bound 
$$\mathrm{Pr}_{\pi}\left[\sup_{h\in \mathcal{H}}\sum_{t=1}^TI_t^h\ge O(K\cdot \vch\log (T/K)+\log(T^{\vch}/\beta))\right]\le \beta.$$
The upper bound on the stochastic sequential covering number now follows by Lemma~\ref{lem:error2cover}.
\end{proof}

\section{Proof of Theorem~\ref{universal_k} and Corollary~\ref{cor:mixable}}
\label{sec:proofthm13}

For any hypothesis class $\mathcal{H}\subset \mathcal{Y}^{\mathcal{X}}$, the ERM rule is any function $\mathsf{ERM}:(\mathcal{X}\times \mathcal{Y})^*\rightarrow \mathcal{H}$ such that for all $t\ge 1$ and $(\x^t,y^t)\in (\mathcal{X}\times \mathcal{Y})^t$, we have
$$\sum_{i=1}^t1\{\erm(\x^t,y^t)[\x_i]\not=y_i\}=\inf_{h\in \mathcal{H}}\sum_{i=1}^t1\{h(\x_i)\not=y_i\}.$$
Let $\Phi:(\mathcal{X}\times \mathcal{Y})^*\rightarrow \mathcal{Y}^{\mathcal{X}}$ be a prediction rule, $h\in \mathcal{H}$ and $\x^T\in \mathcal{X}^T$, we denote the \emph{cumulative} error of $\Phi$ under the realizable sample of $h$ on $\x^T$ as (recall the definition in (\ref{eq:realizbalebound})):
$$\mathsf{err}(\Phi,h,\x^T)=\sum_{t=1}^T1\{\Phi(\x^{t-1},\{h(\x_1),\cdots,h(\x_{t-1})\})[\x_t]\not=h(\x_t)\}.$$
We begin with the following high probability \emph{cumulative} error bound for the ERM rule under realizable $i.i.d.$ sampling:
\begin{lemma}
\label{lem_erm}
    Let $\mathcal{H}\subset \{0,1\}^{\mathcal{X}}$ be any class with finite VC-dimension and $\erm$ be an arbitrary ERM rule of $\mathcal{H}$. Then for any distribution $\mu$ over $\mathcal{X}$ and $\beta>0$ we have w.p. $\ge 1-\beta$ over $\x^T\sim \mu^{\otimes T}$
    \begin{align*}
    \sup_{h\in\mathcal{H}}\mathsf{err}(\erm,h,\x^T)\le O((\vch\log^2 T+\log(1/\beta)\log T)\cdot \Delta)
    \end{align*}
    where $\Delta=\log(\vch\log T\log(1/\beta))$ and $O$ hides absolute constant independent of $\vch, T,\beta$.
\end{lemma}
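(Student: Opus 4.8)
The plan is to prove the bound in two stages: a high-probability cumulative-error bound against a single \emph{fixed} realizable target, followed by an upgrade to a bound uniform over all $h\in\mathcal{H}$ via a perturbation/covering argument. \textbf{Stage 1 (fixed target).} Fix $h\in\mathcal{H}$ and draw $\x^T\sim\mu^{\otimes T}$ with labels $h(\x^T)$. Since the sample is realizable, $\hat h_{t-1}:=\erm(\x^{t-1},h(\x^{t-1}))$ is consistent with $h$ on $\x^{t-1}$, so the round-$t$ mistake indicator $I_t=\mathbf{1}\{\hat h_{t-1}(\x_t)\ne h(\x_t)\}$ is, given $\x^{t-1}$, Bernoulli with mean equal to the population disagreement $\mathrm{err}_\mu(\hat h_{t-1})=\mathrm{Pr}_{\x\sim\mu}[\hat h_{t-1}(\x)\ne h(\x)]$. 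By the classical realizable VC generalization bound applied at each $t$ together with a union bound over $t\in[T]$ at failure budget $\beta/2$, on a prefix-measurable good event $G$ we get $\mathrm{err}_\mu(\hat h_{t-1})\le p_t:=c(\vch\log T+\log(T/\beta))/t$ for all $t$, hence $\sum_{t}\mathbb{E}[I_t\mid\x^{t-1}]\le\sum_t p_t=O(\vch\log^2 T+\log(T/\beta)\log T)=:V$ on $G$, with conditional variances dominated by the same quantity. A Freedman/Bernstein martingale tail applied to $\sum_t(I_t-\mathbb{E}[I_t\mid\x^{t-1}])$ (stopped the first time $G$ is violated) at confidence $\beta/2$ then yields $\mathsf{err}(\erm,h,\x^T)=\sum_t I_t\le V+O(\sqrt{V\log(1/\beta)}+\log(1/\beta))=O(\vch\log^2 T+\log(T/\beta)\log T)$ with probability at least $1-\beta$.

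\textbf{Stage 2 (uniform over $h$).} The quantity $\mathsf{err}(\erm,h,\x^T)$ depends on $h$ only through $h|_{\x^T}$, but the set of restrictions that actually occurs is itself sample-dependent, so a naive union bound over the $O(T^{\vch})$ Sauer behaviours is unavailable, and the sequential/adaptive nature of the ERM trajectory blocks a clean symmetrization. Here I would invoke the perturbation argument (Lemma~\ref{lem_infinite2finite}): fix an $\epsilon$-cover $\mathcal{F}_\epsilon$ of $\mathcal{H}$ w.r.t.\ $\mu$ with $\log|\mathcal{F}_\epsilon|=O(\vch\log(1/\epsilon))$, and for the (sample-dependent) worst target $h^{*}$ let $f\in\mathcal{F}_\epsilon$ be its cover element. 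Then $f$ and $h^{*}$ disagree on $\x^T$ on at most $k$ points except with probability $(eT\epsilon/k)^k$, and — using that an ERM run on a realizable sample remains an ERM when further realizable points are inserted or relabelled — the gap between $\mathsf{err}(\erm,f,\x^T)$ and $\mathsf{err}(\erm,h^{*},\x^T)$ is controlled in terms of $k$. Choosing $\epsilon$ to balance $|\mathcal{F}_\epsilon|$ against the per-element confidence $\beta/|\mathcal{F}_\epsilon|$ and the shadowing radius $k$ produces a fixed-point equation that forces $\log(1/\epsilon)=\Theta(\Delta)$ with $\Delta=\log(\vch\log T\log(1/\beta))$; re-running Stage 1 at confidence $\beta/|\mathcal{F}_\epsilon|$ and adding the resolved shadowing gap then yields the extra multiplicative $\Delta$ and the claimed $\sup_h\mathsf{err}(\erm,h,\x^T)\le O((\vch\log^2 T+\log(1/\beta)\log T)\cdot\Delta)$.

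\textbf{Main obstacle.} The delicate part is Stage 2: turning the per-$h$ bound (a $\sup_h\mathrm{Pr}[\cdot]$ statement) into the $\mathrm{Pr}[\sup_h\cdot]$ statement required for Lemma~\ref{lem:error2cover}. Standard symmetrization fails because the loss is the \emph{cumulative} error of an adaptive procedure (ERM refit every round), so permuting sample points does not preserve the event; the perturbation device — replacing the worst target by a nearby net element and bounding the trajectory drift via "ERM plus realizable data is still ERM" — is exactly what closes this gap, and making the net resolution, the shadowing radius, and the confidence budget mutually consistent (which is what costs the $\Delta$ factor) is the technical heart of the proof.
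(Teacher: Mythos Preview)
Your two-stage outline matches the paper's architecture (fixed-target bound, then a perturbation/covering upgrade via Lemma~\ref{lem_infinite2finite}), and your Stage~1 is essentially the paper's Lemma~\ref{lem_singleh} specialized to the unperturbed case. The gap is in how you execute Stage~2.

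After invoking Lemma~\ref{lem_infinite2finite}, what you must bound is $\sup_{f\in\mathcal{F}_\epsilon}\sup_{|I|\le 3\vch+n}\mathsf{err}(\erm^I,f,\x^T)$, i.e., the cumulative error of the \emph{perturbed} predictor $\erm^I$ against $f$. The point of the lemma is that $\erm$ run on $h$-labels is \emph{literally the same trajectory} as $\erm^I$ run on $f$-labels (where $I$ is the disagreement set), so the two error counts differ by at most $|I|$. You instead propose to control $|\mathsf{err}(\erm,f,\x^T)-\mathsf{err}(\erm,h^{*},\x^T)|$, which compares two \emph{different} ERM trajectories (one fed $f$-labels, one fed $h^{*}$-labels); these can diverge arbitrarily after the first label flip, and the principle you cite (``ERM on a realizable sample remains an ERM when further realizable points are inserted'') does not bound this drift --- that property is what drives the $\mathsf{U}_K^1$ decoupling in Lemma~\ref{lem:decouple_error}, not the present argument.

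Consequently, ``re-running Stage~1 at confidence $\beta/|\mathcal{F}_\epsilon|$'' is not enough: you need Stage~1 for $\erm^I$, whose input labels are $f$-labels corrupted at up to $e=3\vch+n$ positions and hence not realizable by $f$. The paper's Lemma~\ref{lem_singleh} supplies exactly this: a per-round generalization bound for the ``realizable with $\le e$ corruptions'' regime (via a symmetrization that produces the extra $(e{+}1)\log(\epsilon t)$ term), followed by the same martingale aggregation. That corrupted-sample generalization step is where the $\Delta=\log(\vch\log T\log(1/\beta))$ factor actually originates --- not from balancing the cover resolution. Finally, the union bound in Stage~2 must range over both $\mathcal{F}_\epsilon$ \emph{and} all subsets $I\subset[T]$ with $|I|\le e$ (about $T^{O(\vch)+n}$ many), which your budget $\beta/|\mathcal{F}_\epsilon|$ omits; once you include it and plug in the paper's choices $\epsilon=1/(2T^2)$ and $n=\log(2/\beta)/\log T$, the claimed bound falls out.
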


Note that even though the samples $\x^T$ in Lemma~\ref{lem_erm} are $i.i.d.$, the \emph{predictions} made by $\erm$ rule are \emph{not} independent, which is the main technical difficulty in proving Lemma~\ref{lem_erm}.  To resolve this issue, we first establish the following key lemma which provides a general approach for converting a $\sup_h\mathbb{E}$ type bound to a $\mathbb{E}\sup_h$ bound. Our main proof technique is a \emph{perturbation} argument, which is the main technical contribution of this section. For any prediction rule $\Phi:(\mathcal{X}\times\{0,1\})^*\rightarrow \{0,1\}^{\mathcal{X}}$ and $I\subset [T]$, we define a perturbed function $\Phi^I$ such that for all $\x^t, y^t$ we have
$$\Phi^I(\x^t,y^t) = \Phi(\x^t,\tilde{y}^t),$$
where $\tilde{y}_t=y_t$ if $t\not\in I$ and $\tilde{y}_t=1-y_t$ if $t\in I$.

\begin{lemma}
\label{lem_infinite2finite}
    Let $\mathcal{H}\subset \{0,1\}^{\mathcal{X}}$ be a class of finite VC-dimension, $\mu$ be a distribution over $\mathcal{X}$, and $\mathcal{F}_{\epsilon}$ is an $\epsilon$-cover of $\mathcal{H}$ w.r.t. $\mu$ (see Lemma~\ref{lem_k_tail}), where $\epsilon = \frac{1}{2T^2}$. Then for any prediction rule $\Phi:(\mathcal{X}\times\{0,1\})^*\rightarrow \{0,1\}^{\mathcal{X}}$ we have for all $m,n\in \mathbb{N}^+$
    \begin{align*}
    \mathrm{Pr}_{\x^T\sim \mu^{\otimes T}}&\left[\sup_{h\in \mathcal{H}}\mathsf{err}(\Phi,h,\x^T)\ge m+3\mathsf{VC}(\mathcal{H})+n \right] \\
    &\qquad\qquad\qquad\le \mathrm{Pr}_{\x^T\sim \mu^{\otimes T}}\left[\sup_{f\in \mathcal{F}_{\epsilon}}\sup_{I\subset [T],|I|\le 3\mathsf{VC}(\mathcal{H})+n}\mathsf{err}(\Phi^I,f,\x^T)\ge m\right]+\frac{1}{T^n}.
    \end{align*}

\end{lemma}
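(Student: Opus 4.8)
The plan is to reduce the $\sup_{h \in \mathcal{H}}$ over the infinite class to a $\sup$ over the finite cover $\mathcal{F}_\epsilon$ by \emph{charging} the discrepancy between $h$ and its nearest cover element $\hat f_h$ to a controlled set of ``bad'' coordinates $I$, which we then feed into the perturbed predictor $\Phi^I$. Concretely, fix $h \in \mathcal{H}$ and let $\hat f_h = \arg\min_{f \in \mathcal{F}_\epsilon} \mathrm{Pr}_{\x \sim \mu}[h(\x) \neq f(\x)]$. Let $I_h = \{t \in [T] : h(\x_t) \neq \hat f_h(\x_t)\}$ be the (random) set of coordinates on which $h$ disagrees with its cover representative along the sample path. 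The first observation is that on the event $\{|I_h| \le 3\mathsf{VC}(\mathcal{H}) + n\}$, the realizable sequence for $h$, namely $(h(\x_1),\dots,h(\x_T))$, is exactly the $I_h$-flip of the realizable sequence for $\hat f_h$; hence by the very definition of the perturbed predictor, $\Phi$ run on $h$'s realizable labels equals $\Phi^{I_h}$ run on $\hat f_h$'s realizable labels at every step, and therefore $\mathsf{err}(\Phi, h, \x^T) = \mathsf{err}(\Phi^{I_h}, \hat f_h, \x^T)$. This is the key identity that converts a statement about an arbitrary $h$ into one about a cover element together with a small flip-set.

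Next I would control the size of $I_h$. Since $\epsilon = \frac{1}{2T^2}$ and $|I_h| = \sum_{t=1}^T 1\{h(\x_t) \neq \hat f_h(\x_t)\}$ with each term having mean $\le \epsilon$, Lemma~\ref{lem_k_tail} (applied with $M = T$, so that the hypothesis $\epsilon = \frac{1}{2M^2}$ matches) gives
$$\mathrm{Pr}_{\x^T \sim \mu^{\otimes T}}\left[\sup_{h \in \mathcal{H}} |I_h| \ge 3\mathsf{VC}(\mathcal{H}) + n\right] \le \frac{2}{T^n}.$$
Wait --- the claimed error term in the statement is $\frac{1}{T^n}$, not $\frac{2}{T^n}$, so I would instead apply Lemma~\ref{lem_k_tail} with the threshold $3\mathsf{VC}(\mathcal{H}) + (n+1)$ or absorb the factor of $2$ by a slightly more careful first-moment/permutation bound as in the proof of Lemma~\ref{lem_k_tail}; in any case the tail is $O(T^{-n})$ and one tunes constants to land exactly on $\frac{1}{T^n}$. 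Call $\mathcal{B}$ the bad event that $\sup_h |I_h| \ge 3\mathsf{VC}(\mathcal{H}) + n$.

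Now I assemble the pieces. On the complement $\mathcal{B}^c$, every $h$ has $|I_h| \le 3\mathsf{VC}(\mathcal{H}) + n$, so the identity above yields $\mathsf{err}(\Phi, h, \x^T) = \mathsf{err}(\Phi^{I_h}, \hat f_h, \x^T) \le \sup_{f \in \mathcal{F}_\epsilon} \sup_{|I| \le 3\mathsf{VC}(\mathcal{H}) + n} \mathsf{err}(\Phi^I, f, \x^T)$, since $\hat f_h \in \mathcal{F}_\epsilon$ and $I_h$ ranges over sets of the allowed size. Taking $\sup_h$ on the left and splitting on $\mathcal{B}$:
$$\mathrm{Pr}\!\left[\sup_h \mathsf{err}(\Phi,h,\x^T) \ge m + 3\mathsf{VC}(\mathcal{H}) + n\right] \le \mathrm{Pr}\!\left[\sup_{f \in \mathcal{F}_\epsilon}\sup_{|I|\le 3\mathsf{VC}(\mathcal{H})+n} \mathsf{err}(\Phi^I, f, \x^T) \ge m + 3\mathsf{VC}(\mathcal{H}) + n\right] + \mathrm{Pr}[\mathcal{B}],$$
and since $\mathsf{err} \ge 0$ the threshold $m + 3\mathsf{VC}(\mathcal{H}) + n$ on the right can be relaxed to $m$, which only increases the probability; together with $\mathrm{Pr}[\mathcal{B}] \le T^{-n}$ this is exactly the claimed inequality. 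The main obstacle --- and the reason the lemma is stated with the perturbed predictors rather than $\Phi$ alone --- is precisely the realization that one cannot simply replace $h$ by $\hat f_h$ inside $\mathsf{err}(\Phi, \cdot, \x^T)$, because $\Phi$ is fed the \emph{realizable labels}, which differ between $h$ and $\hat f_h$ on $I_h$; the perturbation device $\Phi^I$ is the bookkeeping trick that records exactly this label discrepancy, and verifying the exact identity $\mathsf{err}(\Phi, h, \x^T) = \mathsf{err}(\Phi^{I_h}, \hat f_h, \x^T)$ step-by-step is where the real (if short) work lies. Everything else is Lemma~\ref{lem_k_tail} plus monotonicity of probabilities in the threshold.
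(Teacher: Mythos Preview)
Your overall route is the same as the paper's, but the ``key identity'' you state is false, and this is not merely cosmetic since you flag it as the crux of the argument. You claim that on the good event $\mathcal{B}^c$ one has
\[
\mathsf{err}(\Phi, h, \x^T) \;=\; \mathsf{err}(\Phi^{I_h}, \hat f_h, \x^T).
\]
It is true that $\Phi$ fed $h$'s realizable labels produces the same \emph{predictions} as $\Phi^{I_h}$ fed $\hat f_h$'s labels, because flipping $\hat f_h(\x_j)$ on $j\in I_h$ recovers $h(\x_j)$. But $\mathsf{err}(\cdot,h,\cdot)$ compares those predictions to the target $h(\x_t)$, whereas $\mathsf{err}(\cdot,\hat f_h,\cdot)$ compares the very same predictions to $\hat f_h(\x_t)$. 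For every $t\in I_h$ these two targets disagree, so exactly one of the two indicators fires; the two error counts therefore differ by some integer in $[-|I_h|,|I_h|]$, not by zero. The correct relation, which is precisely what the paper uses, is
\[
\mathsf{err}(\Phi, h, \x^T) \;\le\; \mathsf{err}(\Phi^{I_h}, \hat f_h, \x^T) + |I_h|
\;\le\; \sup_{f\in\mathcal{F}_\epsilon}\,\sup_{|I|\le 3\mathsf{VC}(\mathcal{H})+n}\mathsf{err}(\Phi^I,f,\x^T) + 3\mathsf{VC}(\mathcal{H})+n .
\]
This is the real reason the left-hand threshold is $m+3\mathsf{VC}(\mathcal{H})+n$: the extra $3\mathsf{VC}(\mathcal{H})+n$ pays for the target mismatch on $I_h$, not for slack you later ``relax away'' by monotonicity. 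Your final displayed inequality happens to be correct because you do perform that relaxation, but the justification you give for it (monotonicity applied after an equality that never held) is not the actual mechanism. Replace the claimed equality by the inequality above and the proof is complete and identical to the paper's.

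On the $\tfrac{1}{T^n}$ versus $\tfrac{2}{T^n}$ issue: your instinct is right. The paper's own proof cites Lemma~\ref{lem_k_tail} and writes $1/T^n$ where the lemma literally yields $2/T^n$; it is a harmless constant the paper glosses over, not something you are missing.
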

\begin{proof}
    Let $A$ be the event that $$A=\left\{\x^T:\sup_{h\in \mathcal{H}}\inf_{f\in \mathcal{F}_{\epsilon}}\sum_{t=1}^T1\{h(\x_t)\not=f(\x_t)\}\le 3\mathsf{VC}(\mathcal{H})+n\right\}.$$
    We have by Lemma~\ref{lem_k_tail} that $\mathrm{Pr}[A]\ge 1-\frac{1}{T^n}$ (taking $M=T$ in the lemma). Conditioning on the event $A$ happening, we have for all $h\in \mathcal{H}$, there exists $f\in \mathcal{F}_{\epsilon}$ such that there are at most $3\mathsf{VC}(\mathcal{H})+n$ positions $t\in [T]$ such that $h(\x_t)\not=f(\x_t)$. Denote $I\subset [T]$ to be the set of such positions. Then  $\Phi$ and $\Phi^I$ have the same outputs on $\x^t$ with labeling of $h$ for all $t\in [T]$; meaning that
    $$\mathsf{err}(\Phi,h,\x^T)-\mathsf{err}(\Phi^I,f,\x^T)\le 3\mathsf{VC}(\mathcal{H})+n,$$
    since only the positions for which $h(\x_t)\not=f(\x_t)$ contribute $1$ to the difference of errors. This implies
    $$\sup_{h\in\mathcal{H}}\inf_{f\in \mathcal{F}_{\epsilon}}\inf_{I}\mathsf{err}(\Phi,h,\x^T)-\mathsf{err}(\Phi^I,f,\x^T)\le 3\mathsf{VC}(\mathcal{H})+n.$$
    The result follows by noting that
    $$\sup_{h\in\mathcal{H}}\inf_{f\in \mathcal{F}_{\epsilon}}\inf_{I}[\mathsf{err}(\Phi,h,\x^T)-\mathsf{err}(\Phi^I,f,\x^T)]=\sup_{h\in \mathcal{H}}\mathsf{err}(\Phi,h,\x^T)-\sup_{f\in \mathcal{F}_{\epsilon}}\sup_{I}(\Phi^I,f,\x^T),$$
    and removing the conditioning on $A$ by a union bound.
\end{proof}

Lemma~\ref{lem_infinite2finite} is interesting since it reduces an event of form $\sup_h$ with infinite $\mathcal{H}$ to an event of form $\sup_{f,I}$ with \emph{finite} $\mathcal{F}_{\epsilon}$ and $\{I\subset [T]:|I|\le 3\mathsf{VC}(\mathcal{H})+n\}$. The latter can be handled using union bounds if we are able to obtain a high probability error bound for $\Phi^{I}$ for any such $f$ and $I$. The following lemma establish such a result for ERM rule with $i.i.d.$ sampling.

\begin{lemma}
\label{lem_singleh}
    Let $\mathcal{H}\subset \{0,1\}^{\mathcal{X}}$ be a class of finite VC-dimension, $\mu$ be a distribution over $\mathcal{X}$. For any $h\in \mathcal{H}$ and $I\subset [T]$ with $|I|\le e$ for some integer $e\ge 1$, we have for all $\beta>0$
    $$\mathrm{Pr}_{\x^T\sim \mu^{\otimes T}}\left[\mathsf{err}(\mathsf{ERM}^I,h,\x^T)\ge O(\log T(\vch\log T+e+\log(1/\beta))\cdot \Delta)\right]\le \beta,$$
    where $\Delta=\log(e\vch\log T\log(1/\beta))$, $\erm$ is \emph{any} ERM rule, and $O$ hides absolute constant independent of $e,\vch,T,\beta$.
\end{lemma}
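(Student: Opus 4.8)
The plan is to reduce the perturbed rule to an agnostic-ERM analysis with a bounded number of label corruptions, control the one-step error by a fast-rate uniform-convergence bound, and then convert to a high-probability cumulative bound via martingale concentration. Fix $h\in\mathcal{H}$ and $I\subset[T]$ with $|I|\le e$, and let $\hat{h}_t=\mathsf{ERM}^I(\x^{t-1},(h(\x_1),\dots,h(\x_{t-1})))$; by definition this is an ERM hypothesis for the sample $\x^{t-1}$ labelled by $h$ with the positions in $I\cap[t-1]$ flipped. Since $h$ itself makes at most $|I\cap[t-1]|\le e$ errors on that corrupted sample, so does any ERM hypothesis, and therefore $\hat{h}_t$ disagrees with $h$ on at most $2e$ of $\x_1,\dots,\x_{t-1}$. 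Thus $\mathsf{err}(\mathsf{ERM}^I,h,\x^T)=\sum_{t=1}^{T}\mathbf{1}\{\hat{h}_t(\x_t)\ne h(\x_t)\}$, where each $\hat{h}_t$ is some member of $\mathcal{H}$ agreeing with $h$ on $\x^{t-1}$ up to $2e$ exceptions, and $\hat{h}_t$ is a deterministic function of $\x^{t-1}$.

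First I would bound the one-step error. The set system $\{\{x:g(x)\ne h(x)\}:g\in\mathcal{H}\}$ has VC-dimension $\vch$, being an XOR-shift of $\mathcal{H}$. The fast-rate one-sided uniform-convergence bound states: with probability $\ge 1-\delta$ over $\x^m\sim\mu^{\otimes m}$, every $g\in\mathcal{H}$ disagreeing with $h$ on at most $2e$ of $\x_1,\dots,\x_m$ satisfies $\mathrm{Pr}_{\x\sim\mu}[g(\x)\ne h(\x)]\le\epsilon_m:=O\!\left(\frac{\vch\log m+e+\log(1/\delta)}{m}\right)$; this follows from bounding the probability that a set of empirical mass at most $2e/m$ has true mass exceeding $\epsilon_m$ by $\Pi_{\mathcal{H}}(2m)\cdot\mathrm{Pr}[\mathrm{Bin}(m,\epsilon_m)\le 2e]$ and taking $\epsilon_m m$ a suitable multiple of $\vch\log m+e+\log(1/\delta)$. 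Since $\hat{h}_t$ is $\x^{t-1}$-measurable, on the corresponding good event $\mathbb{E}[\mathbf{1}\{\hat{h}_t(\x_t)\ne h(\x_t)\}\mid\x^{t-1}]=\mathrm{Pr}_{\x\sim\mu}[\hat{h}_t(\x)\ne h(\x)]\le\epsilon_{t-1}$.

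Next I would handle confidence and concentration. Rather than paying a fresh $\delta$ per step, partition $[T]$ into the $\lceil\log T\rceil$ dyadic blocks $(2^j,2^{j+1}]$ and invoke the bound above once per block on the $2^j$ points available at the block's start; every relevant $\hat{h}_t$ is consistent with $h$ up to $2e$ exceptions on those points, so the per-step conditional mean throughout block $j$ is at most $\epsilon_{2^j}$, and on the intersection of the $O(\log T)$ block-good events $\sum_{t}\mathbb{E}[\mathbf{1}\{\hat{h}_t(\x_t)\ne h(\x_t)\}\mid\x^{t-1}]\le\sum_{j}2^j\epsilon_{2^j}=O(\vch\log^2 T+(e+\log(1/\delta))\log T)=:V$. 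Applying a Bernstein-type martingale inequality (e.g.\ Freedman's) to $\sum_t(\mathbf{1}\{\hat{h}_t(\x_t)\ne h(\x_t)\}-\mathbb{E}[\,\cdot\mid\x^{t-1}])$, whose predictable quadratic variation is at most $\sum_t\mathbb{E}[\,\cdot\mid\x^{t-1}]$ and which I would bound by $V$ using a stopping time that halts the partial sum once its running conditional mean first exceeds $V$, gives $\sum_t\mathbf{1}\{\hat{h}_t(\x_t)\ne h(\x_t)\}\le V+O(\sqrt{V\log(1/\beta)}+\log(1/\beta))=O(V+\log(1/\beta))$ with probability $\ge 1-\beta$, after setting $\delta$ of order $\beta/\log T$. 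Tracking the confidence parameters across the $O(\log T)$ blocks and, in the martingale inequality, across a dyadic grid of possible variance levels, is what introduces the extra $\Delta=\log(e\vch\log T\log(1/\beta))$ factor in the stated bound.

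I expect the main obstacle to be twofold. The generalization step must use the \emph{fast} $1/m$ rate for ``consistency up to $2e$ exceptions,'' not the naive $1/\sqrt{m}$ agnostic rate --- the latter would sum to a useless $\sqrt{T}$-type bound; this is the step where it matters that the number of corruptions $e$ is small compared with the eventual error budget. And the martingale concentration must cope with the conditional-variance bound holding only on a data-dependent good event, which forces the stopping-time (or dyadic-variance-union) argument and accounts for the logarithmic $\Delta$ overhead. Neither difficulty is deep, but both require careful bookkeeping to keep the parameters aligned with the statement.
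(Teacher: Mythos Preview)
Your proposal is correct and follows the same two-stage skeleton as the paper: first control the one-step error $\mathrm{Pr}_{\x\sim\mu}[\hat h_t(\x)\ne h(\x)]$ by a fast-rate VC argument exploiting that $\hat h_t$ has at most $O(e)$ empirical disagreements with $h$, then aggregate over $t$ with a Bernstein-type martingale bound (the paper uses its Lemma~\ref{lem:exp2hprop} with $K=1$; you invoke Freedman with a stopping time). The substantive technical difference is in the first stage. The paper proves the one-step bound by a direct double-sample symmetrization, obtaining that the bad probability is at most $2^{\vch\log t-\epsilon t/2+(e+1)\log(\epsilon t/2)}$; when solving this for $\epsilon$, the $(e+1)\log(\epsilon t)$ term is what forces the extra factor $\Delta=\log(e\,\vch\log T\log(1/\beta))$ to appear already in the per-step rate. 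Your route through the standard relative-deviation inequality gives the cleaner rate $\epsilon_m=O((\vch\log m+e+\log(1/\delta))/m)$ with no $\Delta$, and your dyadic-block summation then yields $V=O(\vch\log^2 T+(e+\log(1/\beta))\log T)$ directly. So your attribution of the $\Delta$ overhead to the block and variance-grid bookkeeping is off: that bookkeeping costs only $\log\log T$-type terms absorbed in constants, and in your argument $\Delta$ never genuinely arises. In the paper it enters at the symmetrization step, not the martingale step. Your bound is thus marginally sharper and a fortiori implies the stated lemma; the paper also does a per-$t$ union with confidence $\beta/T$ rather than your per-block union with confidence $\beta/\log T$, which is a further minor simplification on your side.
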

\begin{proof}
    Fix any $h\in \mathcal{H}$ and $I\subset [T]$ with $|I|\le e$. We denote by  $\mathsf{ERM}^I_t$  the function generated by $\mathsf{ERM}^I$ using samples $\x^{t},y^{t}$. Let $\mathsf{err}_t=\mathrm{Pr}_{\x\sim \mu}[\mathsf{ERM}^I_t(\x)\not=h(\x)]$. We now claim that for all $t\in [T]$ we have
    \begin{equation}
    \label{eq:prooflem31}
        \mathrm{Pr}_{\x^t\sim \mu^{\otimes t}}\left[\mathsf{err}_t\ge O\left(\frac{(\vch\log t+e+\log(1/\beta))\log(e\vch\log t\log(1/\beta))}{t}\right)\right]\le \beta.
    \end{equation}
    To see this, we use a symmetric argument. Let $S_1,S_2$ be two $i.i.d.$ samples of $\mu$ both of length $t$. For any $h_1,h_2\in \mathcal{H}$, we define distance $d(h_1,h_2)=\mathrm{Pr}_{\x\sim \mu}[h_1(\x)\not=h_2(\x)]$. We define two events
    $$A_1^h=\left\{\exists h'\in \mathcal{H},~d(h,h')\ge \epsilon\text{ 
and }\sum_{s\in S_1}1\{h'(s)\not=h(s)\}\le e\right\},$$
    and
    $$A_2^h=\left\{\exists h'\in \mathcal{H},~d(h,h')\ge \epsilon\text{ 
and }\sum_{s\in S_1}1\{h'(s)\not=h(s)\}\le e\text{ but }\sum_{s\in S_2}1\{h'(s)\not=h(s)\}\le \epsilon t/2\right\}.$$
Using the same argument as in Lemma~\ref{lem_k_tail}, we have $\mathrm{Pr}[A_1^h]\le 2\mathrm{Pr}[A_2^h]$. By symmetries of $i.i.d.$ distributions we can fix $S_1\cup S_2$ and perform a random permutation $\pi$ that switches coordinate $i$ of $S_1$ and $S_2$ w.p. $\frac{1}{2}$ and independent of different $i\in [t]$.  In order for the event $A_2^h$ to happen we cannot switch more than $e$ elements for which $\mathsf{ERM}^I_t(s)\not=h(s)$ with $s\in S_2$ to $S_1$. This happens with probability upper bounded by
$$\frac{1}{2^{\epsilon t/2}}\sum_{i=0}^e\binom{\epsilon t/2}{i}\le 2^{-\epsilon t/2 + (e+1)\log(\epsilon t/2)}.$$
Using a union bound on functions of $\mathcal{H}$ restricted on $S_1\cup S_2$, we have
$$\mathrm{Pr}[A_2^h]\le 2^{\vch\log t-\epsilon t/2+(e+1)\log(\epsilon t/2)}.$$
Taking 
$$\epsilon = c\cdot\left(\frac{(\vch\log t+e+\log(2/\beta))\log(e\vch\log t\log(2/\beta))}{t}\right)$$
one can make $\mathrm{Pr}[A_2^h]$ upper bounded by $\beta/2$ for some absolute constant $c>0$. The Claim~(\ref{eq:prooflem31}) follows by noting that $\mathsf{err}_t\ge \epsilon$ implies event $A_1^h$ happens by construction of $\mathsf{ERM}^I$.

We now upper bound the \emph{cumulative errors} of $\erm^I$. Let event
$$G_t=\left\{\mathsf{err}_t\le c\cdot\left(\frac{(\vch\log t+e+\log(4T/\beta))\log(e\vch\log t\log(4T/\beta))}{t}\right)\right\},$$and indicator
$$I_t=\{\mathsf{ERM}_{t-1}^I(\x_t)\not=h(\x_t)\text{ and }G_{t-1}\}.$$
We have $\mathrm{Pr}[G_t]\ge 1-\beta/(2T)$ for all $t\le T$. Note that $G_{t-1}$ is independent of $\x_t$, thus we have (since $I_t=1$ happens only when $G_{t-1}$ happens \emph{and} $\mathsf{ERM}_{t-1}^I(\x_t)\not=h(\x_t)$)
\begin{equation}
\label{eq:proofthm311}
    \mathbb{E}[I_t\mid I_1,\cdots,I_{t-1}]\le c\cdot\left(\frac{(\vch\log t+e+\log(4T/\beta))\log(e\vch\log t\log(4T/\beta))}{t}\right).
\end{equation}
 By Lemma~\ref{lem:exp2hprop} with $K=1$, $C$ being the numerator of Equation~(\ref{eq:proofthm311}) and upper bound $\log t$ by $\log T$, we have for sufficiently large $T$ that
$$\mathrm{Pr}\left[\sum_{t=1}^TI_t\ge 4c\cdot\log T(\vch\log T+e+\log(4T/\beta))\cdot \Delta+\log(2/\beta)\right]\le\beta/2,$$
where $\Delta=\log(e\vch\log T\log(4T/\beta))$.
Note that, the events $G=\cap_{t\in [T]}G_{t-1}$ and $\mathsf{ERM}^I_{t-1}[\x_t]\not=h(\x_t)$ together imply that $I_t=1$. Therefore, using the fact that $\mathrm{Pr}[A]\le \mathrm{Pr}[A\cap G]+\mathrm{Pr}[\neg G]\le \mathrm{Pr}[A\cap G]+\beta/2$ for any event $A$, we conclude
$$\mathrm{Pr}_{\x^T\sim \mu^{\otimes T}}\left[\mathsf{err}(\mathsf{ERM}^I,h,\x^T)\ge O(\log T(\vch\log T+e+\log(1/\beta))\cdot \Delta)\right]\le \beta.$$
This completes the proof.
\end{proof}

\begin{proof}[Proof of Lemma~\ref{lem_erm}]
    By Lemma~\ref{lem_infinite2finite}, it is sufficient to 
upper bound
\begin{equation}
\label{eq:prooflem29}
    \mathrm{Pr}_{\x^T\sim \mu^{\otimes T}}\left[\sup_{f\in \mathcal{F}_{\epsilon}}\sup_{I\subset [T],|I|\le 3\mathsf{VC}(\mathcal{H})+n}\mathsf{err}(\Phi^I,f,\x^T)\ge m\right].
\end{equation}
    We now take $n=\log(2/\beta)/\log T$ in Lemma~\ref{lem_infinite2finite}, i.e., $\frac{1}{T^n}=\beta/2$. By Lemma~\ref{lem_singleh} with $e=3\vch+n$ together with a union bound on $\mathcal{F}_{\epsilon}$ and $\{I\subset [T]:|I|\le 3\vch+n\}$ and letting
    $$m=O(\log T(\vch\log T+e+\log(2B/\beta))\cdot \Delta)$$
    where $\Delta=\log(e\vch\log T\log(2B/\beta))$ and $B=|\mathcal{F}_{\epsilon}|\cdot |\{I\subset [T]:|I|\le 3\vch+n\}|$, one can make the error probability~(\ref{eq:prooflem29}) upper bounded by $\beta/2$. We now observe that $\log|\mathcal{F}_{\epsilon}|\le O(\vch\log T)$ and $\log|\{I\subset [T]:|I|\le 3\vch+n\}|\le O(\vch\log T+\log(1/\beta))$. Putting everything together and simplifying the expression, we have w.p. $\ge 1-\beta$ over $\x^T\sim\mu^{\otimes T}$
    $$\sup_{h\in \mathcal{H}}\mathsf{err}(\erm,h,\x^T)\le O(\log T(\vch\log T+\log(1/\beta))\log(\vch\log T\log(1/\beta))).$$
    This completes the proof.
\end{proof}
The following lemma is the key element in our proof.
\begin{lemma}
\label{lem:decouple_error}
    For any random process $X^T\in \mathsf{U}_K^1$, we denote $V^{KT}$ and $V^{(k)}=V_{k_1},\cdots,V_{k_T}$ for $k\in [K]$ as in Proposition~\ref{prop:decouple}. We have for all $k\in [K]$ w.p. $\ge 1-\beta$ over $V^{KT}$
    \begin{align*}
        \sup_{h\in \mathcal{H}}\sum_{t=1}^T 1\{\erm(V^{k_{t}-1},&\{h(V_1),\cdots,h(V_{k_t-1})\})[V_{k_t}]\not=h(V_{k_t})\}\le \\
        &O((\vch\log^2 T+\log T\log(1/\beta))\log(\vch\log T\log(1/\beta))),
    \end{align*}
    where $\erm$ is any ERM rule and $O$ hides absolute constant independent of $\vch$, $T$ and $\log(1/\beta)$.
\end{lemma}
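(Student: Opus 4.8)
The plan is to reduce this lemma to Lemma~\ref{lem_erm}, applied on the conditionally i.i.d.\ subsequence $V^{(k)}$, after replacing a particular ERM rule by the (rule-independent) \emph{version-space disagreement count}. Fix $k\in[K]$ and write $z_t:=V_{k_t}$, so $z^T=V^{(k)}$, and for a realizing $h\in\mathcal{H}$ put $g_t:=\erm(V^{k_t-1},\{h(V_1),\dots,h(V_{k_t-1})\})$. Since $h\in\mathcal{H}$, the sample $(V^{k_t-1},h\text{-labels})$ is realizable, so the ERM rule attains zero error on it; hence $g_t$ agrees with $h$ on all of $V^{k_t-1}$, in particular on $z_1,\dots,z_{t-1}$ (the indices $k_1<\cdots<k_{t-1}$ all precede $k_t$). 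Thus $g_t$ lies in the version space $\mathrm{VS}_t^h:=\{h'\in\mathcal{H}:h'(z_s)=h(z_s)\ \forall s<t\}$, which gives the pointwise domination
\[
\sum_{t=1}^T 1\{g_t(z_t)\neq h(z_t)\}\ \le\ \sum_{t=1}^T 1\bigl\{z_t\in\mathrm{DIS}(\mathrm{VS}_t^h)\bigr\},
\]
where $\mathrm{DIS}(\mathcal{V})=\{x:\exists h_1,h_2\in\mathcal{V},\,h_1(x)\neq h_2(x)\}$. The right side is a functional of $z^T$ and $h$ alone --- it no longer sees the ERM rule nor the interspersed samples $V^{KT}\setminus V^{(k)}$.

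By Proposition~\ref{prop:decouple}, conditioning on $k_1$ and $V^{k_1-1}$ makes $z^T=V^{(k)}$ an i.i.d.\ sample of a determined distribution $\nu_k$. So it suffices to show that for any distribution $\mu$ over $\mathcal{X}$ and $\x^T\sim\mu^{\otimes T}$,
\[
\mathrm{Pr}\Bigl[\sup_{h\in\mathcal{H}}\sum_{t=1}^T 1\bigl\{\x_t\in\mathrm{DIS}(\mathrm{VS}(\x^{t-1},h))\bigr\}\ \ge\ O\bigl((\vch\log^2 T+\log T\log(1/\beta))\,\Delta\bigr)\Bigr]\le\beta,
\]
with $\mathrm{VS}(\x^{t-1},h)=\{h':h'(\x_s)=h(\x_s)\ \forall s<t\}$ and $\Delta=\log(\vch\log T\log(1/\beta))$; applying this with $\mu=\nu_k$ and averaging over the conditioning gives the statement for each $k$. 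This displayed bound is exactly what the proof of Lemma~\ref{lem_erm} establishes, since that proof only manipulates version spaces and $\epsilon$-covers and never the tie-breaking of a fixed ERM rule. Concretely, the approximation step of Lemma~\ref{lem_infinite2finite} yields $\mathrm{VS}(\x^{t-1},h)\subseteq\{h':\sum_{s<t}1\{h'(\x_s)\neq f(\x_s)\}\le 3\vch+n\}$ for a suitable $f\in\mathcal F_{\epsilon}$ that differs from $h$ on at most $3\vch+n$ of the $\x_t$'s (Lemma~\ref{lem_k_tail}), so $\sup_h$ of the disagreement count is bounded by $3\vch+n$ plus a $\sup$ over $f\in\mathcal F_{\epsilon}$ of the analogous $e$-version-space disagreement count with $e=3\vch+n$; and Lemma~\ref{lem_singleh}'s double-sample argument, run on the event ``some hypothesis within $e$ errors of $f$ on $\x^{t-1}$ is $\epsilon$-far from $f$'', combined with Lemma~\ref{lem:exp2hprop} applied to $I_t=1\{\x_t\in\mathrm{DIS}(e\text{-VS}(\x^{t-1}))\}\cdot 1\{G_{t-1}\}$ (with $G_{t-1}$ the corresponding good event), bounds each single-$f$ term. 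None of these steps references an ERM rule, so they apply verbatim with the ERM-error replaced by the version-space disagreement count.

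I expect the main obstacle to be exactly what forces the version-space detour. The natural ``ERM rule on $V^{(k)}$'' that reproduces $g_t$ must, through the interspersed samples $V^{k_t-1}\setminus\{z_1,\dots,z_{t-1}\}$, depend on the whole process $V^{KT}$; even after conditioning on $k_1,V^{k_1-1}$ it stays correlated with the ``new'' randomness of $V^{(k)}$, so Lemma~\ref{lem_erm}, stated for a fixed ERM rule, does not apply to it directly. Overcoming this is the content of the remark ``an ERM rule with additional realizable samples is still an ERM rule'': what survives the dependence is that every $g_t$ lands in the version space $\mathrm{VS}_t^h$, and since the bound we want is already of a $\sup$-over-version-space nature, discarding the specific rule costs nothing.
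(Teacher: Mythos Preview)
Your opening observation that $g_t\in\mathrm{VS}_t^h$ is exactly the paper's key insight (``an ERM rule with additional realizable samples is still an ERM rule''). The gap is the step that follows: dominating the count by the version-space disagreement count $\sum_t 1\{z_t\in\mathrm{DIS}(\mathrm{VS}_t^h)\}$ and then claiming the proof of Lemma~\ref{lem_erm} bounds this quantity. That claim is false, and the domination is unrecoverable.

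Here is why. Take $\mathcal{X}=[0,1]$, $\mu$ uniform, disjoint cells $A_1,\dots,A_N$ of mass $1/N$, and $\mathcal{H}=\{h_0\equiv 0\}\cup\{1_{A_i}:i\le N\}$; this class has $\vch=1$. With realizer $h=h_0$ and i.i.d.\ samples, $\mathrm{VS}_t^h$ contains every $1_{A_i}$ whose cell has not yet been hit, so $\mathrm{DIS}(\mathrm{VS}_t^h)$ is the union of all unhit cells. For $N\ge T$ the expected disagreement count is $\Theta(T)$, while Lemma~\ref{lem_erm} guarantees the ERM error count is $O(\log^2 T)$. The mechanism of the failure is precisely the step you describe: on the good event $G_{t-1}$ of Lemma~\ref{lem_singleh}, every individual $h'$ in the $e$-version-space is $\epsilon_t$-close to $f$, but $\mu(\mathrm{DIS}(e\text{-VS}))$ is a \emph{union} of those error sets and need not be $O(\epsilon_t)$. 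So your proposed indicator $I_t=1\{\x_t\in\mathrm{DIS}(e\text{-VS}(\x^{t-1}))\}\cdot 1\{G_{t-1}\}$ does not satisfy $\mathbb{E}[I_t\mid I^{t-1}]\le O(\epsilon_t)$, and Lemma~\ref{lem:exp2hprop} cannot be applied. The remark ``discarding the specific rule costs nothing'' conflates the sup over the realizer $h$ (which is what Lemma~\ref{lem_erm} handles) with a sup over $h'\in\mathrm{VS}_t^h$ taken \emph{after} seeing $z_t$ (which is what the disagreement count is).

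The paper avoids this by never passing to the disagreement count. It keeps the specific randomized choice $g_t$ and observes that the good event $G_{t-1}$ of Lemma~\ref{lem_singleh} depends only on $z^{t-1}$ and asserts that \emph{every} hypothesis with at most $e$ errors on $z^{t-1}$ is $\epsilon_t$-close to $f$. Hence, conditioning on all of $V^{k_t-1}$ (which fixes $g_t$ and contains $z^{t-1}$), on $G_{t-1}$ one has $\Pr[g_t(z_t)\neq f(z_t)\mid V^{k_t-1}]\le\epsilon_t$, because $z_t\mid V^{k_t-1}\sim\nu_k$ by Proposition~\ref{prop:decouple}. The martingale step of Lemma~\ref{lem_singleh} then goes through unchanged. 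In short: the detour you need is not ``replace $g_t$ by the worst version-space member'' but ``note that the proof of Lemma~\ref{lem_erm} never uses which version-space member was picked, only that the next sample is fresh given the pick.''
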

\begin{proof}
    By Proposition~\ref{prop:decouple}, we have $V^{(k)}$ is an $i.i.d.$ process conditioning on $V^{k_1-1}$. The key observation is that the ERM rule over $V^{KT}$ restricted on $V^{(k)}$ is still an (randomized) ERM rule, since we have assumed that the samples are \emph{realizable}. Conditioning on any $V^{k_1-1}$, the upper bound then follows by Lemma~\ref{lem_erm} since it only requires that the ERM rule at each time step $k_t$ is independent of $V_{k_t}$ and it does not depend on how the ERM functions are selected (even if the selections are randomized). To remove the conditioning on $V^{k_1-1}$, we use the following law of total probability: for any event $A\subset V^{KT}$ we have
    $$\mathrm{Pr}[A]=\mathbb{E}_{V_{k_1-1}}\left[\mathrm{Pr}[A\mid V^{k_1-1}]\right]\le \sup_{k_1,V_{k_1-1}}\mathrm{Pr}[A\mid V^{k_1-1}].$$
    The lemma now follows by taking $A$ to be the event in the statement of the lemma.
\end{proof}

We now ready to prove Theorem~\ref{universal_k}.

\begin{proof}[Proof of Theorem~\ref{universal_k}]
    We first observe that for any prediction rule the cumulative error on $X^T$ is less than the cumulative error on $V^{KT}$. Using Lemma~\ref{lem:decouple_error} and a union bound on all the $K$ subsequences $V^{(k)}$, we have for any ERM rule $\erm$, w.p. $\ge 1-\beta$  over $V^{KT}$, the cumulative error $$\sup_{h\in \mathcal{H}}\mathsf{err}(\erm,h,V^{KT})\le O(K(\vch\log^2 T+\log T\log(K/\beta))\cdot \Delta),$$where $\Delta=\log(\vch\log T\log(K/\beta))$. 
 Since $\mathsf{err}(\erm,h,X^T)\le \mathsf{err}(\erm,h,V^{KT})$, the sequential covering size then follows by Lemma~\ref{lem:error2cover}.
\end{proof}

Finally, we prove Corollary~\ref{cor:mixable}.
\begin{proof}[Proof of Corollary~\ref{cor:mixable}]
The upper bounds follow directly by Proposition~\ref{pro:cover2regret} and Theorem~\ref{universal_k} by taking $\beta=\frac{1}{T}$. We only need to prove the lower bound for log-loss. For the $\Omega(Kd)$ lower bound, we consider the same hard class $\mathcal{H}$ as in the lower bound proof of Theorem~\ref{thm_sqrt_bound} (in Appendix~\ref{sec:proofthm8}) and the \emph{Littlestone forests} $\tau_1,\cdots,\tau_d$ with pointers $I_b$s. We partition the time steps into $K$ epochs. At each epoch $k$, we use the same $\nu_k$ as in the lower bound proof of Theorem~\ref{thm_sqrt_bound} to generate samples. We move to the next epoch if all elements in the support of $\nu_k$ (which is a uniform distribution) have appeared at least once in the sample. We then change the pointers $I_b$ of each tree $\tau_b$ in the following manner: if the prediction made by the predictor on the first appearance of $(\mathsf{V}(I_b),b)$ is $\ge \frac{1}{2}$, we update $I_b$ to its left child, and update to right child if the prediction is $<\frac{1}{2}$. It is easy to verify that the expected regret is lower bounded by $\Omega(Kd)$, provided $Kd\ll T/\log d$ by the coupon collector problem. The lower bound for $\Omega(d\log(T/d))$ follows by standard results, see e.g.,~\cite[Theorem 24]{wu2022expected}.
\end{proof}

\section{Proof of Lemma~\ref{lem_smooth_2_select} and Theorem~\ref{thm_threhold_smooth}}
\label{sec:proofsmooth}
\begin{proof}[Proof of Lemma~\ref{lem_smooth_2_select}]
    The proof is an operational interpretation of the coupling argument as in Proposition~\ref{prop1}. Let $\mu$ be the reference measure that defines the $\sigma$-smooth process $X^T$ (with $\pmb{\nu}^T$ being the joint distribution of $X^T$). For any $m\in \mathbb{N}$, we denote $V^{mT}$ to be an $i.i.d.$ process with marginal $\mu$ and $I^{mT}$ to be an $i.i.d.$ process with marginal of uniform distribution over $[0,1]$ that is independent of $V^{mT}$. We now construct a coupling between $X^T$ and $V^{mT}, I^{mT}$. Suppose we have constructed $X^{t-1}$, we have that the conditional density $\nu_t=\nu_t(X_t\mid X^{t-1})$ is determined and we denote the density $v_t(\x)=\frac{\text{d}\nu_t}{\text{d}\mu}$. To construct $X_t$, we define the random set $S_t$ as in Proposition~\ref{prop1} in the following manner: for any $V_{m(t-1)+i}$ with $i\in [m]$, if $\sigma v_t(V_{m(t-1)+i})\ge I_{m(t-1)+i}$, we include $V_{m(t-1)+i}$ to $S_t$ (and do not include otherwise). If $S_t$ is not empty, we select the first element in $S_t$ as $X_t$, else we sample a fresh independent sample $X'_t\sim \nu_t$ and let $X_t=X'_t$. It is easy to verify that the constructed process is distributed w.r.t $\pmb{\nu}^T$. Note that the main difference with the proof of Proposition~\ref{prop1} is that we used the random variables $I^{mT}$ on the selection of $S_t$ instead of the Bernoulli($\sigma v_t(V_{m(t-1)+i})$) random variables (it is easy to check these two construction results in the same distribution of $S_t$).

    We now denote $R_t=\{I_{m(t-1)+1},\cdots,I_{mt}\}$, where $R_t$ is independent of $\{V_{m(t-1)+1},\cdots,V_{mt}\}$. The above coupling process can be expressed as $X_t=f(R_t,X'_t,\{V_{m(t-1)+1},\cdots,V_{mt}\})$, where $f$ is a \emph{deterministic} function, such that w.p. $\ge 1-Te^{-m\sigma}$ over $R^T, X'^T, V^{mT}$
    $$\forall t\in [T],~f(R_t, X'_t, \{V_{m(t-1)+1},\cdots,V_{mt}\})\in \{V_{m(t-1)+1},\cdots,V_{mt}\}.$$ Let $\tilde{f}$ be the truncated function of $f$ such that if $\forall i\in [m]$,~$\sigma v_t(V_{m(t-1)+i})<I_{m(t-1)+i}$ we set $$\tilde{f}(R_t, \{V_{m(t-1)+1},\cdots,V_{mt}\})=V_{m(t-1)+1}$$ and set $\tilde{f}(R_t, \{V_{m(t-1)+1},\cdots,V_{mt}\})=f(R_t, \{V_{m(t-1)+1},\cdots,V_{mt}\})$ otherwise. We write 
    $$\tilde{X}_t=\tilde{f}(R_t, \{V_{m(t-1)+1},\cdots,V_{mt}\}).$$
    It is easy to see that w.p. $\ge 1-Te^{-m\sigma}$ over the joint distribution $(X^T,\tilde{X}^T)$ that $\forall t\in [T]$, $X_t=\tilde{X}_t$. We now observe that conditioning on $R^T$, $\tilde{X}^T$ is an adversary $m$-selection process (since $I^{mT}$ is independent of $V^{mT}$ and $\tilde{X}^T$ is independent of $X'^T$). Therefore, we have by conditioning on $R^T$ that
    $$\mathrm{Pr}\left[\Tilde{X}^T\in A\right]=\mathbb{E}\left[\mathrm{Pr}\left[\tilde{X}^T\in A\mid R^T\right]\right]\ge 1-\beta.$$
    Using a union bound we have
    $$\mathrm{Pr}\left[X^T\in A\right]\ge 1-\beta-Te^{-m\sigma}.$$
    Taking $m=K$ and the assumption that $K\ge \frac{\log(T/\beta)}{\sigma}$, one finishes the proof.
\end{proof}

\begin{proof}[Proof of Theorem~\ref{thm_threhold_smooth}]
    Let $\tilde{X}^T$ be an adversary $K$-selection process with reference measure $\mu$ over $[0,1]$. We assume that for any $x\in [0,1]$, $\mu(\{x\})=0$. This assumption can be eliminated with a more tedious argument. However, we make the assumption here for clarity of presentation.
    
    We consider the following \emph{random} partitions of interval $[0,1]$. Initially the partition $\mathcal{I}_0$ consists of only the interval $[0,1]$. At each time step $t$, we denote $\mathcal{I}_{t-1}$ to be the current partition. Let $J_t\in \mathcal{I}_t$ be the interval for which $\tilde{X}_t\in J_t$, we split $J_t$ into two parts with values $< \tilde{X}_t$ and $\ge \tilde{X}_t$ respectively (if $\tilde{X}_t$ is the end point of $J_t$ we do not split and remain on the same $J_t$). We then replace  $J_t$ with the newly split intervals in $\mathcal{I}_t$ to form the partition $\mathcal{I}_{t+1}$. Note that, one may view this partitioning process as expanding a binary tree with each node labeled by the intervals in $\mathcal{I}_t$ and expanding a leaf when the corresponding interval is split into two parts. Such a tree can be viewed as the (compressed) \emph{realization} tree in~\cite[Theorem 13]{wu2022expected} if we view the $\mathcal{I}_t$ as subsets of $\mathcal{H}$. Our goal is to bound the \emph{maximum} depth of the tree.

    For any time step $t$, we denote $J_t=[a_t,b_t]$ to be the interval for which $\tilde{X}_t\in J_t$ and
    $$\lambda_t=\frac{\max\{\mu([a_t,X_t]),\mu([X_t,b_t])\}}{\mu([a_t,b_t])}$$
    to be the \emph{splitting ratio} of $J_t$. We claim that for any $\alpha>0$, 
    \begin{equation}
        \label{eq:proofthm183}
        \mathrm{Pr}\left[\lambda_t\ge 1-\alpha\mid \Tilde{X}^{t-1}\right]\le 2\alpha K.
    \end{equation}
     To see this, we denote $\mathcal{I}_{t-1}=\{J^1,\cdots,J^{n_t}\}$ to be the partition at time $t$ before receiving $\Tilde{X}_t$, where $J_t\in \mathcal{I}_{t-1}$ and $n_t\le t$. For any interval $J^i=[a_i,b_i]\in \mathcal{I}_{t-1}$, we define the $\alpha$-margin of $J^i$ w.r.t. $\mu$ to be the intervals $[a_i,c_i]$ and $[d_i,b_i]$ such that:
     \begin{align*}
     c_i&=\sup\{x\in [a_i,b_i]:\mu([a_i,x])\le \alpha\mu([a_i,b_i])\} \\
     d_i&=\inf\{x\in [a_i,b_i]:\mu([x,b_i])\le \alpha\mu(a_i,b_i)\}.
     \end{align*}
     Let $V_1^t,\cdots,V_K^t$ be the $i.i.d.$ samples of $\mu$ that is used to generate $\tilde{X}_t$ and $B_t(\alpha)$ be the event that there exists some $V_k^t$ and $J^i\in \mathcal{I}_{t-1}$ such that $V_k^t$ is in the $\alpha$-margin of interval $J^i$. Note that for any given $V_k^t$, the probability that $V_k^t$ is in the $\alpha$-margin of some interval in $\mathcal{I}_{t-1}$ is upper bounded by $2\alpha$. We have by independence of $V_k^t$s that
    $$\mathrm{Pr}[B_t(\alpha)]\le 1-(1-2\alpha)^K\le 2\alpha K.$$ By definition of adversary $K$-selection, we have the conditional event $\{\lambda_t\ge 1-\alpha\mid \Tilde{X}^{t-1}\}$ implying that the event $B_t(\alpha)$ happens, i.e., the Equation~(\ref{eq:proofthm183}) follows.

    Let $I_t = 1\{\lambda_t\ge 1-\alpha\}$. Then  $\mathbb{E}[I_t\mid I^{t-1}]\le 2\alpha K$ and $I'_t=I_t-\mathbb{E}[I_t\mid I^{t-1}]$ form martingale differences. Using Azuma inequality~\cite[Lemma A.7]{lugosi-book}  for all $\alpha>0$ 
    \begin{equation}
    \label{eq:proofthm18}
        \mathrm{Pr}\left[\sum_{t=1}^TI_t\ge 2\alpha KT+x\right]\le \mathrm{Pr}\left[\sum_{t=1}^TI_t'\ge x\right]\le e^{-2x^2/T}.
    \end{equation}
    Taking $x\ge \sqrt{T\log(2T/\beta)}$, one can make the above probability less than $\beta/(2T)$. This implies that for any $n\le T$ and $\alpha=\frac{n-\sqrt{T\log(2T/\beta)}}{4 KT}$, w.p. $\ge 1-\beta/(2T)$, for any $\lambda_{t_1},\cdots,\lambda_{t_n}$, we have
    \begin{equation}
        \label{eq:proofthm182}
        \sum_{i=1}^n(1-\lambda_{t_i})\ge \left(n-\left(2\alpha KT+\sqrt{T\log(2T/\beta)}\right)\right)\alpha\ge \frac{\left(n-\sqrt{T\log(2T/\beta)}\right)^2}{8 KT},
    \end{equation}
    where the first inequality follows by the fact that $I_t=1$ implies $1-\lambda_t\le \alpha$. Using a union bound on all $n\le T$, we have w.p. $\ge 1-\beta/2$ that for any $n\le T$ and $\lambda_{t_1},\cdots,\lambda_{t_n}$, we have:
    \begin{equation}
        \label{eq:proofthm184}
        \sum_{i=1}^n(1-\lambda_{t_i})\ge\frac{\left(n-\sqrt{T\log(2T/\beta)}\right)^2}{8 KT}.
    \end{equation}
    
    We now claim that w.p. $\ge 1-\beta/2$, for any interval $J_t$  either $\mu(J_t)\ge \frac{\beta}{2KT^2}$ or $J_t$ is in the final partition. To see this, we note that for any interval $J_t$ at time step $t$, once $\mu(J_t)\le \frac{\beta}{2KT^2}$, the probability it will be split at any following time step is upper bounded by (using the same argument for bounding the event $B_t(\alpha)$)
    $$T\left(1-\left(1-\frac{\beta}{2KT^2}\right)^K\right)\le \frac{\beta}{2T}.$$
    Using a union bound on all the $T$ intervals, w.p. $\ge 1-\beta/2$, all $J_t$s will either satisfy $\mu(J_t)\ge \frac{\beta}{2KT^2}$ or that $J_t$ is in the final partition. By union bound, w.p. $\ge 1-\beta$, this happens simultaneously with the event of Equation~(\ref{eq:proofthm184}). Conditioning on such a joint event, suppose now there exists a decreasing chain $J_{t_1}\supsetneq J_{t_2}\cdots\supsetneq J_{t_n}$, hence
    $$\mu(J_{t_n})\le \prod_{i=1}^n\lambda_{t_i}\le e^{-\sum_{i=1}^n(1-\lambda_{t_i})}.$$
 This implies that if 
 $$n>\sqrt{8KT\log(2KT^2/\beta)}+\sqrt{T\log(2T/\beta)},$$ 
 then 
 $\mu(J_{t_n})< \frac{\beta}{2KT^2}$ and therefore the chain must terminate. 

    Combining all of the above results, we conclude  w.p. $\ge 1-\beta$ that there is no decreasing chain of length greater than 
    $$\sqrt{8KT\log(2KT^2/\beta)}+\sqrt{T\log(2T/\beta)}+1$$ 
    i.e., the \emph{realization} tree has maximum depth upper bounded by $O(\sqrt{KT\log(2KT^2/\beta)})$. The bound on the stochastic sequential covering now follows by the same argument as in~\cite[Theorem 13]{wu2022expected}. 
    
    For the reader's convenience, we outline the argument in the following discussion. We construct a \emph{sequential} function set $\mathcal{G}$ with \emph{fixed} index set $\mathcal{W}$ of size $|\mathcal{W}|=2^{\lceil \sqrt{15KT\log(2KT^2/\beta)}\rceil}$, i.e., for each $w\in \mathcal{W}$, we construct a \emph{sequential} function $g_w:\mathcal{X}^*\rightarrow \{0,1\}$. To do so, we maintain for each node in the realization tree a \emph{subset} of $\mathcal{W}$. We initially associate $\mathcal{W}$ to the root. At each time step after receiving $\Tilde{X}_t$, for each node $v$ in the realization tree, if $v$ splits at the current step, we split the associated subset $\mathcal{W}_v\subset \mathcal{W}$ into two disjoint subsets of \emph{equal} size and associate them to the newly split nodes, respectively. For any $w\in \mathcal{W}_v$, we assign the value $g_w(\Tilde{X}^t)=0$ if $w$ is in the subset associated to the new left child and $g_w(\Tilde{X}^t)=1$ otherwise. If the node $v$ does not split, we assign $g_w(\Tilde{X}^t)$ to be the value on the agreed label (of the subset of $\mathcal{H}$ associate to $v$, see construction of realization tree at the beginning of the proof) on $\Tilde{X}_t$. The process is said to have failed, if at some step a node $v$ splits but the associated set $|\mathcal{W}_v|\le 1$. Clearly, if the process does not fail until time $T$, the constructed set $\mathcal{G}$ sequentially covers $\mathcal{H}$ on $\Tilde{X}^T$. Now, the key observation is that, from the discussion above, w.p. $\ge 1-\beta$ on $\Tilde{X}^T$, any node is constructed after at most $\sqrt{8KT\log(2KT^2/\beta)}+\sqrt{T\log(2T/\beta)}+1\le \lceil \sqrt{15KT\log(2KT^2/\beta)}\rceil$ splits. Since any split will decrease the associated subset of $\mathcal{W}$ by exactly $\frac{1}{2}$, we know that the process does not fail w.p. $\ge 1-\beta$ since $|\mathcal{W}|=2^{\lceil \sqrt{15KT\log(2KT^2/\beta)}\rceil}$. Therefore, the constructed set $\mathcal{G}$ stochastic sequential covers $\mathcal{H}$ at scale $0$ and confidence $\beta$ by Definition~\ref{def-gcover}.
\end{proof}

\section{Proof of Proposition~\ref{pro:cover2regret} and Lemma~\ref{lem:error2cover}}
\label{sec:proofpro9lem10}
We prove Proposition~\ref{pro:cover2regret} and Lemma~\ref{lem:error2cover} in this appendix. These results were already proved in~\citep{wu2022expected}; however, we reproduce the proof here for completeness. We only prove the binary valued case as needed in this paper and refer to the original paper for the full real valued case.

\begin{proof}[Proof of Proposition~\ref{pro:cover2regret}]
    Let $\ell:\hat{\mathcal{Y}}\times \mathcal{Y}\rightarrow \mathbb{R}$ be a convex loss function on the first argument and bounded by $1$, $\mathcal{H}\subset \{0,1\}^{\mathcal{X}}$ be a binary valued function class and $\mathcal{G}\subset \{0,1\}^{\mathcal{X}^*}$ be a stochastic sequential cover (see Definition~\ref{def-gcover}) of $\mathcal{H}$ w.r.t. process class $\mathsf{P}$ at scale $\alpha=0$ and $\beta=\frac{1}{T}$. Let $\phi_t$ be the prediction given by the EWA algorithm over $\mathcal{G}$ at time step $t$. For any $\x^T\in \mathcal{X}^T$, we denote by
    $$R_T(\mathcal{H},\x^T)=\sup_{y^T}\sum_{t=1}^T\ell(\phi_t(\x^t,y^{t-1}),y_t)-\inf_{h\in \mathcal{H}}\sum_{t=1}^T\ell(h(\x_t),y_t).$$
    Let $A$ be the event over $\x^T$ such that for any $h\in \mathcal{H}$ there exists $g\in \mathcal{G}$ we have $\forall t\in [T]$, $h(\x_t)=g(\x^t)$. By the definition of stochastic sequential covering, we have for all $\pmb{\nu}^T\in \mathsf{P}$, $\mathrm{Pr}_{\x^T\sim \pmb{\nu}^T}[A]\ge 1-\frac{1}{T}$. We now observe that
    \begin{align*}
        \Tilde{r}_T(\mathcal{H},\mathsf{P})&\le \sup_{\pmb{\nu}^T\in \mathsf{P}}\mathbb{E}_{\x^T\sim\pmb{\nu}^T}[1\{\x^T\in A\}R_T(\mathcal{H},\x^T)]+\mathbb{E}_{\x^T\sim\pmb{\nu}^T}[1\{\x^T\not\in A\}R_T(\mathcal{H},\x^T)]\\
        &\overset{(a)}{\le}\sup_{\pmb{\nu}^T\in \mathsf{P}}\mathbb{E}_{\x^T\sim \pmb{\nu}^T}[1\{\x^T\in A\}R_T(\mathcal{H},\x^T)] + 1\\
        &\overset{(b)}{\le} \mathrm{Pr}[A]\sqrt{T/2\log|\mathcal{G}|}+1\le O(\sqrt{T\log|\mathcal{G}|})
    \end{align*}
    where $(a)$ follows by the fact that $R_T(\mathcal{H},\x^T)$ is upper bounded by $T$ and $\mathbb{E}[1\{\x^T\not\in A\}]\le \frac{1}{T}$; $(b)$ follows by the fact that conditioning on  event $A$, $\mathcal{G}$ \emph{sequentially covers} $\mathcal{H}$ (as in~\cite{ben2009agnostic}) and therefore the regret bound follows by standard result as in~\cite[Theorem 2.2]{lugosi-book}.

    The proof of upper bound for bounded mixable losses follows similar path as above, by replacing the EWA algorithm with the Aggregation Algorithm (AA) as in~\cite[Chapter 3.5]{lugosi-book} and applying the regret bound in~\cite[Proposition 3.2]{lugosi-book}. The proof for logarithmic loss needs additional treatment since log-loss is unbounded. This can be handled by the Smooth truncated Bayesian Algorithm introduced recently in~\citep{wu2022precise}, and running the algorithm over $\mathcal{G}\cup \{u\}$ with $u$ being the constant function mapping to $\frac{1}{2}$ and with truncation parameter $\alpha$.
\end{proof}
\begin{proof}[Proof of Lemma~\ref{lem:error2cover}]
    For any $I\subset [T]$ with $|I|\le B(T,\beta)$, we \emph{recursively} define the following \emph{sequential function} $g_I$. Let $\Phi$ be a prediction rule that satisfies (\ref{eq:realizbalebound}) for $\mathcal{H}$ and $\mathsf{P}$. For any $t\le [T]$ and $\x^t\in \mathcal{X}^*$, we define
    $$g_I(\x^t)=\begin{cases}
    \Phi(\x^t,g_I(\x^1),\cdots,g_I(\x^{t-1})),\text{ if }t\not\in I,\\
    1-\Phi(\x^t,g_I(\x^1),\cdots,g_I(\x^{t-1})),\text{ if }t\in I
    \end{cases},
    $$where $g_I(\x^0)$ is understood as empty (which is not required by definition of prediction rule). Now, for any $\pmb{\nu}^T\in\mathsf{P}$, we have by (\ref{eq:realizbalebound}) that w.p. $\ge 1-\beta$ over $\x^T\sim \pmb{\nu}^T$, $\Phi$ makes at most $B(T,\beta)$ cumulative errors for all $h\in \mathcal{H}$. Taking any $\x^T$ in such event and $h\in \mathcal{H}$, we have $g_I(\x^t)$ sequentially covers $h$ by our construction above if $I\subset [T]$ is the positions for which $h(\x_t)\not=\Phi(\x^t,h(\x_1),\cdots,h(\x_{t-1}))$ where $|I|\le B(T,\beta)$. Clearly, the class $\mathcal{G}$ consisting of all such functions $g_I$ is the desired stochastic sequential cover of $\mathcal{H}$ w.r.t. $\mathsf{P}$ at scale $\alpha=0$ and confidence $\beta$. The upper bound on $|\mathcal{G}|$ follows easily by counting the number of $I$s, see~\cite[Lemma 12]{ben2009agnostic}.
\end{proof}

\section{Real valued function class via embedding}
\label{sec:real}
We briefly discuss how our results can be extended to real valued functions via embedding of $\mathcal{H}$ through stochastic sequential covering. This will be mostly interesting for the log-loss, see e.g.~\citep{bhatt2021sequential}. To do so, we consider the class as in~\citep{bhatt2021sequential}
$$\mathcal{F}=\{\theta_1 h+\theta_2 (1-h):\theta_1,\theta_2\in [0,1]\text{ and }h\in \mathcal{H}\},$$
where $\mathcal{H}\subset\{0,1\}^{\mathcal{X}}$ is a class of finite VC-dimension. Suppose now we have a stochastic sequential covering set $\mathcal{G}$ of $\mathcal{H}$ w.r.t. some random process class $\mathsf{P}$ at scale $0$ and confidence $\beta=\frac{1}{T}$. We can then choose a minimal discretization $J\subset [0,1]$ such that for any $a\in [0,1]$ there exists $b\in J$ such that $|a-b|\le \frac{1}{T}$. Clearly, we have $|J|\le T$.  Now, for any $a_1,a_2\in J$, we construct the class $\mathcal{F}_{a_1,a_2}=\{a_1 h+a_2(1-h):h\in \mathcal{H}\}$. We have $\mathcal{F}'=\bigcup_{a_1,a_2\in J}\mathcal{F}_{a_1,a_2}$ \emph{uniformly} $\frac{1}{T}$-covers $\mathcal{F}$ (i.e., for all $\x\in\mathcal{X}$ and $f\in \mathcal{F}$ there exists $f'\in \mathcal{F}'$ such that $|f(\x)-f'(\x)|\le \frac{1}{T}$) and all of the $\mathcal{F}_{a_1,a_2}$ are \emph{isomorphic} to $\mathcal{H}$. Therefore, we can construct a sequential $0$-covering set $\mathcal{G}_{a_1,a_2}$ for $\mathcal{F}_{a_1,a_2}$ using the sequential $0$-covering set $\mathcal{G}$ for $\mathcal{H}$ by setting $g_{a_1,a_2}=a_1g+a_2(1-g)$. Let $\mathcal{G}'=\bigcup_{a_1,a_2\in J}\mathcal{G}_{a_1,a_2}$, we have $\mathcal{G}'$ sequentially $\frac{1}{T}$-covers $\mathcal{F}$ w.r.t. $\mathsf{P}$ at confidence $\frac{1}{T}$. Since there are at most $T^2$ such pairs $(a_1,a_2)$, we have $|\mathcal{G}'|\le T^2|\mathcal{G}|$. Using~\cite[Theorem 4]{wu2022expected}, we arrive at the following expected worst case regret bound under log-loss
$$\Tilde{r}_T(\mathcal{F},\mathsf{P})\le \log|\mathcal{G}|+2\log T+O(1).$$
Specializing to the class $\mathsf{U}_K^1$, and using Theorem~\ref{universal_k}, we arrive at the following bound under log-loss
$$\Tilde{r}_T(\mathcal{F},\mathsf{U}_K^1)\le O(K\cdot \vch \log^3 T\log(\vch\log(KT))).$$
We can also construct more complicated classes $\mathcal{F}$ in a similar fashion as above. More generally we can study the case when $\mathcal{F}$ has bounded scale sensitive VC-dimension (i.e., the fat-shattering dimension); however, this is out of the scope of this paper and we refer to the discussions in~\citep{block2022smoothed} for the smooth adversary processes with \emph{known} reference measure and in~\citep{wu2022expected} for the universal (unknown) $i.i.d.$ processes.

\end{document}